\newcommand{\xmark}{\ding{55}}%
\newtheorem{theorem}{Theorem}
\newtheorem{lemma}{Lemma}[section]
\newtheorem{corollary}[lemma]{Corollary}
\newtheorem{proposition}[theorem]{Proposition}
\newtheorem{definition}{Definition}
\newtheorem{remark}[lemma]{Remark}
\newcommand{\dt}{\Delta}
\newcommand{\methodabbrv}{S4}
\newlength{\defbaselineskip}
\title{Efficiently Modeling Long Sequences with Structured State Spaces}
\author[]{Albert Gu}
\author[]{Karan Goel}
\author[]{Christopher R{\'e}}
\affil[]{Department of Computer Science, Stanford University}
\affil[]{{\texttt{\{albertgu,krng\}@stanford.edu}, \texttt{chrismre@cs.stanford.edu}}}
\date{}
\begin{document}

\maketitle

\begin{abstract}
  A central goal of sequence modeling is designing a single principled model that can address sequence data across a range of modalities and tasks, particularly on long-range dependencies.
  Although conventional models including RNNs, CNNs, and Transformers have specialized variants for capturing long dependencies, they still struggle to scale to very long sequences of $10000$ or more steps.
  A promising recent approach proposed modeling sequences by simulating the fundamental state space model (SSM) \( x'(t) = Ax(t) + Bu(t), y(t) = Cx(t) + Du(t) \), and showed that for appropriate choices of the state matrix \( A \), this system could handle long-range dependencies mathematically and empirically.
  However, this method has prohibitive computation and memory requirements, rendering it infeasible as a general sequence modeling solution.
  We propose the Structured State Space sequence model (\methodabbrv{}) based on a new parameterization for the SSM, and show that it can be computed much more efficiently than prior approaches while preserving their theoretical strengths.
  Our technique involves conditioning \( A \) with a low-rank correction, allowing it to be diagonalized stably and reducing the SSM to the well-studied computation of a Cauchy kernel.
  \methodabbrv{} achieves strong empirical results across a diverse range of established benchmarks, including (i) 91\% accuracy on sequential CIFAR-10 with no data augmentation or auxiliary losses, on par with a larger 2-D ResNet, (ii) substantially closing the gap to Transformers on image and language modeling tasks, while performing generation $60\times$ faster (iii) SoTA on every task from the Long Range Arena benchmark, including solving the challenging Path-X task of length 16k that all prior work fails on, while being as efficient as all competitors.\footnote{Code is publicly available at \url{https://github.com/HazyResearch/state-spaces}.}
\end{abstract}

\section{Introduction}
\label{sec:intro}

A central problem in sequence modeling is efficiently handling data that contains long-range dependencies (LRDs).
Real-world time-series data often requires reasoning over tens of thousands of time steps, while few sequence models address even thousands of time steps.
For instance, results from the long-range arena (LRA) benchmark~\citep{tay2021long} highlight that sequence models today perform poorly on LRD tasks,
including one (Path-X) where no model performs better than random guessing.

\begin{figure}[!t]
    \centering
    \includegraphics[width=\linewidth]{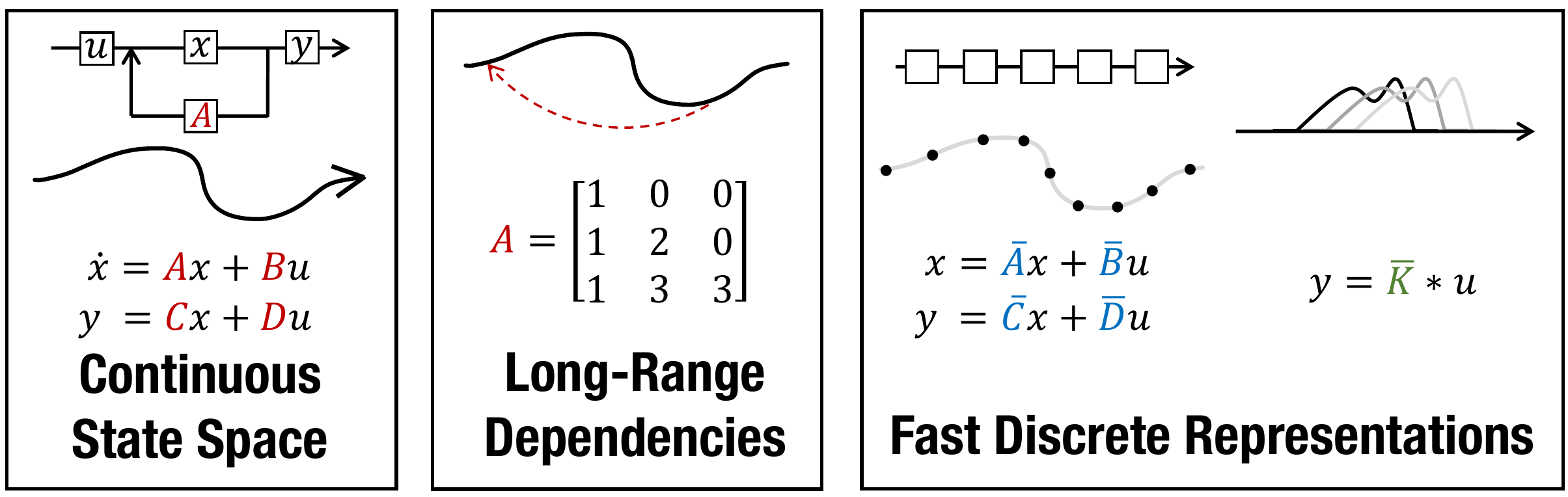}
    \caption{
      (\textbf{Left}) State Space Models (SSM) parameterized by matrices \( \bm{A}, \bm{B}, \bm{C}, \bm{D} \) map an input signal \( u(t) \) to output \( y(t) \) through a latent state \( x(t) \).
      (\textbf{Center}) Recent theory on continuous-time memorization derives special \( \bm{A} \) matrices that allow SSMs to capture LRDs mathematically and empirically.
      (\textbf{Right}) SSMs can be computed either as a recurrence (left) or convolution (right).
      However, materializing these conceptual views requires utilizing different representations of its parameters (\textcolor{BrickRed}{red}, \textcolor{RoyalBlue}{blue}, \textcolor{ForestGreen}{green}) which are very expensive to compute.
      \methodabbrv{} introduces a novel parameterization that efficiently swaps between these representations, allowing it to handle a wide range of tasks, be efficient at both training and inference, and excel at long sequences.
    }
\label{fig:properties}
\end{figure}

Since LRDs are perhaps {the} foremost challenge for sequence models, all standard model families such as continuous-time models (CTMs), RNNs, CNNs, and Transformers include many specialized variants designed to address them.
Modern examples include orthogonal and Lipschitz RNNs \citep{arjovsky2016unitary,erichson2021lipschitz} to combat vanishing gradients, dilated convolutions to increase context size \citep{bai2018empirical,oord2016wavenet}, and an increasingly vast family of efficient Transformers that reduce the quadratic dependence on sequence length \citep{katharopoulos2020transformers,choromanski2020rethinking}.
Despite being designed for LRDs, these solutions still perform poorly on challenging benchmarks such as LRA~\citep{tay2021long} or raw audio classification~\citep{gu2021lssl}.

An alternative approach to LRDs was recently introduced based on the \textbf{state space model (SSM)} (\cref{fig:properties}).
SSMs are a foundational scientific model used in fields such as control theory, computational neuroscience, and many more, but have not been applicable to deep learning for concrete theoretical reasons.
In particular, \citet{gu2021lssl} showed that deep SSMs actually struggle even on simple tasks,
but can perform exceptionally well when equipped with special state matrices \( \bm{A} \) recently derived to solve a problem of continuous-time memorization \citep{voelker2019legendre,gu2020hippo}.
Their Linear State Space Layer (LSSL) conceptually unifies the strengths of CTM, RNN and CNN models, and provides a proof of concept that deep SSMs can address LRDs in principle.

Unfortunately, the LSSL is infeasible to use in practice because of prohibitive computation and memory requirements induced by the state representation.
For state dimension \( N \) and sequence length \( L \), computing the latent state requires \( O(N^2L) \) operations and \( O(NL) \) space -- compared to a \( \Omega(L+N) \) lower bound for both.
Thus for reasonably sized models (e.g. \( N=256 \) in \citet{gu2021lssl}), the LSSL uses orders of magnitude more memory than comparably-sized RNNs or CNNs.
Although theoretically efficient algorithms for the LSSL were proposed, we show that these are numerically unstable.
In particular, the special \( \bm{A} \) matrix is highly non-normal in the linear algebraic sense, which prevents the application of conventional algorithmic techniques. %
Consequently, although the LSSL showed that SSMs have strong performance, they are currently computationally impractical as a general sequence modeling solution.

In this work, we introduce the \textbf{Structured State Space (\methodabbrv)} sequence model based on the SSM that solves the critical computational bottleneck in previous work.
Technically, \methodabbrv{} reparameterizes the structured state matrices \( \bm{A} \) appearing in \citet{voelker2019legendre,gu2020hippo} by decomposing them as the sum of a low-rank and normal term.
Additionally, instead of expanding the standard SSM in coefficient space, we compute its truncated generating function in frequency space, which can be simplified into a multipole-like evaluation.
Combining these two ideas, we show that the low-rank term can be corrected by the Woodbury identity while the normal term can be diagonalized stably,
ultimately reducing to a well-studied and theoretically stable Cauchy kernel \citep{pan2001structured,pan2017fast}.
This results in \( \tilde{O}(N+L) \) computation and \( O(N+L) \) memory usage, which is essentially tight for sequence models.
Compared to the LSSL, \methodabbrv{} is up to $30\times$ faster with $400\times$ less memory usage, while exceeding the LSSL's performance empirically.

Empirically, \methodabbrv{} significantly advances the state-of-the-art for LRD.
On the LRA benchmark for efficient sequence models, \methodabbrv{} is as fast as all baselines while outperforming them by $20+$ points on average.
\methodabbrv{} is the first model to solve the difficult LRA Path-X task (length-$16384$), achieving \textbf{88\% accuracy compared to 50\% random guessing} for all prior work.
On speech classification with length-$16000$ sequences, \methodabbrv{} halves the test error ($1.7\%$) of specialized Speech CNNs --
by contrast, all RNN and Transformer baselines fail to learn ($\ge 70\%$ error).

\paragraph{Towards a general-purpose sequence model.}
Beyond LRD, a broad goal of machine learning is to develop a single model that can be used across a wide range of problems.
Models today are typically specialized to solve problems from a particular domain (e.g. images, audio, text, time-series), and enable a narrow range of capabilities (e.g. efficient training, fast generation, handling irregularly sampled data).
This specialization is typically expressed via domain-specific preprocessing, inductive biases, and architectures. %
Sequence models provide a general framework for solving many of these problems with reduced specialization
-- e.g. Vision Transformers for image classification with less 2D information \citep{dosovitskiy2020image}. %
However, most models such as Transformers generally still require substantial specialization per task to achieve high performance.

Deep SSMs in particular have conceptual strengths that suggest they may be promising as a general sequence modeling solution.
These strengths include a principled approach to handling LRDs, as well as the ability to move between continuous-time, convolutional, and recurrent model representations, each with distinct capabilities (\cref{fig:properties}). %
Our technical contributions enable SSMs to be applied successfully to a varied set of benchmarks with minimal modification: %
\begin{itemize}[leftmargin=*]
    \item {\it Large-scale generative modeling.}
        On CIFAR-10 density estimation, \methodabbrv{} is competitive with the best autoregressive models ($2.85$ bits per dim). On WikiText-103 language modeling, \methodabbrv{} substantially closes the gap to Transformers (within $0.8$ perplexity), setting SoTA for attention-free models.
    \item {\it Fast autoregressive generation.}
        Like RNNs, \methodabbrv{} can use its latent state to perform $60\times$ faster pixel/token generation than standard autoregressive models on CIFAR-10 and WikiText-103.
    \item {\it Sampling resolution change.}
        Like specialized CTMs, \methodabbrv{} can adapt to changes in time-series sampling frequency without retraining, e.g.\ at $0.5\times$ frequency on speech classification.
    \item {\it Learning with weaker inductive biases.}
        With no architectural changes, \methodabbrv{} surpasses Speech CNNs on speech classification, outperforms the specialized Informer model on time-series forecasting problems,
        and matches a 2-D ResNet on sequential CIFAR with over $90\%$ accuracy.

\end{itemize}

\section{Background: State Spaces}
\label{sec:background}

\cref{sec:ss-continuous,sec:ss-memory,sec:ss-convolution,sec:ss-recurrent} describe the four properties of SSMs in \cref{fig:properties}:
the classic continuous-time representation, addressing LRDs with the HiPPO framework, the discrete-time recurrent representation, and the parallelizable convolution representation.
In particular, \cref{sec:ss-convolution} introduces the SSM convolution kernel $\bm{\overline{K}}$,
which is
the focus of our theoretical contributions in \cref{sec:s4}.

\subsection{State Space Models: A Continuous-time Latent State Model}
\label{sec:ss-continuous}

The state space model is defined by the simple equation \eqref{eq:1}.
It maps a 1-D input signal $u(t)$ %
to an \( N \)-D latent state \( x(t) \)
before projecting to a 1-D output signal \( y(t) \).
\begin{equation}
  \label{eq:1}
  \begin{aligned}
    x'(t) &= \bm{A}x(t) + \bm{B}u(t) \\
    y(t) &= \bm{C}x(t) + \bm{D}u(t)
  \end{aligned}
\end{equation}
SSMs are broadly used in many scientific disciplines
and related to latent state models such as Hidden Markov Models (HMM).
Our goal is to simply use the SSM as a black-box representation in a deep sequence model,
where $\bm{A}, \bm{B}, \bm{C}, \bm{D}$ are parameters learned by gradient descent.
For the remainder of this paper, we will omit the parameter \( \bm{D} \) for exposition (or equivalently, assume \( \bm{D} = 0 \)) because the term \( \bm{D}u \) can be viewed as a skip connection and is easy to compute.

\subsection{Addressing Long-Range Dependencies with HiPPO}
\label{sec:ss-memory}

Prior work found that the basic SSM \eqref{eq:1} actually performs very poorly in practice.
Intuitively, one explanation is that linear first-order ODEs solve to an exponential function,
and thus may suffer from gradients scaling exponentially in the sequence length (i.e., the vanishing/exploding gradients problem~\citep{pascanu2013difficulty}).
To address this problem, the LSSL leveraged the HiPPO theory of continuous-time memorization \citep{gu2020hippo}.
HiPPO specifies a class of certain matrices \( \bm{A} \in \mathbbm{R}^{N \times N} \) that when incorporated into \eqref{eq:1}, allows the state \( x(t) \) to memorize the history of the input \( u(t) \).
The most important matrix in this class is defined by equation \eqref{eq:hippo},
which we will call the HiPPO matrix. %
For example, the LSSL found that simply modifying an SSM from a random matrix \( \bm{A} \) to equation \eqref{eq:hippo} improved its performance on the sequential MNIST benchmark from \( 60\% \) to \( 98\% \).
\begin{equation}
  \label{eq:hippo}
  (\text{\textbf{HiPPO Matrix}})
  \qquad
  \bm{A}_{nk}
  =
  -
  \begin{cases}
    (2n+1)^{1/2}(2k+1)^{1/2} & \mbox{if } n > k \\
    n+1 & \mbox{if } n = k \\
    0 & \mbox{if } n < k
  \end{cases}
  .
\end{equation}

\subsection{Discrete-time SSM: The Recurrent Representation}
\label{sec:ss-recurrent}

To be applied on a discrete input sequence \( (u_0, u_1, \dots) \) instead of continuous function \( u(t) \),
\eqref{eq:1} must be discretized by a \textbf{step size} \( \dt \) that represents the resolution of the input.
Conceptually, the inputs \( u_k \) can be viewed as sampling an implicit underlying continuous signal \( u(t) \), where \( u_k = u(k \dt) \).

To discretize the continuous-time SSM, we follow prior work in using the bilinear method~\citep{tustin1947method}, which converts the state matrix \( \bm{A} \) into an approximation \( \bm{\overline{A}} \) .
The discrete SSM is
\begin{equation}
  \label{eq:2}
\begin{aligned}
  x_{k} &= \bm{\overline{A}} x_{k-1} + \bm{\overline{B}} u_k &
  \bm{\overline{A}} &= (\bm{I} - \dt/2 \cdot \bm{A})^{-1}(\bm{I} + \dt/2 \cdot \bm{A}) &
  \\
  y_k &= \bm{\overline{C}} x_k &
  \bm{\overline{B}} &= (\bm{I} - \dt/2 \cdot \bm{A})^{-1} \dt \bm{B} &
  \bm{\overline{C}} &= \bm{C}
  .
\end{aligned}
\end{equation}
Equation \eqref{eq:2} is now a \emph{sequence-to-sequence} map \( u_k \mapsto y_k \) instead of function-to-function.
Moreover the state equation is now a recurrence in \( x_k \),
allowing the discrete SSM to be computed like an RNN.
Concretely, \( x_k \in \mathbbm{R}^N \) can be viewed as a \emph{hidden state} with transition matrix \( \bm{\overline{A}} \).

Notationally, throughout this paper we use \( \bm{\overline{A}}, \bm{\overline{B}}, \dots \) to denote discretized SSM matrices defined by \eqref{eq:2}. %
Note that these matrices are a function of both \( \bm{A} \) as well as a step size \( \dt \); we suppress this dependence for notational convenience when it is clear.

\subsection{Training SSMs: The Convolutional Representation}
\label{sec:ss-convolution}

The recurrent SSM \eqref{eq:2} is not practical for training on modern hardware due to its sequentiality.
Instead,
there is a well-known connection between linear time-invariant (LTI) SSMs such as \eqref{eq:1} and continuous convolutions.
Correspondingly, \eqref{eq:2} can actually be written as a discrete convolution.

For simplicity let the initial state be \( x_{-1} = 0 \).
Then unrolling \eqref{eq:2} explicitly yields %
\begin{align*}
  x_0 &= \bm{\overline{B}} u_0 &
  x_1 &= \bm{\overline{A}} \bm{\overline{B}} u_0 + \bm{\overline{B}} u_1 &
  x_2 &= \bm{\overline{A}}^2 \bm{\overline{B}} u_0 + \bm{\overline{A}} \bm{\overline{B}} u_1 + \bm{\overline{B}} u_2 & \dots
  \\
  y_0 &= \bm{\overline{C}} \bm{\overline{B}} u_0 &
  y_1 &= \bm{\overline{C}} \bm{\overline{A}} \bm{\overline{B}} u_0 + \bm{\overline{C}} \bm{\overline{B}} u_1 &
  y_2 &= \bm{\overline{C}} \bm{\overline{A}}^2 \bm{\overline{B}} u_0 + \bm{\overline{C}} \bm{\overline{A}} \bm{\overline{B}} u_1 + \bm{\overline{C}} \bm{\overline{B}} u_2
  & \dots
\end{align*}
This can be vectorized into a convolution \eqref{eq:convolution} with an explicit formula for the convolution kernel \eqref{eq:krylov}.
\begin{equation}
  \label{eq:convolution}
  \begin{split}
    y_k &= \bm{\overline{C}} \bm{\overline{A}}^k \bm{\overline{B}} u_0 + \bm{\overline{C}} \bm{\overline{A}}^{k-1} \bm{\overline{B}} u_1 + \dots + \bm{\overline{C}} \bm{\overline{A}} \bm{\overline{B}} u_{k-1} + \bm{\overline{C}}\bm{\overline{B}} u_k
    \\
    y &= \bm{\overline{K}} \ast u %
    .
  \end{split}
\end{equation}
\begin{equation}%
  \label{eq:krylov}
  \bm{\overline{K}} \in \mathbb{R}^L :=
  \mathcal{K}_L(\bm{\overline{A}}, \bm{\overline{B}}, \bm{\overline{C}}) := \left(\bm{\overline{C}} \bm{\overline{A}}^i \bm{\overline{B}}\right)_{i \in [L]} = (\bm{\overline{C}}\bm{\overline{B}}, \bm{\overline{C}}\bm{\overline{A}}\bm{\overline{B}}, \dots, \bm{\overline{C}}\bm{\overline{A}}^{L-1}\bm{\overline{B}})
  .
\end{equation}
\normalsize
In other words, equation \eqref{eq:convolution} is a single (non-circular) convolution and can be computed very efficiently with FFTs, \emph{provided} that \( \bm{\overline{K}} \) is known. %
However, computing \( \bm{\overline{K}} \) in \eqref{eq:krylov} is non-trivial and is the focus of our technical contributions in \cref{sec:s4}.
We call \( \bm{\overline{K}} \) the \textbf{SSM convolution kernel} or filter.

\section{Method: Structured State Spaces (\methodabbrv)}
\label{sec:s4}

Our technical results focus on developing the \methodabbrv{} parameterization and showing how to efficiently compute all views of the SSM (\cref{sec:background}):
the continuous representation \( (\bm{A}, \bm{B}, \bm{C}) \) \eqref{eq:1},
the recurrent representation \( (\bm{\overline{A}}, \bm{\overline{B}}, \bm{\overline{C}}) \) \eqref{eq:2},
and the convolutional representation \( \bm{\overline{K}} \) \eqref{eq:convolution}.

\cref{sec:s4-motivation} motivates our approach, which is based on the linear algebraic concepts of conjugation and diagonalization, and discusses why the naive application of this approach does not work.
\cref{sec:s4-overview} gives an overview of the key technical components of our approach and formally defines the \methodabbrv{} parameterization.
\cref{sec:s4-efficiency} sketches the main results, showing that \methodabbrv{} is asymptotically efficient (up to log factors) for sequence models.
Proofs are in \cref{sec:lssl-instability,sec:s4-details}.

\subsection{Motivation: Diagonalization}
\label{sec:s4-motivation}

The fundamental bottleneck in computing the discrete-time SSM \eqref{eq:2} is that it involves repeated matrix multiplication by \( \bm{\overline{A}} \).
For example, computing \eqref{eq:krylov} naively as in the LSSL involves \( L \) successive multiplications by \( \bm{\overline{A}} \), requiring \( O(N^2 L) \) operations and \( O(NL) \) space.

To overcome this bottleneck, we use a structural result that allows us to simplify SSMs.
\begin{lemma}%
  \label{lmm:conjugation}
  Conjugation is an equivalence relation on SSMs \small \( (\bm{A}, \bm{B}, \bm{C}) \sim (\bm{V}^{-1} \bm{A} \bm{V}, \bm{V}^{-1}\bm{B}, \bm{C}\bm{V}) \).
\end{lemma}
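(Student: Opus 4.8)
The plan is to unpack the two claims this statement bundles together: that the indicated transformation preserves the SSM's behavior (so writing $\sim$ is justified), and that the resulting relation satisfies the three equivalence-relation axioms. For behavior preservation I would perform the change of state variable $\tilde{x}(t) = \bm{V}^{-1} x(t)$ for an invertible $\bm{V}$ and substitute into \eqref{eq:1}: differentiating yields $\tilde{x}'(t) = \bm{V}^{-1} \bm{A} x(t) + \bm{V}^{-1}\bm{B} u(t) = (\bm{V}^{-1}\bm{A}\bm{V})\tilde{x}(t) + (\bm{V}^{-1}\bm{B})u(t)$ and $y(t) = \bm{C} x(t) = (\bm{C}\bm{V})\tilde{x}(t)$, so the conjugated triple realizes the same map $u \mapsto y$ with the latent state merely re-coordinatized.

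Equivalently --- and this is the form relevant to \cref{sec:ss-convolution} --- I would verify invariance of the convolution kernel directly from \eqref{eq:krylov}: the telescoping identity $(\bm{V}^{-1}\bm{A}\bm{V})^k = \bm{V}^{-1}\bm{A}^k\bm{V}$ gives $(\bm{C}\bm{V})(\bm{V}^{-1}\bm{A}\bm{V})^k(\bm{V}^{-1}\bm{B}) = \bm{C}\bm{A}^k\bm{B}$, and since discretization \eqref{eq:2} commutes with conjugation (it is built from $\bm{A}$, $\bm{I}$, sums, products, and inverses, all of which respect conjugation by a fixed $\bm{V}$), the discrete kernel $\bm{\overline{K}}$ is invariant as well. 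For the relation axioms: reflexivity is witnessed by $\bm{V} = \bm{I}$; symmetry holds because transforming by $\bm{V}$ is undone by transforming by $\bm{V}^{-1}$, which is again invertible (one checks $\bm{V}(\bm{V}^{-1}\bm{A}\bm{V})\bm{V}^{-1} = \bm{A}$, $\bm{V}(\bm{V}^{-1}\bm{B}) = \bm{B}$, $(\bm{C}\bm{V})\bm{V}^{-1} = \bm{C}$); and transitivity holds because applying $\bm{V}_1$ and then $\bm{V}_2$ equals applying $\bm{V}_1\bm{V}_2$, using $(\bm{V}_1\bm{V}_2)^{-1} = \bm{V}_2^{-1}\bm{V}_1^{-1}$.

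There is no substantive obstacle; the content is simply that similarity of the state matrix lifts to an equivalence on full SSM triples. The only points deserving a line of care are that the definition of $\sim$ is symmetric despite its asymmetric appearance (handled by the $\bm{V}^{-1}$ argument above), and that ``equivalence'' is intended in the strong sense that conjugate SSMs compute the same sequence-to-sequence map, which is why I would include the kernel-invariance check rather than stopping at the abstract axioms.
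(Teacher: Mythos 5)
Your proposal is correct and its core argument---the change of state variable $x = \bm{V}\tilde{x}$ showing both systems realize the same map $u \mapsto y$---is exactly the paper's proof. The additional material you include (explicit kernel invariance via $(\bm{V}^{-1}\bm{A}\bm{V})^k = \bm{V}^{-1}\bm{A}^k\bm{V}$ and verification of the reflexivity/symmetry/transitivity axioms) is sound and slightly more thorough than the paper, which leaves those points implicit, but it does not constitute a different approach.
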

\begin{proof}%
  Write out the two SSMs with state denoted by \( x \) and \( \tilde{x} \) respectively:
  \begin{align*}
    x' &= \bm{A}x + \bm{B}u & \tilde{x}' &= \bm{V}^{-1}\bm{A}\bm{V}\tilde{x} + \bm{V}^{-1}\bm{B}u \\
    y &= \bm{C}x & y &= \bm{C}\bm{V}\tilde{x}
  \end{align*}
  After multiplying the right side SSM by \( \bm{V} \), the two SSMs become identical with \( x = \bm{V}\tilde{x} \).
  Therefore these compute the exact same operator \( u \mapsto y \), but with a change of basis by \( \bm{V} \) in the state \( x \).
\end{proof}

\cref{lmm:conjugation} motivates putting \( \bm{A} \) into a canonical form by conjugation\footnote{Note that although we ultimately require \( \bm{\overline{A}} \), conjugation commutes with discretization so we refer to \( \bm{A} \).}, which is ideally more structured and allows faster computation.
For example, if \( \bm{A} \) were diagonal, the resulting computations become much more tractable.
In particular, the desired \( \bm{\overline{K}} \) (equation \eqref{eq:convolution}) would be a \textbf{Vandermonde product} which theoretically only needs \( O((N+L)\log^2(N+L)) \) arithmetic operations \citep{pan2001structured}.

Unfortunately, the naive application of diagonalization does not work due to numerical issues.
Werive the explicit diagonalization for the HiPPO matrix \eqref{eq:hippo} and show it has entries exponentially large in the state size \( N \), rendering the diagonalization numerically infeasible (e.g. \( \bm{C}\bm{V} \) in \cref{lmm:conjugation} would not be computable).
We note that \citet{gu2021lssl} proposed a different (unimplemented) algorithm to compute \( \bm{\overline{K}} \) faster than the naive algorithm.
In \cref{sec:lssl-instability}, we prove that it is also numerically unstable for related reasons.
\begin{lemma}%
  \label{lmm:hippo-diagonalization}
  The HiPPO matrix \( \bm{A} \) in equation \eqref{eq:hippo} is diagonalized by the matrix \( \bm{V}_{ij} = \binom{i+j}{i-j} \).
  In particular, \( \bm{V}_{3i,i} = \binom{4i}{2i} \approx 2^{4i} \).
  Therefore \( \bm{V} \) has entries of magnitude up to \( 2^{4N / 3} \).
\end{lemma}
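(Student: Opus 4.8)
The plan is to verify the diagonalization directly, reducing it to a single binomial identity, and then read off the entry magnitudes from Stirling's approximation. The matrix in \eqref{eq:hippo} is lower triangular, so its eigenvalues are exactly its diagonal entries $-(n+1)$ for $n \in \{0,\dots,N-1\}$; these are distinct, hence $\bm{A}$ is diagonalizable, and the eigenvector for $-(j+1)$ is supported on coordinates $i \ge j$, matching the lower-triangular shape of $\bm{V}$ (note $\binom{i+j}{i-j} = 0$ for $i<j$). The one subtlety is normalization: the columns of $\bigl(\binom{i+j}{i-j}\bigr)_{ij}$ become eigenvectors of $\bm{A}$ only after rescaling coordinate $i$ by $(-1)^i(2i+1)^{1/2}$ — i.e. the true eigenvector matrix is $\diag\!\bigl((-1)^n(2n+1)^{1/2}\bigr)\,\bm{V}$ — but since that diagonal factor has entries of size $\Theta(\sqrt N)$, it is immaterial for a bound exponential in $N$, so we may treat $\bm{V}$ itself as the eigenvector matrix. (Equivalently, by \cref{lmm:conjugation} one may first strip the square-root normalization via a harmless diagonal conjugation, reducing \eqref{eq:hippo} to the integer matrix with $(i,k)$ entry $-(2i+1)$ for $k<i$ and $-(i+1)$ for $k=i$, and argue there.)

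The core computation is the eigenvector equation $\bm{A}\, w^{(j)} = -(j+1)\, w^{(j)}$ for $w^{(j)}_i = (-1)^i(2i+1)^{1/2}\binom{i+j}{i-j}$. Using lower-triangularity, the $i$-th equation (for $i>j$) has a diagonal term $-(i+1)w^{(j)}_i$ and a tail $\sum_{k=j}^{i-1}\bm{A}_{ik} w^{(j)}_k$; after cancelling the diagonal term, dividing by $(2i+1)^{1/2}$, and reindexing $m = k-j$, the equation collapses to
\begin{equation*}
  \sum_{m=0}^{M} (-1)^m (2m+2j+1)\binom{m+2j}{m} \;=\; (-1)^M (M+1)\binom{M+2j+1}{M+1}, \qquad M := i-1-j ,
\end{equation*}
which is a routine induction on $M$ (base case $\binom{2j}{0}(2j+1) = \binom{2j+1}{1}$; the inductive step simplifies via $(M+2j+1)\binom{M+2j}{M} = (M+1)\binom{M+2j+1}{M+1}$). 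A slicker route recognizes the $j$-th column of $\bm{V}$ as the Taylor coefficients of $x^j/(1-x)^{2j+1}$ and verifies the eigenvector property via generating functions. I expect this verification — fixing the diagonal normalization and confirming that every $i$-dependent term cancels, so the eigenvalue comes out to $-(j+1)$ independent of $i$ — to be the only real work; the rest is bookkeeping.

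Given the diagonalization, $\bm{V}_{3i,i} = \binom{3i+i}{3i-i} = \binom{4i}{2i}$ by inspection, and Stirling's formula gives $\binom{2m}{m} = \tfrac{4^m}{\sqrt{\pi m}}(1+o(1))$, so $\binom{4i}{2i} = \tfrac{2^{4i}}{\sqrt{2\pi i}}(1+o(1)) = \Theta\!\bigl(2^{4i}/\sqrt i\bigr)$. Taking $i$ as large as the index set permits, $3i \le N-1$ and hence $i \approx N/3$, produces an eigenvector-matrix entry of magnitude $\approx 2^{4i} = 2^{4N/3}$ (the $\Theta(\sqrt N)$ rescaling from the first paragraph absorbed into the "$\approx$"). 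This is astronomically larger than the $O(1)$ diagonal entries, hence well beyond the dynamic range of floating-point arithmetic at the state sizes $N$ used in practice, so the transformed matrix $\bm{C}\bm{V}$ of \cref{lmm:conjugation} cannot be formed — exactly the asserted numerical infeasibility. As a sanity check this is essentially the worst case: every entry satisfies $\binom{i+j}{i-j}\le 2^{i+j}$, and subject to $0\le j\le i\le N-1$ the binomial $\binom{i+j}{i-j}$ is maximized, up to polynomial factors, at $i=N-1$, $j\approx (N-1)/3$, again yielding $2^{4N/3}$.
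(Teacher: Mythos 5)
Your proof is correct and follows essentially the same route as the paper's: both strip the square roots from the HiPPO matrix via a diagonal (sign and \((2i+1)^{1/2}\)) conjugation and then verify the eigenvector equation row by row, your binomial identity in \(M\) being exactly the paper's induction on the row index in telescoped form. The only differences are presentational --- you isolate the induction as a standalone identity and make the diagonal rescaling of \(\bm{V}\) explicit, which the paper leaves implicit in the phrase ``up to sign and conjugation by a diagonal matrix.''
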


\subsection{The \methodabbrv{} Parameterization: Normal Plus Low-Rank}
\label{sec:s4-overview}

The previous discussion implies that we should only conjugate by well-conditioned matrices \( \bm{V} \).
The ideal scenario is when the matrix \( \bm{A} \) is diagonalizable by a perfectly conditioned (i.e., unitary) matrix.
By the Spectral Theorem of linear algebra, this is exactly the class of \textbf{normal matrices}.
However, this class of matrices is restrictive; in particular, it does not contain the HiPPO matrix \eqref{eq:hippo}.

\begin{algorithm}[t]
  \caption{\textsc{\methodabbrv{} Convolution Kernel (Sketch)}}%
  \label{alg:s4-convolution}
  \begin{algorithmic}[1]
    \renewcommand{\algorithmicrequire}{\textbf{Input:}}
    \Require{\methodabbrv{} parameters \( \bm{\Lambda}, \bm{P}, \bm{Q}, \bm{B}, \bm{C} \in \mathbbm{C}^N \) and step size \( \dt \)}
    \renewcommand{\algorithmicensure}{\textbf{Output:}}
    \Ensure{SSM convolution kernel \( \bm{\overline{K}} = \mathcal{K}_L(\bm{\overline{A}}, \bm{\overline{B}}, \bm{\overline{C}}) \) for \( \bm{A} = \bm{\Lambda} - \bm{P}\bm{Q}^* \)} (equation \eqref{eq:krylov})
    \State $\bm{\widetilde{C}} \gets \left(\bm{I} - \bm{\overline{A}}^L\right)^* \bm{\overline{C}}$
    \Comment{Truncate SSM generating function (SSMGF) to length \( L \)}
    \State
    \( \begin{bmatrix} k_{00}(\omega) & k_{01}(\omega) \\ k_{10}(\omega) & k_{11}(\omega) \end{bmatrix} \gets \left[ \bm{\widetilde{C}} \; \bm{Q} \right]^* \left(\frac{2}{\Delta} \frac{1-\omega}{1+\omega} - \bm{\Lambda} \right)^{-1} \left[ \bm{B} \; \bm{P} \right] \)
    \label{step:cauchy}
    \Comment{Black-box Cauchy kernel}
    \State
    \( \bm{\hat{K}}(\omega) \gets \frac{2}{1+\omega} \left[ k_{00}(\omega) - k_{01}(\omega) (1 + k_{11}(\omega))^{-1} k_{10}(\omega) \right]  \)
    \Comment{Woodbury Identity} %
    \State \( \bm{\hat{K}} = \{\bm{\hat{K}}(\omega) : \omega = \exp(2\pi i \frac{k}{L})\} \)
    \Comment{Evaluate SSMGF at all roots of unity \( \omega \in \Omega_L \)}
    \State \( \bm{\overline{K}} \gets \mathsf{iFFT} (\bm{\hat{K}}) \)
    \Comment{Inverse Fourier Transform}
  \end{algorithmic}
\end{algorithm}

We make the observation that although the HiPPO matrix is not normal, it can be decomposed as the \emph{sum of a normal and low-rank matrix}.
However, this is still not useful by itself:
unlike a diagonal matrix, powering up this sum (in \eqref{eq:krylov}) is still slow and not easily optimized.
We overcome this bottleneck by simultaneously applying three new techniques.
\begin{itemize}[leftmargin=*]%
  \item Instead of computing \( \bm{\overline{K}} \) directly,
    we compute its spectrum by evaluating its \textbf{truncated generating function} \( \sum_{j=0}^{L-1} \bm{\overline{K}}_j \zeta^j \) at the roots of unity \( \zeta \).
    \( \bm{\overline{K}} \) can then be found by applying an inverse FFT.
  \item This generating function is closely related to the matrix resolvent, and now involves a matrix \emph{inverse} instead of \emph{power}.
    The low-rank term can now be corrected by applying the \textbf{Woodbury identity} which reduces \( (\bm{A} + \bm{P}\bm{Q}^*)^{-1} \) in terms of \( \bm{A}^{-1} \), truly reducing to the diagonal case.
  \item Finally, we show that the diagonal matrix case is equivalent to the computation of a \textbf{Cauchy kernel} \( \frac{1}{\omega_j - \zeta_k} \), a well-studied problem with stable near-linear algorithms \citep{pan2015transformations,pan2017fast}.
\end{itemize}

Our techniques apply to any matrix that can be decomposed as \emph{\textbf{Normal Plus Low-Rank (NPLR)}}.
\begin{theorem}%
  \label{thm:hippo-nplr}
  All HiPPO matrices from \citep{gu2020hippo} have a NPLR representation
  \begin{equation}
    \label{eq:nplr}
    \bm{A} = \bm{V} \bm{\Lambda} \bm{V}^* - \bm{P} \bm{Q}^\top = \bm{V} \left( \bm{\Lambda} - \left(\bm{V}^* \bm{P}\right) (\bm{V}^*\bm{Q})^* \right) \bm{V}^*
  \end{equation}
  for unitary \( \bm{V} \in \mathbbm{C}^{N \times N} \), diagonal \( \bm{\Lambda} \), and low-rank factorization \( \bm{P}, \bm{Q} \in \mathbbm{R}^{N \times r} \).
  These matrices HiPPO- LegS, LegT, LagT all satisfy \( r=1 \) or \( r=2 \).
  In particular, equation \eqref{eq:hippo} is NPLR with \( r=1 \).
\end{theorem}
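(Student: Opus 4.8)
The plan is to exhibit, for each HiPPO matrix, an \emph{explicit} low-rank matrix whose addition turns it into a normal matrix, and then invoke the Spectral Theorem to diagonalize the normal part unitarily. I would start with the HiPPO-LegS matrix of \eqref{eq:hippo} and guess the rank-$1$ correction suggested by its structure: take $\bm{P} = \bm{Q} \in \mathbbm{R}^{N}$ with $\bm{P}_n = (n+\tfrac12)^{1/2} = \tfrac{1}{\sqrt2}(2n+1)^{1/2}$, so that $(\bm{P}\bm{P}^\top)_{nk} = \tfrac12 (2n+1)^{1/2}(2k+1)^{1/2}$. A direct entrywise computation then shows that $\bm{A} + \bm{P}\bm{P}^\top$ has every diagonal entry equal to $-\tfrac12$ (since $-(n+1) + \tfrac12(2n+1) = -\tfrac12$) and is skew-symmetric off the diagonal (the lower-triangular entries $-(2n+1)^{1/2}(2k+1)^{1/2}$ combine with $\tfrac12(2n+1)^{1/2}(2k+1)^{1/2}$ to give $-\tfrac12(2n+1)^{1/2}(2k+1)^{1/2}$, matching the negative of the upper-triangular entries). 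Hence $\bm{A} + \bm{P}\bm{P}^\top = -\tfrac12\bm{I} + \bm{S}$ with $\bm{S}$ real skew-symmetric, which is normal; by the Spectral Theorem it equals $\bm{V}\bm{\Lambda}\bm{V}^*$ for unitary $\bm{V} \in \mathbbm{C}^{N\times N}$ and diagonal $\bm{\Lambda}$ (whose entries are $-\tfrac12 + i\mu$ for real $\mu$, since a skew-symmetric matrix has purely imaginary spectrum). This yields $\bm{A} = \bm{V}\bm{\Lambda}\bm{V}^* - \bm{P}\bm{P}^\top$, the NPLR form \eqref{eq:nplr} with $r = 1$ and $\bm{Q} = \bm{P}$.

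Next I would handle the other HiPPO variants the same way. For HiPPO-LagT (lower-triangular with every lower-triangular entry equal to $-1$), the analogous correction is $\bm{P} = \bm{Q} = \tfrac{1}{\sqrt2}\,\bm{1}$, giving $\bm{A} + \tfrac12\bm{1}\bm{1}^\top = -\tfrac12\bm{I} + \bm{S}$ with $\bm{S}$ skew-symmetric, so LagT is again $r = 1$. For HiPPO-LegT the same strategy works but the sign-alternating structure forces a rank-$2$ correction: after subtracting the appropriate sum of two outer products (one absorbing the diagonal and sign pattern, one symmetrizing the off-diagonal part), the remainder collapses to a multiple of the identity plus a skew-symmetric matrix, hence normal, and diagonalizing it gives \eqref{eq:nplr} with $r = 2$. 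Finally, the second equality in \eqref{eq:nplr} is pure algebra: replacing $\bm{P}$ by $\bm{V}(\bm{V}^*\bm{P})$ and $\bm{Q}$ by $\bm{V}(\bm{V}^*\bm{Q})$ and using $\bm{V}\bm{V}^* = \bm{I}$ together with $(\bm{V}^*\bm{Q})^* = \bm{Q}^\top\bm{V}$ (real $\bm{Q}$) telescopes $\bm{V}\big(\bm{\Lambda} - (\bm{V}^*\bm{P})(\bm{V}^*\bm{Q})^*\big)\bm{V}^*$ back to $\bm{V}\bm{\Lambda}\bm{V}^* - \bm{P}\bm{Q}^\top$.

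The main obstacle is not conceptual — once the right low-rank term is in hand each case reduces to a bookkeeping verification and an appeal to the Spectral Theorem. The real work is \emph{spotting} the correct correction for each matrix, and the one case demanding genuine care is HiPPO-LegT, where the rank-$2$ decomposition and the check that its symmetric part is exactly a scalar multiple of $\bm{I}$ is the most involved computation. I would also be careful to record the precise form of $\bm{\Lambda}$ (real part $-\tfrac12$ on the diagonal, purely imaginary correction), since that is what guarantees the resolvent $\big(\tfrac{2}{\Delta}\tfrac{1-\omega}{1+\omega} - \bm{\Lambda}\big)^{-1}$ appearing in \cref{alg:s4-convolution} is well-defined at the roots of unity.
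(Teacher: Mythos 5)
Your route is exactly the paper's: for each HiPPO family, add an explicit symmetric low-rank matrix that turns \( \bm{A} \) into a skew-symmetric matrix (possibly plus a real multiple of the identity), invoke the Spectral Theorem to obtain the unitary \( \bm{V} \) and diagonal \( \bm{\Lambda} \), and observe that the second equality in \eqref{eq:nplr} is trivial algebra. Your LegS computation is correct and coincides with the paper's. Two points to fix. First, a small slip in the LagT case: the diagonal of HiPPO-LagT is \( -\tfrac12 \), so adding \( \tfrac12\mathbf{1}\mathbf{1}^\top \) yields an exactly skew-symmetric matrix (zero diagonal), not \( -\tfrac12\bm{I}+\bm{S} \); this is harmless for normality, but it means your closing remark that \( \bm{\Lambda} \) always has real part \( -\tfrac12 \) is specific to LegS. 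Second, and more substantively, for LegT you assert that a rank-\(2\) correction exists but never exhibit it, and that is the only nontrivial content of the theorem beyond the \(r=1\) cases. The paper's explicit choice (for the LegT matrix written, up to diagonal conjugation, with constant entries on and below the diagonal and alternating signs above) is the rank-\(2\) ``checkerboard'' matrix whose \((n,k)\) entry is \(1\) when \( n \equiv k \pmod 2 \) and \(0\) otherwise, i.e.\ \( \bm{u}\bm{u}^\top + \bm{v}\bm{v}^\top \) with \( \bm{u}=(1,0,1,0,\dots) \), \( \bm{v}=(0,1,0,1,\dots) \): adding it cancels the diagonal and every entry with \( n\equiv k \), and the surviving entries (all with \( n\not\equiv k \)) are \(-1\) below the diagonal against \(+1\) above, giving a skew-symmetric matrix with no identity shift. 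With that correction written down, your argument is complete and matches the paper's.
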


\subsection{\methodabbrv{} Algorithms and Computational Complexity}
\label{sec:s4-efficiency}

By equation \eqref{eq:nplr}, note that NPLR matrices can be conjugated into \emph{diagonal plus low-rank} (DPLR) form
(now over \( \mathbbm{C} \) instead of \( \mathbbm{R} \)).
\cref{thm:s4-recurrence,thm:s4-convolution} describe the complexities of SSMs where \( \bm{A} \) is in DPLR form.
\methodabbrv{} is optimal or near-optimal for both recurrent and convolutional representations.

\begin{theorem}[\methodabbrv{} Recurrence]%
  \label{thm:s4-recurrence}
  Given any step size \( \dt \), computing one step of the recurrence \eqref{eq:2} can be done in \( O(N) \) operations where \( N \) is the state size.
\end{theorem}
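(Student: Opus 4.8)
The plan is to avoid ever materializing the $N \times N$ matrix $\bm{\overline{A}}$, and instead to show that one step of the update $x_{k-1} \mapsto x_k = \bm{\overline{A}} x_{k-1} + \bm{\overline{B}} u_k$ can be performed as a short sequence of matrix--vector operations, each costing $O(N)$. The point is that when $\bm{A} = \bm{\Lambda} - \bm{P}\bm{Q}^*$ is diagonal-plus-low-rank (DPLR) with $\bm{\Lambda}$ diagonal and $\bm{P}, \bm{Q} \in \mathbbm{C}^{N \times r}$ of constant rank $r$, both factors in the bilinear discretization \eqref{eq:2} inherit tractable structure: $\bm{I} + \frac{\dt}{2}\bm{A}$ is diagonal-plus-rank-$r$, while $\bm{I} - \frac{\dt}{2}\bm{A}$ is diagonal-plus-rank-$r$ and hence, by the Woodbury identity, has an inverse that is again diagonal-plus-rank-$r$.

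First I would handle the forward factor. Since $\bm{I} + \frac{\dt}{2}\bm{A} = (\bm{I} + \frac{\dt}{2}\bm{\Lambda}) - \frac{\dt}{2}\bm{P}\bm{Q}^*$, applying it to a vector $v$ costs $O(N)$ for the diagonal piece and $O(Nr) = O(N)$ to form $\bm{Q}^* v \in \mathbbm{C}^r$ and then $\bm{P}(\bm{Q}^* v)$. For the inverse factor, set $\bm{D} = \bm{I} - \frac{\dt}{2}\bm{\Lambda}$ (diagonal, assumed invertible), so $\bm{I} - \frac{\dt}{2}\bm{A} = \bm{D} + \frac{\dt}{2}\bm{P}\bm{Q}^*$, and invoke Woodbury:
\begin{equation*}
  \left(\bm{D} + \tfrac{\dt}{2}\bm{P}\bm{Q}^*\right)^{-1}
  = \bm{D}^{-1} - \tfrac{\dt}{2}\,\bm{D}^{-1}\bm{P}\left(\bm{I}_r + \tfrac{\dt}{2}\bm{Q}^*\bm{D}^{-1}\bm{P}\right)^{-1}\bm{Q}^*\bm{D}^{-1}.
\end{equation*}
In a one-time precomputation (amortized over the $L$ steps, and $O(N)$ in any case) I would form $\bm{D}^{-1}$, the $N \times r$ matrix $\bm{D}^{-1}\bm{P}$, and the $r \times r$ matrix $\left(\bm{I}_r + \frac{\dt}{2}\bm{Q}^*\bm{D}^{-1}\bm{P}\right)^{-1}$ (an $O(r^3) = O(1)$ inversion). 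Thereafter, applying $(\bm{I} - \frac{\dt}{2}\bm{A})^{-1}$ to any vector is $O(N)$: a diagonal scaling, an $r$-dimensional inner product, an $r \times r$ solve, and an $N$-vector update.

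Assembling the pieces, one recurrence step is
\begin{equation*}
  x_k = \left(\bm{I} - \tfrac{\dt}{2}\bm{A}\right)^{-1}\left[\left(\bm{I} + \tfrac{\dt}{2}\bm{A}\right) x_{k-1} + \dt\,\bm{B} u_k\right],
\end{equation*}
i.e.\ one DPLR matrix--vector product followed by one DPLR inverse-solve, in total $O(N)$; and the output $y_k = \bm{\overline{C}} x_k = \bm{C} x_k$ is a single inner product, again $O(N)$. (One can streamline this using $\bm{I} + \frac{\dt}{2}\bm{A} = 2\bm{I} - (\bm{I} - \frac{\dt}{2}\bm{A})$, hence $\bm{\overline{A}} = 2(\bm{I} - \frac{\dt}{2}\bm{A})^{-1} - \bm{I}$, so that only one Woodbury solve per step is actually needed; this is not required for the asymptotic bound.)

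I do not expect a real obstacle here: the substance is just that DPLR structure is closed under the operations appearing in \eqref{eq:2}. The two things that need care are (i) never instantiating an $N \times N$ object --- in particular not $\bm{\overline{A}}$ itself, whose explicit form would already cost $\Omega(N^2)$ --- but rather threading vectors through the factored representation; and (ii) tracking that the ranks stay $O(r) = O(1)$, so that the Woodbury correction and the inner $r \times r$ inverse remain constant-size. Working over $\mathbbm{C}$ (since $\bm{\Lambda}$ may be complex) rather than $\mathbbm{R}$ does not change any of the arithmetic counts.
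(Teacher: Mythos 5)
Your proposal is correct and follows essentially the same route as the paper: split the bilinear discretization into the forward factor $\bm{I} + \tfrac{\dt}{2}\bm{A}$ (diagonal plus low-rank, so $O(N)$ to apply) and the backward factor $(\bm{I} - \tfrac{\dt}{2}\bm{A})^{-1}$, which the Woodbury identity shows is again diagonal plus low-rank, so that one recurrence step is a constant number of $O(N)$ matrix--vector operations. The paper's Appendix C.2 writes $\bm{\overline{A}} = \bm{A_1}\bm{A_0}$ for exactly these two DPLR factors; your observation that $\bm{\overline{A}} = 2(\bm{I} - \tfrac{\dt}{2}\bm{A})^{-1} - \bm{I}$ lets a single Woodbury solve suffice is a small streamlining not in the paper but changes nothing asymptotically.
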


\cref{thm:s4-recurrence} follows from the fact that the inverse of a DPLR matrix is also DPLR (e.g. also by the Woodbury identity).
This implies that the discretized matrix \( \bm{\overline{A}} \) is the product of two DPLR matrices and thus has \( O(N) \) matrix-vector multiplication.
\cref{sec:s4-recurrence-proof} computes \( \bm{\overline{A}} \) in closed DPLR form.
\begin{theorem}[\methodabbrv{} Convolution]%
  \label{thm:s4-convolution}
  Given any step size \( \dt \), computing the SSM convolution filter \( \bm{\overline{K}} \) can be reduced to 4 Cauchy multiplies,
  requiring only
  \( \widetilde{O}(N+L) \) operations and \( O(N+L) \) space.
\end{theorem}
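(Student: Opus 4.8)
The plan is to never form the powers \( \bm{\overline{A}}^k \) that appear in \eqref{eq:krylov}, but instead to compute the length-\(L\) discrete Fourier transform of \( \bm{\overline{K}} \) and recover \( \bm{\overline{K}} \) by a single inverse FFT. Concretely, introduce the truncated generating function \( \bm{\hat{K}}(\zeta) = \sum_{j=0}^{L-1} \bm{\overline{C}}\,\bm{\overline{A}}^j \bm{\overline{B}}\,\zeta^j \); evaluating \( \bm{\hat{K}} \) at the \(L\)-th roots of unity \( \zeta \in \Omega_L \) is exactly the DFT of \( \bm{\overline{K}} \), so \( \bm{\overline{K}} = \mathsf{iFFT}(\bm{\hat K}) \) costs only \( O(L\log L) \) once all the evaluations are known. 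The entire problem thus reduces to evaluating \( \bm{\hat K} \) at \(L\) points within the budget \( \widetilde{O}(N+L) \), which is the bulk of \cref{alg:s4-convolution}.

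The next step is to collapse the geometric sum into a resolvent. Using \( \sum_{j=0}^{L-1}(\bm{\overline{A}}\zeta)^j = (\bm{I}-\bm{\overline{A}}^L\zeta^L)(\bm{I}-\bm{\overline{A}}\zeta)^{-1} \) together with \( \zeta^L = 1 \) on \( \Omega_L \), we get \( \bm{\hat K}(\zeta) = \bm{\widetilde C}^*(\bm{I}-\bm{\overline{A}}\zeta)^{-1}\bm{\overline{B}} \) with \( \bm{\widetilde C} = (\bm{I}-\bm{\overline{A}}^L)^*\bm{\overline{C}} \); this \( \bm{\widetilde C} \) is computed once (e.g.\ by \( O(\log L) \) repeated squarings, or simply absorbed as a reparameterization of \( \bm{C} \)) and does not enter the per-kernel cost. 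The key algebraic manipulation is then to substitute the bilinear discretization \eqref{eq:2}: a short computation gives
\[
  \bm{I} - \zeta\bm{\overline{A}}
  = \left(\bm{I}-\tfrac{\dt}{2}\bm{A}\right)^{-1}\left[(1-\zeta)\bm{I} - \tfrac{\dt}{2}(1+\zeta)\bm{A}\right]
  = \tfrac{\dt(1+\zeta)}{2}\left(\bm{I}-\tfrac{\dt}{2}\bm{A}\right)^{-1}\left[g(\zeta)\bm{I} - \bm{A}\right],
  \qquad g(\zeta) := \tfrac{2}{\dt}\tfrac{1-\zeta}{1+\zeta},
\]
and since \( \bm{\overline{B}} = (\bm{I}-\tfrac{\dt}{2}\bm{A})^{-1}\dt\bm{B} \), the \( (\bm{I}-\tfrac{\dt}{2}\bm{A}) \) factors cancel and we are left with \( \bm{\hat K}(\zeta) = \tfrac{2}{1+\zeta}\,\bm{\widetilde C}^*\big(g(\zeta)\bm{I} - \bm{A}\big)^{-1}\bm{B} \) — a matrix \emph{inverse} rather than a matrix \emph{power}.

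Now I would apply the Woodbury identity. By \cref{thm:hippo-nplr} (and the remark that NPLR conjugates to DPLR), we may take \( \bm{A} = \bm{\Lambda} - \bm{P}\bm{Q}^* \) with \( \bm{\Lambda} \) diagonal, so \( \big(g\bm{I}-\bm{A}\big)^{-1} = \big((g\bm{I}-\bm{\Lambda}) + \bm{P}\bm{Q}^*\big)^{-1} \) expands via Woodbury into terms involving only the \emph{diagonal} resolvent \( (g\bm{I}-\bm{\Lambda})^{-1} \) and an \( r\times r \) inverse, yielding \( \bm{\hat K}(\zeta) = \tfrac{2}{1+\zeta}\big[k_{00}(\zeta) - k_{01}(\zeta)(\bm{I}+k_{11}(\zeta))^{-1}k_{10}(\zeta)\big] \) exactly as in \cref{alg:s4-convolution}, where each \( k_{\bullet\bullet} \) has the form \( \bm{u}^*(g(\zeta)\bm{I}-\bm{\Lambda})^{-1}\bm{v} = \sum_n \frac{\overline{u}_n v_n}{g(\zeta)-\lambda_n} \). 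Ranging \( \zeta \) over \( \Omega_L \), this is precisely a matrix–vector product with the Cauchy matrix \( \big(\tfrac{1}{g(\zeta_k)-\lambda_n}\big)_{k\in[L],n\in[N]} \); for the rank-\(1\) case of \eqref{eq:hippo} there are four such products (and the \( r\times r \) inverse costs \( O(r^3 L) \), negligible for \( r\le 2 \)). Finally invoke the known fact \citep{pan2001structured,pan2015transformations,pan2017fast} that an \( L\times N \) Cauchy multiply needs \( \widetilde{O}(N+L) \) operations and \( O(N+L) \) space — and, in contrast to the naive diagonalization forbidden by \cref{lmm:hippo-diagonalization}, is \emph{numerically stable}, since \cref{thm:hippo-nplr} produces \( \bm{\Lambda} \) from a \emph{unitary} change of basis, so the Cauchy nodes \( \{\lambda_n\},\{g(\zeta_k)\} \) are well-behaved rather than exponentially large. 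Adding up the four Cauchy multiplies, the one-time \( \bm{\widetilde C} \) precompute, \( O(L) \) elementwise work for the \( \tfrac{2}{1+\zeta} \) scaling and the Woodbury recombination, and one \( O(L\log L) \) inverse FFT gives the claimed \( \widetilde{O}(N+L) \) time and \( O(N+L) \) space.

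The main obstacle is the middle step: making the bilinear-discretization algebra collapse the truncated generating function into the clean resolvent \( \tfrac{2}{1+\zeta}\bm{\widetilde C}^*(g(\zeta)\bm{I}-\bm{A})^{-1}\bm{B} \) — in particular correctly handling the \( \zeta^L=1 \) truncation, cancelling the \( (\bm{I}-\tfrac{\dt}{2}\bm{A})^{-1} \) factors, and checking there is no genuine singularity at \( \zeta=-1 \) in exact arithmetic. Everything downstream (Woodbury, recognizing the Cauchy structure, and quoting the stable near-linear Cauchy algorithm) is then essentially bookkeeping.
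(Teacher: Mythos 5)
Your proposal is correct and follows essentially the same route as the paper's proof in \cref{sec:s4-convolution-proof}: truncated generating function evaluated at roots of unity recovered by inverse FFT, the bilinear-discretization algebra of \cref{lmm:bilinear-resolvent} collapsing the resolvent to \( \frac{2}{1+z}\bm{\tilde{C}}^*(g(z)\bm{I}-\bm{A})^{-1}\bm{B} \), the Woodbury correction of \cref{lmm:resolvent-woodbury}, and reduction of the diagonal case to four Cauchy matrix-vector products with complexity given by \cref{prop:cauchy}. The details you flag as the main obstacle (the \( \zeta^L=1 \) truncation, cancellation of the \( (\bm{I}-\frac{\dt}{2}\bm{A})^{-1} \) factors, and the absorption of \( \bm{\tilde{C}} \) as a reparameterization) are handled identically in the paper.
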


\cref{sec:s4-details}, \cref{def:cauchy} formally defines Cauchy matrices, which are related to rational interpolation problems.
Computing with Cauchy matrices is an extremely well-studied problem in numerical analysis,
with both fast arithmetic and numerical algorithms based on the famous Fast Multipole Method (FMM)
\citep{pan2001structured,pan2015transformations,pan2017fast}.
The computational complexities of these algorithms under various settings are described in \cref{sec:s4-details}, \cref{prop:cauchy}.

We reiterate that \cref{thm:s4-convolution} is our core technical contribution,
and its algorithm is the very motivation of the NPLR \methodabbrv{} parameterization.
This algorithm is formally sketched in \cref{alg:s4-convolution}.

\subsection{Architecture Details of the Deep \methodabbrv{} Layer}
\label{sec:s4-architecture}

Concretely, an \methodabbrv{} layer is parameterized as follows.
First initialize a SSM with \( \bm{A} \) set to the HiPPO matrix \eqref{eq:hippo}.
By \cref{lmm:conjugation} and \cref{thm:hippo-nplr},
this SSM is unitarily equivalent to some \( (\bm{\Lambda} - \bm{P}\bm{Q}^*, \bm{B}, \bm{C}) \) for some diagonal \( \bm{\Lambda} \) and vectors \( \bm{P}, \bm{Q}, \bm{B}, \bm{C} \in \mathbbm{C}^{N \times 1} \).
These comprise \methodabbrv's \( 5N \) trainable parameters.

The overall deep neural network (DNN) architecture of \methodabbrv{} is similar to prior work.
As defined above,
\methodabbrv{} defines a map from \( \mathbbm{R}^L \to \mathbbm{R}^L \), i.e. a 1-D sequence map.
Typically, DNNs operate on feature maps of size \( H \) instead of \( 1 \).
\methodabbrv{} handles multiple features by simply defining \( H \) independent copies of itself, and then mixing the \( H \) features with a position-wise linear layer for a total of \( O(H^2) + O(HN) \) parameters per layer.
Nonlinear activation functions are also inserted between these layers.
Overall, \methodabbrv{} defines a sequence-to-sequence map of shape (batch size, sequence length, hidden dimension), exactly the same as related sequence models such as Transformers, RNNs, and CNNs.

\begin{table}
  \caption{
    Complexity of various sequence models in terms of sequence length ($\bm{L}$), batch size ($\bm{B}$), and hidden dimension ($\bm{H}$);
    tildes denote log factors.
    Metrics are parameter count, training computation, training space requirement, training parallelizability, and inference computation (for 1 sample and time-step).
    For simplicity, the state size \( N \) of \methodabbrv{} is tied to \( H \).
    Bold denotes model is theoretically best for that metric.
    Convolutions are efficient for training while recurrence is efficient for inference, while SSMs combine the strengths of both.
  }
  \small
  \centering
  \begin{tabular}{@{}lllll@{}}
    \toprule
               & Convolution\tablefootnote{Refers to global (in the sequence length) and depthwise-separable convolutions, similar to the convolution version of S4.} & Recurrence     & Attention            & \methodabbrv                                     \\
    \midrule
    Parameters & \( LH \)                                                                                                                                             & \( \bm{H^2} \) & \( \bm{H^2} \)       & \( \bm{H^2} \)                                   \\
    Training   & \( \bm{\tilde{L}H(B + H)} \)                                                                                                                         & \( BLH^2 \)    & \( B(L^2H + LH^2) \) & \( \bm{BH(\tilde{H}+\tilde{L}) + B\tilde{L}H} \) \\
    Space      & \( \bm{BLH} \)                                                                                                                                       & \( \bm{BLH} \) & \( B(L^2 + HL) \)    & \( \bm{BLH} \)                                   \\
    Parallel   & \textbf{Yes}                                                                                                                                         & No             & \textbf{Yes}         & \textbf{Yes}                                     \\
    Inference  & \( LH^2 \)                                                                                                                                           & \( \bm{H^2} \) & \( L^2H + H^2L \)    & \( \bm{H^2} \)                                   \\
    \bottomrule
  \end{tabular}
  \label{tab:complexity}
\end{table}

Note that the core S4 module is a linear transformation, but the addition of non-linear transformations through the depth of the network makes the overall deep SSM non-linear.
This is analogous to a vanilla CNN, since convolutional layers are also linear.
The broadcasting across \( H \) hidden features described in this section is also analogous to depthwise-separable convolutions.
Thus, the overall deep S4 model is closely related to a depthwise-separable CNN but with \emph{global} convolution kernels.

Finally, we note that follow-up work found that this version of S4 can sometimes suffer from numerical instabilities when the \( \bm{A} \) matrix has eigenvalues on the right half-plane \citep{goel2022sashimi}.
It introduced a slight change to the NPLR parameterization for S4 from \( \bm{\Lambda} - \bm{P}\bm{Q}^* \) to \( \bm{\Lambda} - \bm{P}\bm{P}^* \) that corrects this potential problem.

\cref{tab:complexity} compares the complexities of the most common deep sequence modeling mechanisms.

\section{Experiments}
\label{sec:experiments}

\cref{sec:experiments-benchmark} benchmarks \methodabbrv{} against the LSSL and efficient Transformer models.
\cref{sec:experiments-lrd} validates \methodabbrv{} on LRDs: the LRA benchmark and raw speech classification.
\cref{sec:experiments-general} investigates whether \methodabbrv{} can be used as a general sequence model to perform effectively and efficiently in a wide variety of settings including image classification, image and text generation, and time series forecasting.

\subsection{\methodabbrv{} Efficiency Benchmarks}
\label{sec:experiments-benchmark}
We benchmark that \methodabbrv{} can be trained quickly and efficiently, both compared to the LSSL, as well as efficient Transformer variants designed for long-range sequence modeling.
As outlined in \cref{sec:s4}, \methodabbrv{} is theoretically much more efficient than the LSSL, and \cref{tab:ssm-benchmark} confirms that the \methodabbrv{} is orders of magnitude more speed- and memory-efficient for practical layer sizes.
In fact, \methodabbrv's speed and memory use is competitive with the most efficient Transformer variants benchmarked by \citet{tay2021long}---Linear Transformer~\citep{katharopoulos2020transformers} and Performer~\citep{choromanski2020rethinking}---in a parameter-matched setting (\cref{tab:lra-benchmark}, following the protocol of \citet{tay2021long}).

\begin{figure}[t!]
  \begin{minipage}[t]{0.54\linewidth}
    \small
    \centering
    \captionsetup{type=table}
    \caption{Deep SSMs: The \methodabbrv{} parameterization with \cref{alg:s4-convolution} is asymptotically more efficient than the LSSL.}
    \resizebox{\linewidth}{!}{
      \begin{tabular}{@{}llllllllll@{}}
        \toprule
        & \multicolumn{3}{c}{\textsc{Training Step (ms)}} & \multicolumn{3}{c}{\textsc{Memory Alloc. (MB)}} \\
        \cmidrule{2-4} \cmidrule{5-7}
        Dim.        & 128                                         & 256                                        & 512                                        & 128   & 256  & 512    \\
        \midrule
        LSSL        & 9.32                                        & 20.6                                       & 140.7                                      & 222.1 & 1685 & 13140  \\
        \textbf{\methodabbrv} & 4.77                                        & 3.07                                       & 4.75                                       & 5.3   & 12.6 & 33.5   \\
        \midrule
        Ratio & \( 1.9\times \) & \( 6.7\times \) & \( \bm{29.6}\times \) & \( 42.0\times \) & \( 133\times \) & \( \bm{392\times} \) \\
        \bottomrule
      \end{tabular}%
    }
    \label{tab:ssm-benchmark}
  \end{minipage}
  \hfill
  \begin{minipage}[t]{0.45\linewidth}
    \small
    \captionsetup{type=table}
    \centering
    \caption{Benchmarks vs. efficient Transformers}
    \resizebox{\linewidth}{!}{
      \begin{tabular}{@{}lllll@{}}
        \toprule
                              & \multicolumn{2}{c}{\textsc{Length 1024}} & \multicolumn{2}{c}{\textsc{Length 4096}} \\
        \cmidrule{2-3} \cmidrule{4-5}
                              & Speed                                    & Mem.                                      & Speed                        & Mem.                          \\
        \midrule
        Transformer           & 1\( \times \)                            & 1\( \times \)                             & 1\( \times \)                & 1\( \times \)                 \\
        \midrule
        Performer             & 1.23\( \times \)                         & \underline{0.43}\( \times \)              & 3.79\( \times \)             & \underline{0.086}\( \times \) \\
        Linear Trans.         & \textbf{1.58}\( \times \)                & \textbf{0.37}\( \times \)                 & \textbf{5.35}\( \times \)    & \textbf{0.067}\( \times \)    \\
        \midrule
        \textbf{\methodabbrv} & \textbf{1.58}\( \times \)                & \underline{0.43}\( \times \)              & \underline{5.19}\( \times \) & 0.091\( \times \)             \\
        \bottomrule
      \end{tabular}%
    }
    \label{tab:lra-benchmark}
  \end{minipage}
\end{figure}

\subsection{Learning Long Range Dependencies}
\label{sec:experiments-lrd}
As described in \cref{sec:ss-memory,sec:s4-motivation}, \methodabbrv{} uses a principled approach to address LRDs based on the HiPPO theory of continuous-time memorization.
Our goal in this section is to validate that \methodabbrv{} achieves high performance on difficult tasks that require long-range reasoning.
We focus here on two problems:
(i) the Long-Range Arena, a well-known benchmark designed to test efficient sequence models on LRDs, and
(ii) a speech classification problem as a real-world test of LRDs.

\begin{figure}[t!]
  \centering
    \begin{subfigure}{0.3\linewidth}
        \centering
        \includegraphics[width=\linewidth]{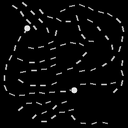}
    \end{subfigure}%
    \quad
    \begin{subfigure}{0.58\linewidth}
        \centering
        \lineskip=0pt
        \includegraphics[width=\linewidth]{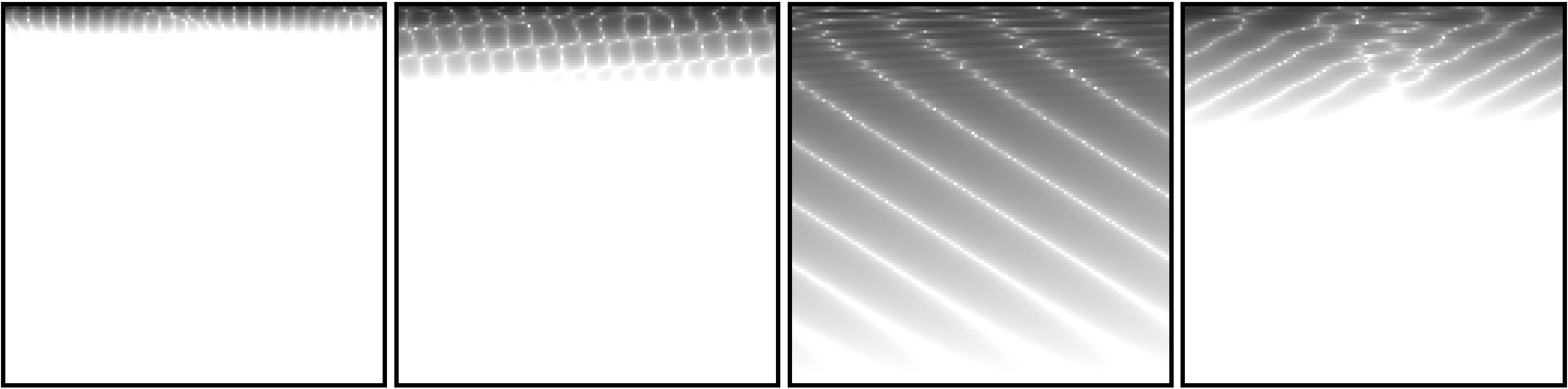}\\
        \includegraphics[width=\linewidth]{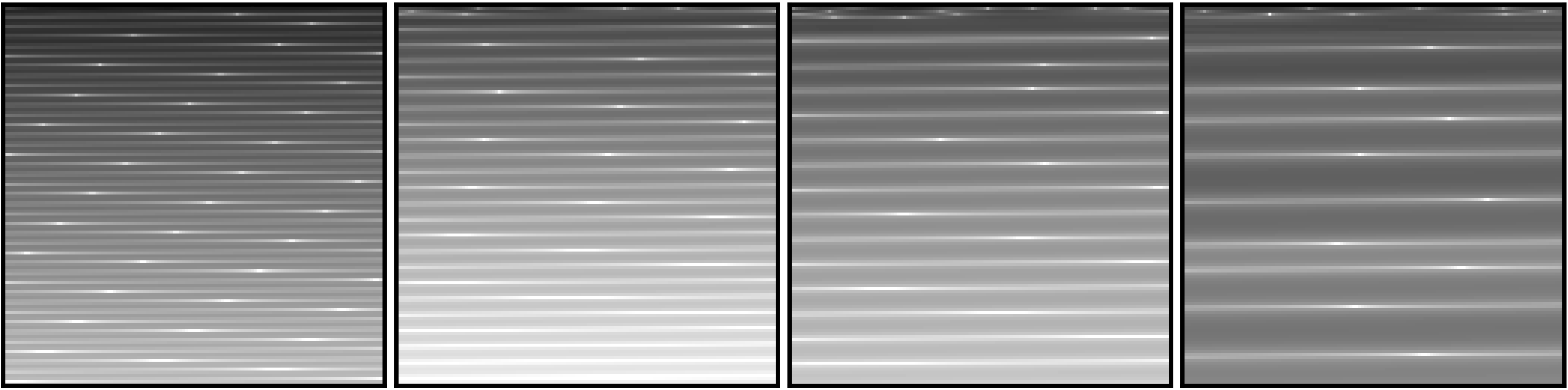}
    \end{subfigure}
      \caption{
        Visualizations of a trained \methodabbrv{} model on LRA Path-X. SSM convolution kernels \( \bm{\overline{K}} \in \mathbbm{R}^{16384} \) are reshaped into a \( 128 \times 128 \)  image. (\textit{Left}) Example from the Path-X task, which involves deducing if the markers are connected by a path (\textit{Top}) Filters from the first layer (\textit{Bottom}) Filters from the last layer.
    }
  \label{fig:pathx-filters}
\end{figure}

\begin{table}[t!]
  \small
  \caption{
    (\textbf{Long Range Arena})
    (\textit{Top}) Original Transformer variants in LRA. Full results in \cref{sec:experiment-details-lrd}.
    (\textit{Bottom}) Other models reported in the literature.
    \emph{Please read \cref{sec:reproduction} before citing this table.}
  }
    \centering
    \begin{tabular}{@{}llllllll@{}}
        \toprule
        \textsc{Model}        & \textsc{ListOps}  & \textsc{Text}     & \textsc{Retrieval} & \textsc{Image}    & \textsc{Pathfinder} & \textsc{Path-X} & \textsc{Avg}      \\
        \midrule
        Transformer           & 36.37             & 64.27             & 57.46              & 42.44             & 71.40               & \xmark          & 53.66             \\
        Reformer              & \underline{37.27} & 56.10             & 53.40              & 38.07             & 68.50               & \xmark          & 50.56             \\
        BigBird               & 36.05             & 64.02             & 59.29              & 40.83             & 74.87               & \xmark          & 54.17             \\
        Linear Trans.         & 16.13             & \underline{65.90} & 53.09              & 42.34             & 75.30               & \xmark          & 50.46             \\
        Performer             & 18.01             & 65.40             & 53.82              & 42.77             & 77.05               & \xmark          & 51.18             \\
        \midrule
        FNet                  & 35.33             & 65.11             & 59.61              & 38.67             & \underline{77.80}   & \xmark          & 54.42             \\
        Nystr{\"o}mformer     & 37.15             & 65.52             & \underline{79.56}  & 41.58             & 70.94               & \xmark          & 57.46             \\
        Luna-256              & 37.25             & 64.57             & 79.29              & \underline{47.38} & 77.72               & \xmark          & \underline{59.37} \\
        \textbf{\methodabbrv} & \textbf{59.60}    & \textbf{86.82}    & \textbf{90.90}     & \textbf{88.65}    & \textbf{94.20}      & \textbf{96.35}  & \textbf{86.09}    \\
        \bottomrule
    \end{tabular}
    \label{tab:lra}
\end{table}

\textbf{Long Range Arena (LRA).}
LRA~\citep{tay2021long} contains $6$ tasks with lengths 1K-16K steps, encompassing modalities and objectives that require similarity, structural, and visuospatial reasoning.
\cref{tab:lra} compares \methodabbrv{} against the 11 Transformer variants from \citet{tay2021long} as well as follow-up work.
\methodabbrv{} substantially advances the SoTA, outperforming all baselines on all tasks and averaging $80.48\%$ compared to less than \( 60\% \) for every baseline.
Notably, \methodabbrv{} solves the Path-X task, an extremely challenging task
that involves reasoning about LRDs over sequences of length \( 128\times 128 = 16384 \).
All previous models have failed (i.e.\ random guessing) due to memory or computation bottlenecks, or simply being unable to learn such long dependencies.

We analyze \methodabbrv's performance on Path-X by visualizing its learned representations,
in particular 1-D convolution kernels \( \bm{\overline{K}} \)
which are the focus of our technical results in \cref{sec:s4}.
\cref{fig:pathx-filters} shows that \methodabbrv{} learns a variety of filters that display spatially consistent structure
and demonstrate awareness of the 2-D nature of the data.
In particular, the lower layers learn simple kernels that extract features from just a few rows of local context while ignoring the rest of the image.
On the other hand, higher layers aggregate information globally across full columns of the image at varying spatial frequencies.
Filters in these higher layers span the entire context ($16384$ pixels), confirming \methodabbrv's ability to learn LRDs.

\begin{figure}[b!]
\begin{minipage}[t]{0.47\linewidth}
  \small
  \centering
  \captionsetup{type=table}
  \caption{
    (\textbf{SC10 classification})
    Transformer, CTM, RNN, CNN, and SSM models.
    (\textit{MFCC}) Standard pre-processed MFCC features (length 161).
    (\textit{Raw}) Unprocessed signals (length 16000).
    (\( \mathit{0.5\times} \)) Frequency change at test time.
    \xmark{} denotes not applicable or computationally infeasible on single GPU.
    \emph{Please read \cref{sec:reproduction} before citing this table.}
  }
  \vspace*{-5pt}
  \begin{tabular}{@{}llllllllll@{}}
    \toprule
                 & \textsc{MFCC}     & \textsc{Raw}      & \( 0.5\times \)   \\
    \midrule
    Transformer  & 90.75             & \xmark            & \xmark            \\
    Performer    & 80.85             & 30.77             & 30.68             \\
    \midrule
    ODE-RNN      & 65.9              & \xmark            & \xmark            \\
    NRDE         & 89.8              & 16.49             & 15.12             \\
    \midrule
    ExpRNN       & 82.13             & 11.6              & 10.8              \\
    LipschitzRNN & 88.38             & \xmark            & \xmark            \\
    \midrule
    CKConv       & \textbf{95.3}     & 71.66             & \underline{65.96} \\
    WaveGAN-D    & \xmark            & \underline{96.25} & \xmark            \\
    \midrule
    LSSL         & 93.58             & \xmark            & \xmark            \\
    \textbf{\methodabbrv}  & \underline{93.96} & \textbf{98.32}    & \textbf{96.30}    \\
    \bottomrule
  \end{tabular}
  \label{tab:sc}
\end{minipage}
\hfill
\begin{minipage}[t]{0.47\linewidth}
  \small
  \centering
  \captionsetup{type=table}
  \caption{
    (\textbf{Pixel-level 1-D image classification})
    Comparison against reported test accuracies from prior works (Transformer, RNN, CNN, and SSM models).
    Extended results and citations in \cref{sec:experiment-details}.
  }
  \begin{tabular}{@{}llll@{}}
    \toprule
                     & \textsc{sMNIST} & \textsc{pMNIST}   & \textsc{sCIFAR}   \\
    \midrule
    Transformer      & 98.9            & 97.9              & 62.2              \\
    \midrule
    LSTM             & 98.9            & 95.11             & 63.01             \\
    r-LSTM           & 98.4            & 95.2              & 72.2              \\
    UR-LSTM          & 99.28           & 96.96             & 71.00             \\
    UR-GRU           & 99.27           & 96.51             & 74.4              \\
    HiPPO-RNN        & 98.9            & 98.3              & 61.1              \\
    LMU-FFT          & -               & 98.49             & -                 \\
    LipschitzRNN     & 99.4            & 96.3              & 64.2              \\
    \midrule
    TCN              & 99.0            & 97.2              & -                 \\
    TrellisNet       & 99.20           & 98.13             & 73.42             \\
    CKConv           & 99.32           & 98.54             & 63.74             \\
    \midrule
    LSSL & \underline{99.53}           & \textbf{98.76}    & \underline{84.65} \\
    \textbf{\methodabbrv}      & \textbf{99.63}  & \underline{98.70} & \textbf{91.13}    \\
    \bottomrule
  \end{tabular}
  \label{tab:image}
\end{minipage}
\end{figure}

\textbf{Raw Speech Classification.}
Speech is a typical real-world time series domain, involving signals sampled from an underlying physical process at high frequency.
We perform speech classification using the SC10 subset of the \emph{Speech Commands} dataset~\citep{Warden2018SpeechCA} (see \cref{sec:reproduction}).
While most sequence models for speech rely on extensive preprocessing (e.g. to MFCC features), we classify raw speech (length-$16000$) following~\citet{romero2021ckconv}.
\methodabbrv{} achieves $98.3\%$ accuracy, higher than all baselines that use the $100\times$ shorter MFCC features, and validates that a powerful LRD model is able to extract more information from the raw data and outperform hand-crafted pre-processing.
Additionally, we include a baseline CNN specifically designed for raw speech, the discriminator from the WaveGAN model~\citep{Donahue2019AdversarialAS},
which performs worse than \methodabbrv{} while having $90\times$ more parameters and incorporating many more
architectural heuristics (\cref{sec:experiment-details-lrd}).

\subsection{\methodabbrv{} as a General Sequence Model}
\label{sec:experiments-general}

A key goal of sequence modeling research is to develop a single model that can be applied in many domains (e.g. images, audio, text, time-series) with a broad range of capabilities (e.g. efficient training, fast generation, handling irregularly sampled data).
As a fundamental scientific model, SSMs are a promising candidate that come with a range of capabilities, and \methodabbrv's strong results on LRD benchmarks spanning images, text, and speech are evidence of \methodabbrv's potential as a general sequence model.
In this section, we focus on understanding this question in more depth by highlighting key strengths of \methodabbrv{} in settings that usually require specialized models.
The tasks we focus on (generative modeling, image classification, time-series forecasting)
are considered as LRD tasks in the literature,
and serve as additional validation that \methodabbrv{} handles LRDs efficiently.

\textbf{Large-scale generative modeling.}
We investigate two well-studied image and text benchmarks to validate the scalability, flexibility, and efficiency of \methodabbrv.
These tasks require much larger models than our previous tasks -- up to $250$M parameters.

First, CIFAR density estimation is a popular benchmark for autoregressive models, where images are flattened into a sequence of $3072$ RGB subpixels that are predicted one by one.
\cref{tab:cifar-generation} shows that \emph{with no 2D inductive bias}, \methodabbrv{} is competitive with the best models designed for this task.

Second, WikiText-103 is an established benchmark for language modeling, an important task for large-scale sequence models where tokens are predicted sequentially based on past context.
Although RNNs were the model of choice for many years, Transformers are now the dominant model in such applications that contain data that is inherently discrete.
We show that alternative models to Transformers can still be competitive in these settings.
By simply taking a strong Transformer baseline \citep{baevski2018adaptive} and replacing the self-attention layers,
\methodabbrv{} substantially closes the gap to Transformers (within $0.8$ ppl), setting SoTA for attention-free models by over $2$ ppl.

\textbf{Fast autoregressive inference.}
A prominent limitation of autoregressive models is inference speed (e.g. generation),
since they require a pass over the full context for every new sample.
Several methods have been specifically crafted to overcome this limitation such as the Linear Transformer, a hybrid Transformer/RNN that switches to a stateful, recurrent view at inference time for speed.

As a stateful model, SSMs automatically have this ability (\cref{fig:properties}).
By switching to its recurrent representation (\cref{sec:ss-recurrent}),
\methodabbrv{} requires \emph{constant memory and computation} per time step -- in contrast to standard autoregressive models which scale in the context length.
On both CIFAR-10 and WikiText-103,
we report the throughput of various models at generation time, with \methodabbrv{} around $60\times$ faster than a vanilla Transformer on both tasks (details in \cref{sec:experiment-details-general-speed}).

\textbf{Sampling resolution change.}
As a continuous-time model, \methodabbrv{} automatically adapts to data sampled at different rates,
a challenging setting for time series with a dedicated line of work \citep{rubanova2019latent,de2019gru,romero2021ckconv}.
Without re-training, \methodabbrv{} achieves $96.3\%$ accuracy at $0.5\times$ the frequency on Speech Commands 10 (\cref{tab:sc}), simply by changing its internal step size \( \dt \) (\cref{sec:ss-recurrent}).

\begin{figure}
\begin{minipage}[t]{0.525\linewidth}
  \small
  \centering
  \captionsetup{type=table}
  \caption{
    (\textbf{CIFAR-10 density estimation})
    As a generic \mbox{sequence} model, \methodabbrv{} is competitive with previous autoregressive models (in bits per dim.) while incorporating no 2D inductive bias,
    and has fast generation through its recurrence mode.
  }
  \vspace*{-4pt}
  \resizebox{\linewidth}{!}{
    \begin{tabular}{@{}lllll@{}}
      \toprule
      Model               & bpd              & 2D bias          & Images / sec                           \\
      \midrule
      Transformer         & 3.47             & \textbf{None}    & 0.32 (\( 1\times \))                   \\
      Linear Transf.       & 3.40             & \textbf{None}    & 17.85 (\( 56\times \))     \\
      PixelCNN           & 3.14             & 2D conv.         & -                                      \\
      Row PixelRNN        & 3.00             & 2D BiLSTM        & -                                      \\
      PixelCNN++          & 2.92             & 2D conv.         & \underline{19.19} (\( 59.97\times \))              \\
      Image Transf.       & 2.90             & 2D local attn.   & 0.54 (1.7$\times$)                     \\
      PixelSNAIL          & \underline{2.85} & 2D conv. + attn. & 0.13 (0.4$\times$)                     \\
      Sparse Transf.      & \textbf{2.80}    & 2D sparse attn.  & -                                      \\
      \midrule
      \textbf{\methodabbrv} (base)  & 2.92             & \textbf{None}    & \textbf{20.84} (\( \bm{65.1\times} \)) \\
      \textbf{\methodabbrv} (large) & \underline{2.85} & \textbf{None}    & 3.36 (\( 10.5\times \))                \\
      \bottomrule
    \end{tabular}
  }
  \label{tab:cifar-generation}
\end{minipage}
\hfill
\begin{minipage}[t]{0.47\linewidth}
  \small
  \centering
  \captionsetup{type=table}
  \caption{
    (\textbf{WikiText-103 language modeling})
    \methodabbrv{} approaches the performance of Transformers with much faster generation.
    (\textit{Top}) Transformer baseline which our implementation is based on, with attention replaced by \methodabbrv.
    (\textit{Bottom}) Attention-free models (RNNs and CNNs).
  }
  \resizebox{\linewidth}{!}{
    \begin{tabular}{@{}llll@{}}
      \toprule
      Model         & Params & Test ppl.      & Tokens / sec                       \\
      \midrule
      Transformer   & 247M   & \textbf{20.51} & 0.8K (\( 1\times \))               \\
      \midrule
      GLU CNN       & 229M   & 37.2           & -                                  \\
      AWD-QRNN      & 151M   & 33.0           & -                                  \\
      LSTM + Hebb.  & -      & 29.2           & -                                  \\
      TrellisNet    & 180M   & 29.19          & -                                  \\
      Dynamic Conv. & 255M   & 25.0           & -                                  \\
      TaLK Conv.    & 240M   & 23.3           & -                                  \\
      \textbf{\methodabbrv}   & 249M   & \textbf{20.95} & \textbf{48K} (\( \bm{60\times} \)) \\
      \bottomrule
    \end{tabular}
  }
  \label{tab:wt103}
\end{minipage}
\end{figure}

\begin{table*}[!b]
\caption{Univariate long sequence time-series forecasting results. Full results in \cref{sec:experiment-details-general-informer}.}
\centering
\resizebox{\linewidth}{!}{
  \begin{tabular}{@{}llllllllllll@{}}
    \toprule
                 & \textbf{\methodabbrv}          & {Informer}        & {LogTrans}   & {Reformer}   & {LSTMa}      & {DeepAR}     & {ARIMA}      & {Prophet}    \\
    \midrule
                 & MSE~~MAE                       & MSE~~MAE          & MSE~~MAE     & MSE~~MAE     & MSE~~MAE     & MSE~~MAE     & MSE~~MAE     & MSE~~MAE     \\
    \midrule
    {{ETTh$_1$}} & \textbf{0.116}~~\textbf{0.271} & 0.269~~0.435      & 0.273~~0.463 & 2.112~~1.436 & 0.683~~0.768 & 0.658~~0.707 & 0.659~~0.766 & 2.735~~3.253 \\
    {{ETTh$_2$}} & \textbf{0.187}~~\textbf{0.358} & {0.277}~~{0.431} & 0.303~~0.493 & 2.030~~1.721 & 0.640~~0.681 & 0.429~~0.580 & 2.878~~1.044 & 3.355~~4.664 \\
    {{ETTm$_1$}} & \textbf{0.292}~~\textbf{0.466} & {0.512}~~{0.644}  & 0.598~~0.702 & 1.793~~1.528 & 1.064~~0.873 & 2.437~~1.352 & 0.639~~0.697 & 2.747~~1.174 \\
    {{Weather}}  & \textbf{0.245}~~\textbf{0.375} & {0.359}~~{0.466}  & 0.388~~0.499 & 2.087~~1.534 & 0.866~~0.809 & 0.499~~0.596 & 1.062~~0.943 & 3.859~~1.144 \\
    {{ECL}}      & \textbf{0.432}~~\textbf{0.497} & {0.582}~~{0.608}  & 0.624~~0.645 & 7.019~~5.105 & 1.545~~1.006 & 0.657~~0.683 & 1.370~~0.982 & 6.901~~4.264 \\
    \bottomrule

\end{tabular}%
}
\label{tab:informer-s-long}
\end{table*}

\textbf{Learning with weaker inductive bias.}
Beyond our results on speech (\cref{sec:experiments-lrd}), we further validate that \methodabbrv{} can be applied with minimal modifications
on two domains that typically require specialized domain-specific preprocessing and architectures.
First, we compare \methodabbrv{} to the Informer~\citep{haoyietal-informer-2021},
a new Transformer architecture that uses a complex encoder-decoder designed for time-series forecasting problems.
A simple application of \methodabbrv{} that treats forecasting as a masked sequence-to-sequence transformation (\cref{fig:s4-architecture})
outperforms the Informer and other baselines on $40/50$ settings across $5$ forecasting tasks.
Notably, \methodabbrv{} is better on the longest setting in each task, e.g.\ reducing MSE by $37\%$ when forecasting $30$ days of weather data
(\cref{tab:informer-s-long}).

Finally,
we evaluate \methodabbrv{} on pixel-level sequential image classification tasks (\cref{tab:image}),
popular benchmarks which were originally LRD tests for RNNs~\citep{arjovsky2016unitary}.
Beyond LRDs, these benchmarks point to a recent effort of the ML community to solve vision
problems with reduced domain knowledge,
in the spirit of models such as Vision Transformers \citep{dosovitskiy2020image} and MLP-Mixer \citep{tolstikhin2021mlp} which involve patch-based models that without 2-D inductive bias.
Sequential CIFAR is a particularly challenging dataset where outside of SSMs, all sequence models have a gap of over $25\%$ to a simple 2-D CNN.
By contrast, \methodabbrv{} is competitive with a larger ResNet18 (7.9M vs. 11.0M parameters), both with (\textbf{93.16\%} vs. 95.62\%) or without (\textbf{91.12\%} vs. 89.46\%) data augmentation.
Moreover, it is much more robust to other architectural choices (e.g.\ \textbf{90.46\%} vs. 79.52\% when swapping BatchNorm for LayerNorm).

\subsection{SSM Ablations: the Importance of HiPPO}
A critical motivation of S4 was the use of the HiPPO matrices to initialize an SSM.
We consider several simplifications of S4 to ablate the importance of each of these components, including:
\begin{enumerate*}[label=(\roman*)]%
  \item how important is the HiPPO initialization?
  \item how important is training the SSM on top of HiPPO?
  \item are the benefits of S4 captured by the NPLR parameterization without HiPPO?
\end{enumerate*}

As a simple testbed, all experiments in this section were performed on the sequential CIFAR-10 task,
whicih we found transferred well to other settings.
Models were constrained to at most 100K trainable parameters and trained with a simple plateau learning rate scheduler and no regularization.

\paragraph{Unconstrained SSMs.}
We first investigate generic SSMs with various initializations.
We consider a random Gaussian initialization (with variance scaled down until it did not NaN),  and the HiPPO initialization.
We also consider a random diagonal Gaussian matrix as a potential structured method; parameterizing \( \bm{A} \) as a diagonal matrix would allow substantial speedups without going through the complexity of S4's NPLR parameterization.
We consider both freezing the \( \bm{A} \) matrix and training it.

\cref{fig:ssm-ablation-real} shows both training and validation curves, from which we can make several observations.
First, training the SSM improved all methods, particularly the randomly initialized ones.
For all methods, training the SSM led to improvements in both training and validation curves.

Second, a large generalization gap exists between the initializations.
In particular, note that when \( \bm{A} \) is trained, all initializations are able to reach perfect training accuracy.
However, their validation accuracies are separated by over \( 15\% \).

\begin{figure}[!ht]
\begin{subfigure}{.5\linewidth}%
    \centering
    \includegraphics[width=\linewidth]{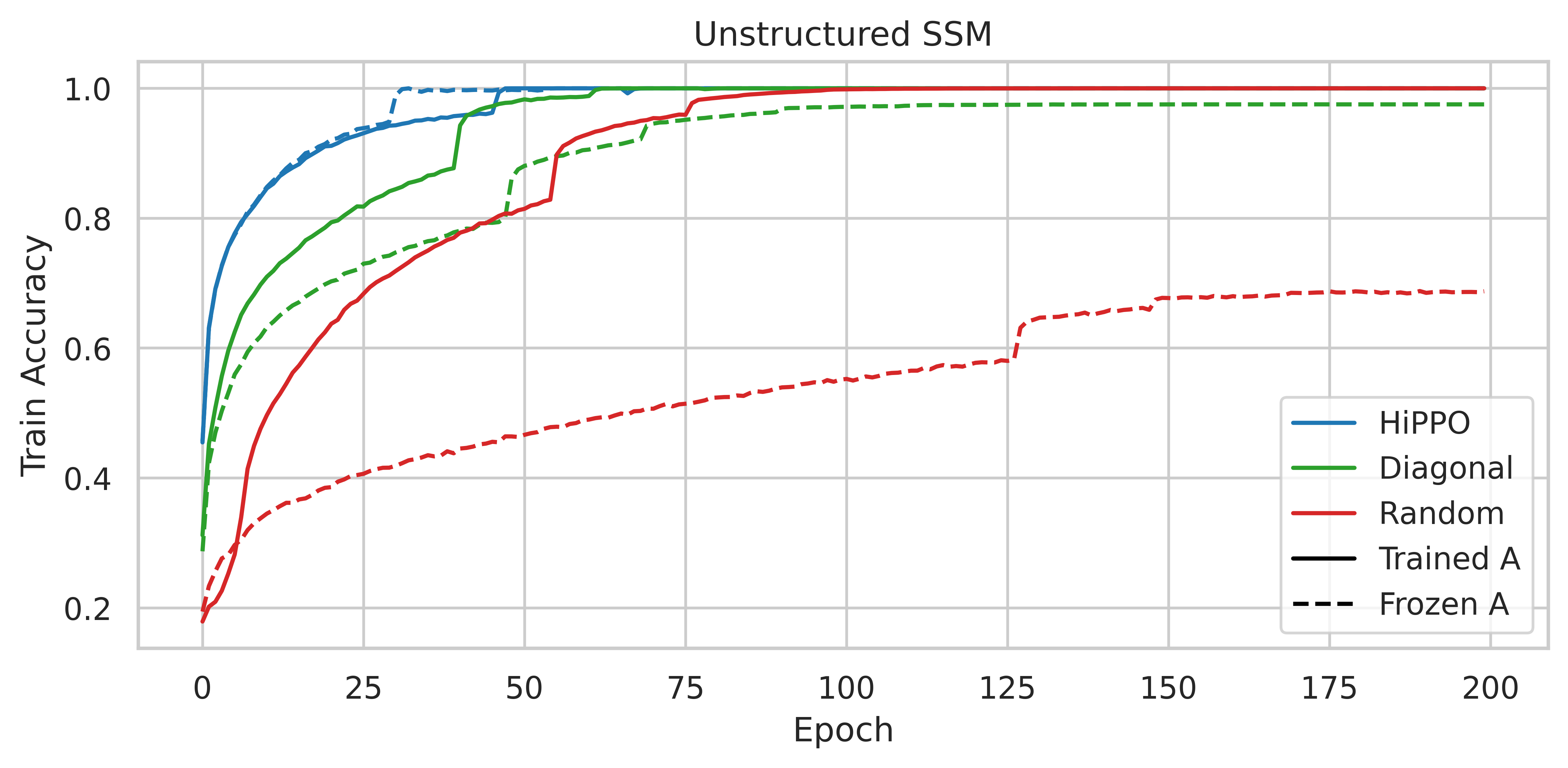}
\end{subfigure}
\begin{subfigure}{.5\linewidth}%
    \centering
    \includegraphics[width=\linewidth]{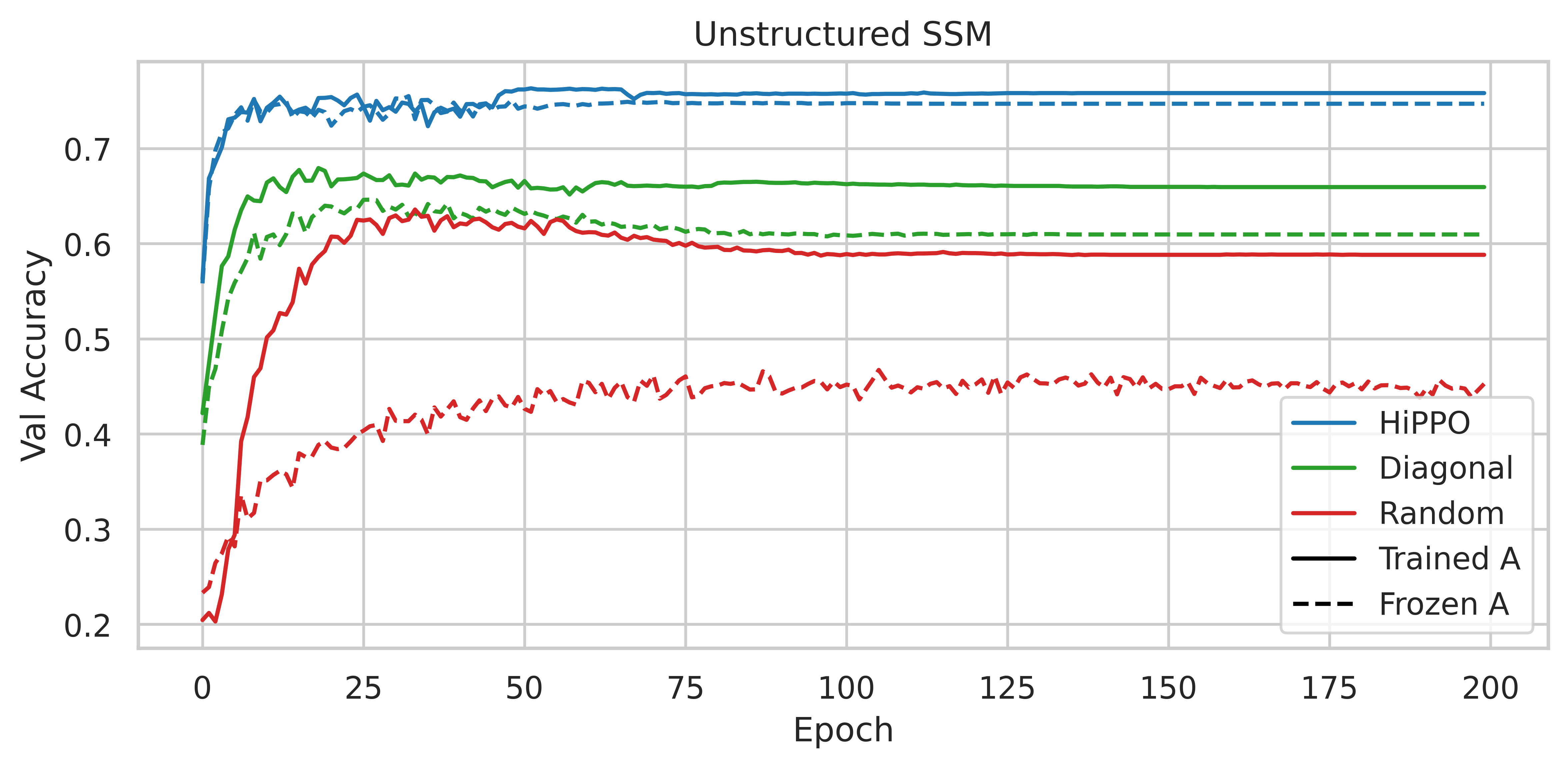}
\end{subfigure}
\caption{CIFAR-10 classification with unconstrained, real-valued SSMs with various initializations. (\emph{Left}) Train accuracy. (\emph{Right}) Validation accuracy.}
\label{fig:ssm-ablation-real}
\end{figure}

\paragraph{NPLR SSMs.}
The previous experiment validates the importance of HiPPO in SSMs.
This was the main motivation of the NPLR algorithm in S4,
which utilizes structure of the HiPPO matrix \eqref{eq:hippo} to make SSMs computationally feasible.
\cref{fig:ssm-ablation-nplr} shows that random NPLR matrices still do not perform well,
which validates that S4's effectiveness primarily comes from the HiPPO initialization, not the NPLR parameterization.

\begin{figure}[!t]
\begin{subfigure}{.5\linewidth}%
    \centering
    \includegraphics[width=\linewidth]{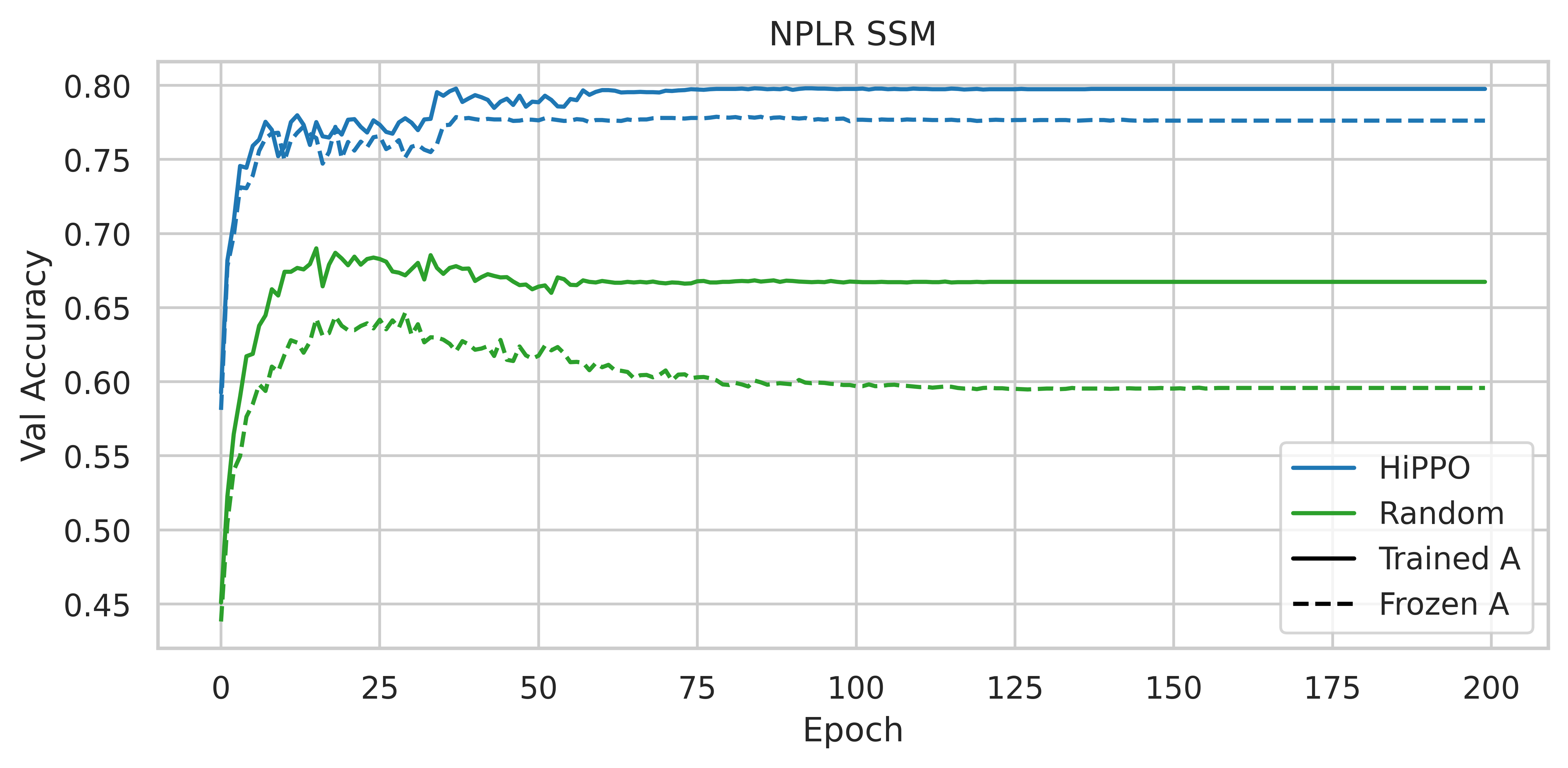}
    \caption{}
    \label{fig:ssm-ablation-nplr}
\end{subfigure}
\begin{subfigure}{.5\linewidth}%
    \centering
    \includegraphics[width=\linewidth]{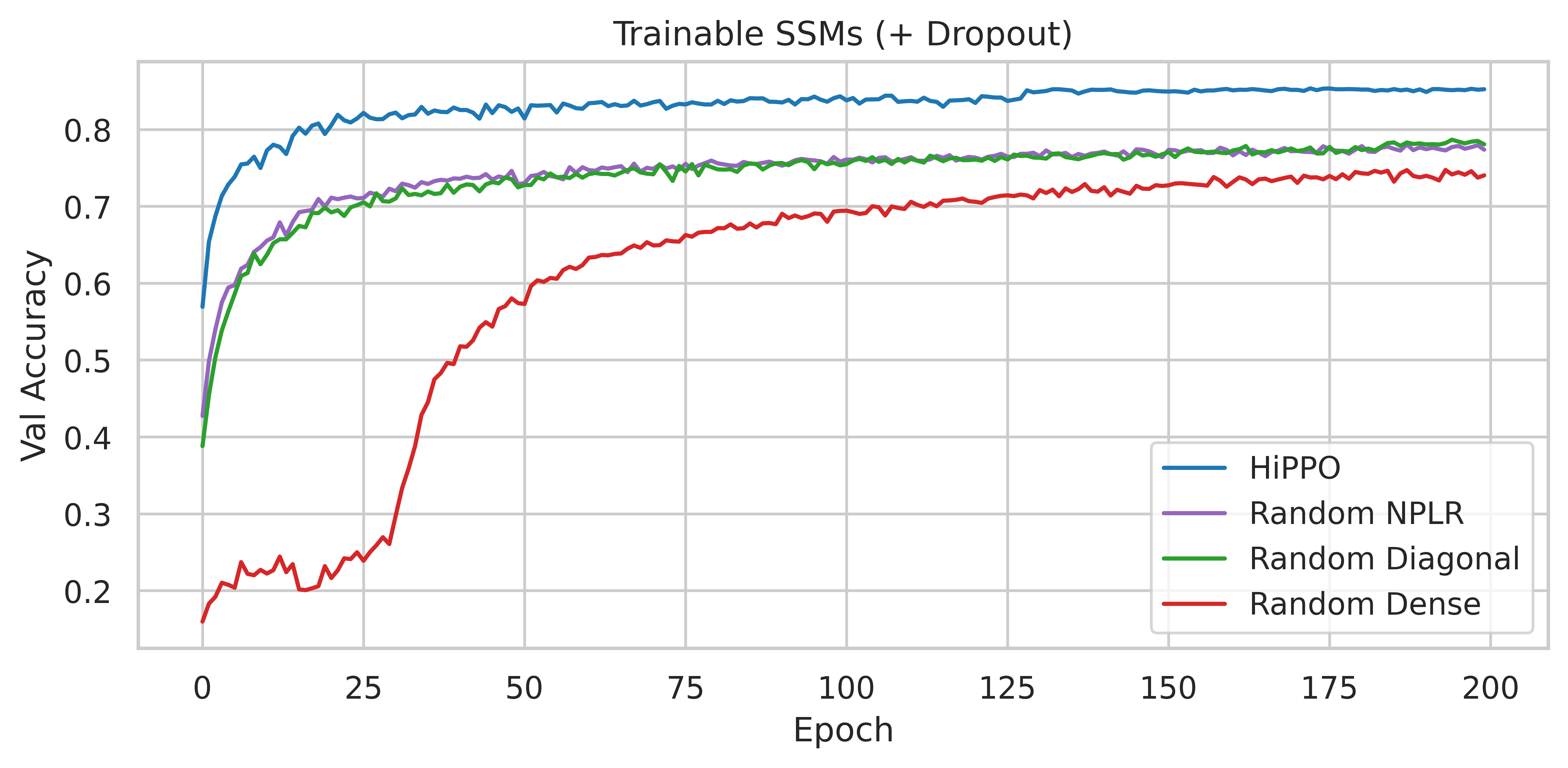}
    \caption{}
    \label{fig:ssm-ablation-reg}
\end{subfigure}
\caption{
  CIFAR-10 validation accuracy of SSMs with different initializations and parameterizations.
  (\emph{Left}) NPLR parameterization with random versus HiPPO initialization.
  (\emph{Right}) All methods considered in this section, including minor Dropout regularization. S4 achieves SotA accuracy on sequential CIFAR-10 with just 100K parameters.
}
\end{figure}

Finally, \cref{fig:ssm-ablation-reg} considers the main ablations considered in this section (with trainable SSMs) and adds minor regularization.
With 0.1 Dropout, the same trends still hold, and the HiPPO initialization---in other words, the full S4 method---achieves \( 84.27\% \) test accuracy with just 100K parameters.

\section{Conclusion}
\label{sec:conclusion}

We introduce \methodabbrv, a sequence model that uses a new parameterization for the state space model's continuous-time, recurrent, and convolutional views to efficiently model LRDs in a principled manner.
Results across established benchmarks evaluating a diverse range of data modalities and model capabilities
suggest that \methodabbrv{} has the potential to be an effective general sequence modeling solution.

\subsubsection*{Acknowledgments}
We thank Aditya Grover and Chris Cundy for helpful discussions about earlier versions of the method.
We thank Simran Arora, Sabri Eyuboglu, Bibek Paudel, and Nimit Sohoni for valuable feedback on earlier drafts of this work.
This work was done with the support of Google Cloud credits under HAI proposals 540994170283 and 578192719349.
We gratefully acknowledge the support of NIH under No. U54EB020405 (Mobilize), NSF under Nos. CCF1763315 (Beyond Sparsity), CCF1563078 (Volume to Velocity), and 1937301 (RTML); ONR under No. N000141712266 (Unifying Weak Supervision); ONR N00014-20-1-2480: Understanding and Applying Non-Euclidean Geometry in Machine Learning; N000142012275 (NEPTUNE); the Moore Foundation, NXP, Xilinx, LETI-CEA, Intel, IBM, Microsoft, NEC, Toshiba, TSMC, ARM, Hitachi, BASF, Accenture, Ericsson, Qualcomm, Analog Devices, the Okawa Foundation, American Family Insurance, Google Cloud, Salesforce, Total, the HAI-AWS Cloud Credits for Research program, the Stanford Data Science Initiative (SDSI), and members of the Stanford DAWN project: Facebook, Google, and VMWare. The Mobilize Center is a Biomedical Technology Resource Center, funded by the NIH National Institute of Biomedical Imaging and Bioengineering through Grant P41EB027060. The U.S. Government is authorized to reproduce and distribute reprints for Governmental purposes notwithstanding any copyright notation thereon. Any opinions, findings, and conclusions or recommendations expressed in this material are those of the authors and do not necessarily reflect the views, policies, or endorsements, either expressed or implied, of NIH, ONR, or the U.S. Government.

\bibliography{biblio}

\newpage

\appendix

\section{Discussion}
\label{sec:discussion}

\paragraph{Related Work.}
Our work is most closely related to a line of work originally motivated by a particular biologically-inspired SSM, which led to mathematical models for addressing LRDs. %
\citet{voelker2019dynamical,voelker2019legendre} derived a non-trainable SSM motivated from approximating a neuromorphic spiking model, and \citet{chilkuri2021parallelizing} showed that it could be sped up at train time with a convolutional view.
\citet{gu2020hippo} extended this special case to a general continuous-time function approximation framework with several more special cases of \( \bm{A} \) matrices designed for long-range dependencies.
However, instead of using a true SSM, all of these works fixed a choice of \( \bm{A} \) and built RNNs around it.
Most recently, \citet{gu2021lssl} used the full \eqref{eq:1} explicitly as a deep SSM model, exploring new conceptual views of SSMs, as well as allowing \( \bm{A} \)  to be trained.
As mentioned in \cref{sec:intro}, their method used a naive instantiation of SSMs that suffered from an additional factor of \( N \) in memory and \( N^2 \) in computation.

Beyond this work, our technical contributions (\cref{sec:s4}) on the \methodabbrv{} parameterization and algorithms are applicable to a broader family of SSMs including these investigated in prior works,
and our techniques for working with these models may be of independent interest.

\paragraph{Implementation.}

The computational core of \methodabbrv's training algorithm is the Cauchy kernel discussed in \cref{sec:s4-overview,sec:s4-efficiency,sec:s4-convolution-proof}.
As described in \cref{sec:s4-convolution-proof} \cref{prop:cauchy},
there are many algorithms for it with differing computational complexities and sophistication.
Our current implementation of \methodabbrv{} actually uses the naive \( O(NL) \) algorithm
which is easily parallelized on GPUs and has more easily accessible libraries allowing it to be implemented;
we leverage the \texttt{pykeops} library for memory-efficient kernel operations.
However, this library is a much more general library that may not be optimized for the Cauchy kernels used here,
and we believe that a dedicated CUDA implementation can be more efficient.
Additionally, as discussed in this work, there are asymptotically faster and numerically stable algorithms for the Cauchy kernel (\cref{prop:cauchy}).
However, these algorithms are currently not implemented for GPUs due to a lack of previous applications that require them.
We believe that more efficient implementations of these self-contained computational kernels are possible,
and that \methodabbrv{} (and SSMs at large) may have significant room for further improvements in efficiency.

\paragraph{Limitations and Future Directions.}
In this work, we show that \methodabbrv{} can address a wide variety of data effectively.
However, it may not necessarily be the most suitable model for all types of data.
For example, \cref{tab:wt103} still found a gap compared to Transformers for language modeling.
An interesting future direction is exploring combinations of S4 with other sequence models to complement their strengths.
We are excited about other directions, including continuing to explore the benefits of S4 on audio data (e.g. pre-training or generation settings),
and generalizing HiPPO and S4 to higher-dimensional data for image and video applications.

\section{Numerical Instability of LSSL}
\label{sec:lssl-instability}

This section proves the claims made in \cref{sec:s4-motivation} about prior work.
We first derive the explicit diagonalization of the HiPPO matrix, confirming its instability because of exponentially large entries.
We then discuss the proposed theoretically fast algorithm from \citep{gu2021lssl} (Theorem 2) and show that it also involves exponentially large terms and thus cannot be implemented.

\subsection{HiPPO Diagonalization}

\begin{proof}[Proof of \cref{lmm:hippo-diagonalization}]%
  The HiPPO matrix \eqref{eq:hippo} is equal, up to sign and conjugation by a diagonal matrix, to
  \begin{align*}
    \bm{A} &=
    \begin{bmatrix}
      1 \\
      -1 & 2 \\
      1 & -3 & 3 \\
      -1 & 3 & -5 & 4 \\
      1 & -3 & 5 & -7 & 5 \\
      -1 & 3 & -5 & 7 & -9 & 6 \\
      1 & -3 & 5 & -7 & 9 & -11 & 7 \\
      -1 & 3 & -5 & 7 & -9 & 11 & -13 & 8 \\
      \vdots & & & & & & & & \ddots \\
    \end{bmatrix}
    \\
    \bm{A}_{nk} &=
    \begin{cases}%
      (-1)^{n-k} (2k+1) & n > k \\
      k+1 & n=k \\
      0 & n<k
    \end{cases}
    .
  \end{align*}
  Our goal is to show that this \( \bm{A} \) is diagonalized by the matrix
  \begin{align*}
    \bm{V} = \binom{i+j}{i-j}_{ij} =
    \begin{bmatrix}
      1 \\
      1 & 1 \\
      1 & 3 & 1 \\
      1 & 6 & 5 & 1 \\
      1 & 10 & 15 & 7 & 1 \\
      1 & 15 & 35 & 28 & 9 & 1 \\
      \vdots & & & & & & \ddots \\
    \end{bmatrix}
    ,
  \end{align*}
  or in other words that columns of this matrix are eigenvectors of \( \bm{A} \).

  Concretely, we will show that the \( j \)-th column of this matrix \( \bm{v}^{(j)} \) with elements
  \begin{align*}
    \bm{v}^{(j)}_i =
    \begin{cases}%
      0 & i < j \\
      \binom{i+j}{i-j} = \binom{i+j}{2j} & i \ge j
    \end{cases}
  \end{align*}
  is an eigenvector with eigenvalue \( j+1 \).
  In other words we must show that for all indices \( k \in [N] \),
  \begin{equation}
    \label{eq:diagonalization-proof}
    (\bm{A}\bm{v}^{(j)})_k = \sum_i \bm{A}_{ki} \bm{v}^{(j)}_{i} = (j+1) \bm{v}^{(j)}_k.
  \end{equation}

  If \( k < j \), then for all \( i \) inside the sum, either \( k < i \) or \( i < j \).
  In the first case \( \bm{A}_{ki} = 0 \) and in the second case \( \bm{v}^{(j)}_i = 0 \),
  so both sides of equation \eqref{eq:diagonalization-proof} are equal to \( 0 \).

  It remains to show the case \( k \ge j \), which proceeds by induction on \( k \).
  Expanding equation \eqref{eq:diagonalization-proof} using the formula for \( \bm{A} \) yields
  \begin{align*}
    (\bm{A}\bm{v})^{(j)}_k = \sum_i \bm{A}_{ki} \bm{v}^{(j)}_{i}
    = \sum_{i=j}^{k-1} (-1)^{k-i} (2i+1) \binom{i+j}{2j} + (k+1) \binom{k+j}{2j}.
  \end{align*}

  In the base case \( k = j \), the sum disappears and we are left with \( (\bm{A}\bm{v}^{(j)})_j = (j+1) \binom{2j}{2j} = (j+1) \bm{v}^{(j)}_j \), as desired.

  Otherwise, the sum for \( (\bm{A}\bm{v})^{(j)}_{k} \) is the same as the sum for \( (\bm{A}\bm{v})^{(j)}_{k-1} \) but with sign reversed and a few edge terms.
  The result follows from applying the inductive hypothesis and algebraic simplification:
  \begin{align*}
    (\bm{A}\bm{v})^{(j)}_{k}
    &= -(\bm{A}\bm{v})^{(j)}_{k-1} - (2k-1) \binom{k-1+j}{2j} + k\binom{k-1+j}{2j} + (k+1)\binom{k+j}{2j}
    \\&= -(j+1)\binom{k-1+j}{2j} - (k-1)\binom{k-1+j}{2j} + (k+1)\binom{k+j}{2j}
    \\&= -(j+k)\binom{k-1+j}{2j} + (k+1)\binom{k+j}{2j}
    \\&= -(j+k)\frac{(k-1+j)!}{(k-1-j)!(2j)!} + (k+1)\binom{k+j}{2j}
    \\&= -\frac{(k+j)!}{(k-1-j)!(2j)!} + (k+1)\binom{k+j}{2j}
    \\&= -(k-j)\frac{(k+j)!}{(k-j)!(2j)!} + (k+1)\binom{k+j}{2j}
    \\&= (j-k)(k+1)\binom{k+j}{2j} + (k+1)\binom{k+j}{2j}
    \\&= (j+1)\bm{v}^{(j)}_k
    .
  \end{align*}

\end{proof}

\subsection{Fast but Unstable LSSL Algorithm}

Instead of diagonalization,
\citet[Theorem 2]{gu2021lssl} proposed a sophisticated fast algorithm to compute
\begin{align*}
  K_L(\bm{\overline{A}}, \bm{\overline{B}}, \bm{\overline{C}}) = (\bm{\overline{C}}\bm{\overline{B}}, \bm{\overline{C}}\bm{\overline{A}}\bm{\overline{B}}, \dots, \bm{\overline{C}}\bm{\overline{A}}^{L-1}\bm{\overline{B}}).
\end{align*}
This algorithm runs in \( O(N\log^2 N + L\log L) \) operations and \( O(N+L) \) space.
However, we now show that this algorithm is also numerically unstable.

There are several reasons for the instability of this algorithm, but most directly we can pinpoint a particular intermediate quantity that they use.
\begin{definition}%
  The fast LSSL algorithm computes coefficients of \( p(x) \), the characteristic polynomial of \( A \), as an intermediate computation.
  Additionally, it computes the coefficients of its inverse, \( p(x)^{-1} \pmod{x^L} \).
\end{definition}

We now claim that this quantity is numerically unfeasible.
We narrow down to the case when \( \bm{\overline{A}} = \bm{I} \) is the identity matrix.
Note that this case is actually in some sense the most typical case:
when discretizing the continuous-time SSM to discrete-time by a step-size \( \dt \),
the discretized transition matrix \( \bm{\overline{A}} \) is brought closer to the identity.
For example, with the Euler discretization \( \bm{\overline{A}} = \bm{I} + \dt \bm{A} \),
we have \( \bm{\overline{A}} \to \bm{I} \) as the step size \( \dt \to 0 \).

\begin{lemma}%
  When \( \bm{\overline{A}} = \bm{I} \), the fast LSSL algorithm requires computing terms exponentially large in \( N \).
\end{lemma}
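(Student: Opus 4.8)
The plan is to specialize the two intermediate quantities that the fast LSSL algorithm materializes --- the coefficients of the characteristic polynomial $p(x)$ of $\bm{\overline{A}}$, and the coefficients of $p(x)^{-1} \bmod x^L$ --- to the case $\bm{\overline{A}} = \bm{I}$, and to observe that each of these families already contains an entry of magnitude $2^{\Theta(N)}$. Since these coefficients are computed explicitly, as recorded in the preceding definition, no reorganization of the remaining arithmetic can avoid representing such numbers, which is the desired conclusion.

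First I would compute $p(x)$. As $\bm{\overline{A}} = \bm{I}$ is $N\times N$, its characteristic polynomial is $p(x) = \det(x\bm{I} - \bm{I}) = (x-1)^N = \sum_{k=0}^{N} (-1)^{N-k}\binom{N}{k}x^k$, so the coefficients of $p$ are $\pm\binom{N}{k}$; the central one has magnitude $\binom{N}{\lfloor N/2\rfloor} = \Theta\!\big(2^N/\sqrt{N}\,\big)$, already exponential in $N$. Next I would compute the truncated inverse. Writing $p(x) = (-1)^N(1-x)^N$ and expanding in formal power series,
\[
  p(x)^{-1} = (-1)^N (1-x)^{-N} = (-1)^N \sum_{k\ge 0} \binom{N+k-1}{k} x^k ,
\]
so the coefficients of $p(x)^{-1}\bmod x^L$ are $(-1)^N\binom{N+k-1}{k}$ for $0 \le k \le L-1$. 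The leading one is $\pm\binom{N+L-2}{L-1}$, and in the regime relevant here ($L \ge N$, since the sequence is at least as long as the state dimension) this is at least $\binom{2N-2}{N-1} = \Theta\!\big(4^N/\sqrt{N}\,\big)$ --- exponentially large in $N$, and in fact worse than the diagonalization blow-up of \cref{lmm:hippo-diagonalization}. Even without the assumption $L \ge N$, the characteristic-polynomial coefficients alone already establish the claim.

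The main obstacle is not the computation --- both expansions above are elementary --- but making precise that these quantities are genuinely required: one must appeal to the structure of the fast LSSL algorithm (the characteristic polynomial and its truncated inverse are intermediate outputs, not quantities that happen to surface in a single derivation step), which is exactly what the preceding definition records, so nothing further is needed there. A secondary remark, which needs no proof, is that integers of size $2^{\Theta(N)}$ overflow standard floating-point formats once $N$ is in the hundreds, so the algorithm is unusable at precisely the state sizes used in practice; this parallels the instability already exhibited for the naive diagonalization, and explains why the theoretically fast LSSL algorithm was never implemented.
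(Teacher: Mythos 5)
Your proposal is correct and follows essentially the same route as the paper's proof: compute the characteristic polynomial $(1-x)^N$ of the identity (coefficients up to $\binom{N}{\lfloor N/2\rfloor} \approx 2^N/\sqrt{\pi N/2}$), then expand $p(x)^{-1} \bmod x^L$ via the generalized binomial series to exhibit the even larger coefficient $\binom{N+L-2}{L-1}$. The only cosmetic differences are the sign convention for $p$ and your use of the regime $L \ge N$ where the paper instantiates $L = N-1$; both yield the same $2^{\Theta(N)}$ blow-up.
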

\begin{proof}%
  The characteristic polynomial of \( \bm{I} \) is
  \begin{align*}
    p(x) =
    \mathsf{det}\left| \bm{I} - x\bm{I} \right|
    = (1-x)^N.
  \end{align*}
  These coefficients have size up to \( \binom{N}{\frac{N}{2}} \approx \frac{2^N}{\sqrt{\pi N/2}} \).

  The inverse of \( p(x) \) has even larger coefficients.
  It can be calculated in closed form by the generalized binomial formula:
  \begin{align*}
    (1-x)^{-N} = \sum_{k=0}^{\infty} \binom{N+k-1}{k}x^k.
  \end{align*}
  Taking this \( \pmod{x^L} \), the largest coefficient is
  \begin{align*}
    \binom{N+L-2}{L-1} = \binom{N+L-2}{N-1} = \frac{(L-1)(L-2)\dots(L-N+1)}{(N-1)!}.
  \end{align*}
  When \( L=N-1 \) this is
  \begin{align*}
    \binom{2(N-1)}{N-1} \approx \frac{2^{2N}}{\sqrt{\pi N}}
  \end{align*}
  already larger than the coefficients of \( (1-x)^N \), and only increases as \( L \) grows.
\end{proof}

\section{\methodabbrv{} Algorithm Details}
\label{sec:s4-details}

This section proves the results of \cref{sec:s4-efficiency}, providing complete details of our efficient algorithms for \methodabbrv{}.

\cref{sec:s4-nplr-proof,sec:s4-recurrence-proof,sec:s4-convolution-proof}
prove \cref{thm:hippo-nplr,thm:s4-recurrence,thm:s4-convolution}
respectively.

\subsection{NPLR Representations of HiPPO Matrices}
\label{sec:s4-nplr-proof}

We first prove \cref{thm:hippo-nplr},
showing that all HiPPO matrices for continuous-time memory fall under the \methodabbrv{} normal plus low-rank (NPLR) representation.

\begin{proof}[Proof of \cref{thm:hippo-nplr}]%
  We consider each of the three cases HiPPO-LagT, HiPPO-LegT, and HiPPO-LegS separately.
  Note that the primary HiPPO matrix defined in this work (equation \eqref{eq:hippo}) is the HiPPO-LegT matrix.

  \textbf{HiPPO-LagT.}
  The HiPPO-LagT matrix is simply
  \begin{align*}
    \bm{A}_{nk} &=
    \begin{cases}%
      0            & n < k \\
      -\frac{1}{2} & n=k   \\
      -1           & n > k \\
    \end{cases}
    \\
    \bm{A} &=
    -
    \begin{bmatrix}
      \frac{1}{2} &             &             &            & \dots \\
      1           & \frac{1}{2} &             &             \\
      1           & 1           & \frac{1}{2} &             \\
      1           & 1           & 1           & \frac{1}{2} \\
      \vdots      &             &             &              & \ddots \\
    \end{bmatrix}
    .
  \end{align*}
  Adding the matrix of all \( \frac{1}{2} \), which is rank 1, yields
  \begin{align*}
    -
    \begin{bmatrix}
      & -\frac{1}{2} & -\frac{1}{2} & -\frac{1}{2} \\
      \frac{1}{2} &              & -\frac{1}{2} & -\frac{1}{2} \\
      \frac{1}{2} & \frac{1}{2}  &              & -\frac{1}{2} \\
      \frac{1}{2} & \frac{1}{2}  & \frac{1}{2}  &              \\
    \end{bmatrix}
    .
  \end{align*}
  This matrix is now skew-symmetric.
  Skew-symmetric matrices are a particular case of normal matrices
  with pure-imaginary eigenvalues.

  \citet{gu2020hippo} also consider a case of HiPPO corresponding to the generalized Laguerre polynomials that generalizes
  the above HiPPO-LagT case.
  In this case, the matrix \( \bm{A} \) (up to conjugation by a diagonal matrix) ends up being close to the above matrix,
  but with a different element on the diagonal.
  After adding the rank-1 correction, it becomes the above skew-symmetric matrix plus a multiple of the identity.
  Thus after diagonalization by the same matrix as in the LagT case, it is still reduced to diagonal plus low-rank (DPLR) form,
  where the diagonal is now pure imaginary plus a real constant.

  \textbf{HiPPO-LegS.}
  We restate the formula from equation \eqref{eq:hippo} for convenience.
  \begin{align*}
    \bm{A}_{nk}
    =
    -
    \begin{cases}
      (2n+1)^{1/2}(2k+1)^{1/2} & \mbox{if } n > k \\
      n+1                      & \mbox{if } n = k \\
      0                        & \mbox{if } n < k
    \end{cases}
    .
  \end{align*}
  Adding \( \frac{1}{2}(2n+1)^{1/2}(2k+1)^{1/2} \) to the whole matrix gives
  \begin{align*}
    -
    \begin{cases}
      \frac{1}{2} (2n+1)^{1/2}(2k+1)^{1/2}  & \mbox{if } n > k \\
      \frac{1}{2}                           & \mbox{if } n = k \\
      -\frac{1}{2} (2n+1)^{1/2}(2k+1)^{1/2} & \mbox{if } n < k \\
    \end{cases}
  \end{align*}

  Note that this matrix is not skew-symmetric,
  but is \( \frac{1}{2}\bm{I} + \bm{S} \) where \( \bm{S} \) is a skew-symmetric matrix.
  This is diagonalizable by the same unitary matrix that diagonalizes \( \bm{S} \).

  \textbf{HiPPO-LegT.}

  Up to the diagonal scaling,
  the LegT matrix is
  \begin{align*}
    \bm{A} =
    -
    \begin{bmatrix}
      1      & -1 & 1  & -1  & \dots \\
      1      & 1  & -1 & 1  \\
      1      & 1  & 1  & -1 \\
      1      & 1  & 1  & 1  \\
      \vdots &    &    &     & \ddots
    \end{bmatrix}
    .
  \end{align*}
  By adding \( -1 \) to this matrix and then the matrix
  \begin{align*}
    \begin{bmatrix}
      &  &   &  &  \\
      2 &  & 2 &  &  \\
      &  &   &  &  \\
      2 &  & 2 &  &  \\
    \end{bmatrix}
  \end{align*}
  the matrix becomes
  \begin{align*}
    \begin{bmatrix}
      & -2 &   & -2 &  \\
      2 &    &   &    &  \\
      &    &   & -2 &  \\
      2 &    & 2 &    &  \\
    \end{bmatrix}
  \end{align*}
  which is skew-symmetric.
  In fact, this matrix is the inverse of the Chebyshev Jacobi.

  An alternative way to see this is as follows.
  The LegT matrix is the inverse of the matrix
  \begin{align*}
    \begin{bmatrix}
      -1 & 1  &    & 0  \\
      -1 &    & 1  &    \\
      & -1 &    & 1  \\
      &    & -1 & -1 \\
    \end{bmatrix}
  \end{align*}
  This can obviously be converted to a skew-symmetric matrix by adding a rank 2 term.
  The inverses of these matrices are also rank-2 differences from each other by the Woodbury identity.

  A final form is
  \begin{align*}
    \begin{bmatrix}
      -1 & 1 & -1 & 1 \\
      -1 & -1 & 1 & -1 \\
      -1 & -1 & -1 & 1 \\
      -1 & -1 & -1 & -1 \\
    \end{bmatrix}
    +
    \begin{bmatrix}
      1 & 0 & 1 & 0 \\
      0 & 1 & 0 & 1 \\
      1 & 0 & 1 & 0 \\
      0 & 1 & 0 & 1 \\
    \end{bmatrix}
    =
    \begin{bmatrix}
      0 & 1 & 0 & 1 \\
      -1 & 0 & 1 & 0 \\
      0 & -1 & 0 & 1 \\
      -1 & 0 & -1 & 0 \\
    \end{bmatrix}
  \end{align*}
  This has the advantage that the rank-2 correction is symmetric (like the others),
  but the normal skew-symmetric matrix is now \( 2 \)-quasiseparable instead of \( 1 \)-quasiseparable.

\end{proof}

\subsection{Computing the \methodabbrv{} Recurrent View}
\label{sec:s4-recurrence-proof}

We prove \cref{thm:s4-recurrence} showing the efficiency of the \methodabbrv{} parameterization for computing one step of the recurrent representation (\cref{sec:ss-recurrent}).

Recall that without loss of generality, we can assume that the state matrix \( \bm{A} = \bm{\Lambda} - \bm{P}\bm{Q}^* \) is diagonal plus low-rank (DPLR), potentially over \( \mathbbm{C} \).
Our goal in this section is to explicitly write out a closed form for the discretized matrix \( \bm{\overline{A}} \).

Recall from equation \eqref{eq:2} that
\begin{align*}
  \bm{\overline{A}} &= (\bm{I} - \dt/2 \cdot \bm{A})^{-1}(\bm{I} + \dt/2 \cdot \bm{A}) \\
  \bm{\overline{B}} &= (\bm{I} - \dt/2 \cdot \bm{A})^{-1} \dt \bm{B}
  .
\end{align*}

We first simplify both terms in the definition of \( \bm{\overline{A}} \) independently.

\textbf{Forward discretization.}
The first term is essentially the Euler discretization motivated in \cref{sec:ss-recurrent}.
\begin{align*}
  \bm{I} + \frac{\dt}{2} \bm{A}
  &= \bm{I} + \frac{\dt}{2} (\bm{\Lambda} - \bm{P} \bm{Q}^*)
  \\&= \frac{\dt}{2} \left[ \frac{2}{\dt}\bm{I} + (\bm{\Lambda} - \bm{P} \bm{Q}^*) \right]
  \\&= \frac{\dt}{2} \bm{A_0}
\end{align*}
where \( \bm{A_0} \) is defined as the term in the final brackets.

\textbf{Backward discretization.}
The second term is known as the Backward Euler's method.
Although this inverse term is normally difficult to deal with,
in the DPLR case we can simplify it using Woodbury's Identity (\cref{prop:woodbury}).
\begin{align*}
  \left( \bm{I} - \frac{\dt}{2} \bm{A} \right)^{-1}
  &=
  \left( \bm{I} - \frac{\dt}{2} (\bm{\Lambda} - \bm{P} \bm{Q}^*) \right)^{-1}
  \\&=
  \frac{2}{\dt} \left[ \frac{2}{\dt} - \bm{\Lambda} + \bm{P} \bm{Q}^* \right]^{-1}
  \\&=
  \frac{2}{\dt} \left[ \bm{D} - \bm{D} \bm{P} \left( \bm{I} + \bm{Q}^* \bm{D} \bm{P} \right)^{-1} \bm{Q}^* \bm{D} \right]
  \\&= \frac{2}{\dt} \bm{A_1}
\end{align*}
where \( \bm{D} = \left( \frac{2}{\dt}-\bm{\Lambda} \right)^{-1} \)
and \( \bm{A_1} \) is defined as the term in the final brackets.
Note that
\( \left( 1 + \bm{Q}^* \bm{D} \bm{P} \right) \)
is actually a scalar in the case when the low-rank term has rank \( 1 \).

\textbf{\methodabbrv{} Recurrence.}
Finally, the full bilinear discretization can be rewritten in terms of these matrices as
\begin{align*}
  \bm{\overline{A}} &= \bm{A_1} \bm{A_0} \\
  \bm{\overline{B}} &= \frac{2}{\dt} \bm{A_1} \dt \bm{B} = 2 \bm{A_1} \bm{B}
  .
\end{align*}
The discrete-time SSM \eqref{eq:2} becomes
\begin{align*}
  x_{k} &= \bm{\overline{A}} x_{k-1} + \bm{\overline{B}} u_k \\
  &= \bm{A_1} \bm{A_0} x_{k-1} + 2 \bm{A_1} \bm{B} u_k \\
  y_k &= \bm{C} x_k
  .
\end{align*}
Note that \( \bm{A_0}, \bm{A_1} \) are accessed only through matrix-vector multiplications.
Since they are both DPLR, they have \( O(N) \) matrix-vector multiplication,
showing \cref{thm:s4-recurrence}.

\subsection{Computing the Convolutional View}
\label{sec:s4-convolution-proof}

The most involved part of using SSMs efficiently is computing \( \bm{\overline{K}} \).
This algorithm was sketched in \cref{sec:s4-overview} and is the main motivation for the \methodabbrv{} parameterization.
In this section, we define the necessary intermediate quantities and prove the main technical result. %

The algorithm for \cref{thm:s4-convolution} falls in roughly three stages, leading to \cref{alg:s4-convolution}.
Assuming \( \bm{A} \) has been conjugated into diagonal plus low-rank form, we successively simplify the problem of computing \( \bm{\overline{K}} \)
by applying the techniques outlined in \cref{sec:s4-overview}.

\begin{remark}
  \textbf{We note that for the remainder of this section}, we transpose \( \bm{C} \) to be a column vector of shape \( \mathbbm{C}^{N} \) or \( \mathbbm{C}^{N \times 1} \) instead of matrix or row vector \( \mathbbm{C}^{1 \times N} \) as in \eqref{eq:1}.
  In other words the SSM is
  \begin{equation}
    \begin{aligned}
      x'(t) &= \bm{A}x(t) + \bm{B}u(t) \\
      y(t) &= \bm{C}^* x(t) + \bm{D}u(t)
      .
    \end{aligned}
  \end{equation}
  This convention is made so that \( \bm{C} \) has the same shape as \( \bm{B}, \bm{P}, \bm{Q} \) and simplifies the implementation of S4.
\end{remark}

\paragraph{Reduction 0: Diagonalization}
By \cref{lmm:conjugation}, we can switch the representation by conjugating with any unitary matrix.
For the remainder of this section, we can assume that \( \bm{A} \) is (complex) diagonal plus low-rank (DPLR).

Note that unlike diagonal matrices, a DPLR matrix does not lend itself to efficient computation of \( \bm{\overline{K}} \).
The reason is that \( \bm{\overline{K}} \) computes terms \( \bm{\overline{C}}^* \bm{\overline{A}}^i \bm{\overline{B}} \) which involve powers of the matrix \( \bm{\overline{A}} \).
These are trivially computable when \( \bm{\overline{A}} \) is diagonal, but is no longer possible for even simple modifications to diagonal matrices such as DPLR.

\paragraph{Reduction 1: SSM Generating Function}

To address the problem of computing powers of \( \bm{\overline{A}} \), we introduce another technique.
Instead of computing the SSM convolution filter \( \bm{\overline{K}} \) directly,
we introduce a generating function on its coefficients and compute evaluations of it.

\begin{definition}[SSM Generating Function]%
  \label{def:generating-function}
  We define the following quantities:
  \begin{itemize}%
    \item The \emph{SSM convolution function} is \( \mathcal{K}(\bm{\overline{A}}, \bm{\overline{B}}, \bm{\overline{C}}) = (\bm{\overline{C}}^*\bm{\overline{B}}, \bm{\overline{C}}^*\bm{\overline{A}}\bm{\overline{B}}, \dots) \)
      and the (truncated) SSM filter of length \( L \)
      \begin{equation}
        \mathcal{K}_L(\bm{\overline{A}}, \bm{\overline{B}}, \bm{\overline{C}}) = (\bm{\overline{C}}^*\bm{\overline{B}}, \bm{\overline{C}}^*\bm{\overline{A}}\bm{\overline{B}}, \dots, \bm{\overline{C}}^*\bm{\overline{A}}^{L-1}\bm{\overline{B}}) \in \mathbbm{R}^L
      \end{equation}
    \item The \emph{SSM generating function} at node \( z \) is
      \begin{equation}
        \label{eq:generating-function}
        \hat{\mathcal{K}}(z; \bm{\overline{A}}, \bm{\overline{B}}, \bm{\overline{C}}) \in \mathbbm{C} := \sum_{i=0}^\infty \bm{\overline{C}}^* \bm{\overline{A}}^i \bm{\overline{B}} z^i
        = \bm{\overline{C}}^* (\bm{I} - \bm{\overline{A}} z)^{-1} \bm{\overline{B}}
      \end{equation}
      and the \emph{truncated SSM generating function} at node \( z \) is
      \begin{equation}
        \hat{\mathcal{K}}_L(z; \bm{\overline{A}}, \bm{\overline{B}}, \bm{\overline{C}})^* \in \mathbbm{C} := \sum_{i=0}^{L-1} \bm{\overline{C}}^* \bm{\overline{A}}^i \bm{\overline{B}} z^i
        = \bm{\overline{C}}^* (\bm{I} - \bm{\overline{A}}^L z^L) (\bm{I} - \bm{\overline{A}} z)^{-1} \bm{\overline{B}}
      \end{equation}
    \item The truncated SSM generating function at nodes \( \Omega \in \mathbbm{C}^M \) is
      \begin{equation}
        \hat{\mathcal{K}}_L(\Omega; \bm{\overline{A}}, \bm{\overline{B}}, \bm{\overline{C}}) \in \mathbbm{C}^M := \left( \hat{\mathcal{K}}_L(\omega_k; \bm{\overline{A}}, \bm{\overline{B}}, \bm{\overline{C}}) \right)_{k \in [M]}
      \end{equation}

  \end{itemize}
\end{definition}

Intuitively, the generating function essentially converts the SSM convolution filter from the time domain to frequency domain.
Importantly, it preserves the same information, and the desired SSM convolution filter can be recovered from evaluations of its generating function.
\begin{lemma}%
  The SSM function \( \mathcal{K}_L(\bm{\overline{A}}, \bm{\overline{B}}, \bm{\overline{C}}) \) can be computed from the SSM generating function \( \hat{\mathcal{K}}_L(\Omega; \bm{\overline{A}}, \bm{\overline{B}}, \bm{\overline{C}}) \)
  at the roots of unity \( \Omega = \{ \exp(-2\pi i \frac{k}{L} : k \in [L] \} \)
  stably in \( O(L \log L) \) operations.
\end{lemma}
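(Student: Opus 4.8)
The plan is to observe that the truncated SSM generating function is, as a function of its node $z$, exactly the polynomial whose coefficient vector is the desired filter $\mathcal{K}_L$, and then invoke the fact that evaluating a polynomial at the $L$-th roots of unity is a discrete Fourier transform. Concretely, write $\bm{\overline{K}}_i = \bm{\overline{C}}^* \bm{\overline{A}}^i \bm{\overline{B}}$ for $i \in [L]$, so that $\mathcal{K}_L(\bm{\overline{A}}, \bm{\overline{B}}, \bm{\overline{C}}) = (\bm{\overline{K}}_0, \dots, \bm{\overline{K}}_{L-1})$. By \cref{def:generating-function}, for any node $z$ we have $\hat{\mathcal{K}}_L(z; \bm{\overline{A}}, \bm{\overline{B}}, \bm{\overline{C}}) = \sum_{i=0}^{L-1} \bm{\overline{K}}_i z^i$, a polynomial of degree less than $L$ whose coefficients are precisely the entries of $\mathcal{K}_L$.

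Next I would specialize the nodes to the roots of unity $\Omega = \{\omega^k : k \in [L]\}$ with $\omega = \exp(-2\pi i / L)$. Then $\hat{\mathcal{K}}_L(\Omega) = \big(\sum_{i=0}^{L-1} \bm{\overline{K}}_i \omega^{ki}\big)_{k \in [L]}$ is exactly the DFT of the length-$L$ sequence $(\bm{\overline{K}}_i)_i$. Consequently the filter is recovered by the inverse DFT, i.e.\ $\bm{\overline{K}}_i = \frac{1}{L}\sum_{k=0}^{L-1} \hat{\mathcal{K}}_L(\omega^k) \omega^{-ki}$, which is a single inverse FFT and therefore costs $O(L \log L)$ arithmetic operations. (One should double-check the sign/conjugation convention so that the forward transform here pairs with an inverse FFT rather than a forward one; this is the only place a minor bookkeeping slip could occur.)

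For numerical stability, I would note that the DFT matrix $F_L = (\omega^{ki})_{k,i}$ satisfies $F_L^* F_L = L\,\bm{I}$, so $\frac{1}{\sqrt L}F_L$ is unitary; hence both the forward transform and its inverse have condition number $1$ in the $2$-norm, and the standard radix-FFT algorithm computing them is backward stable with error growing only polylogarithmically in $L$. Thus recovering $\mathcal{K}_L$ from $\hat{\mathcal{K}}_L(\Omega)$ is numerically stable. I do not expect a substantive obstacle here — the content is the identification of the generating function with the coefficient polynomial plus the textbook properties of the FFT; the only care needed is in fixing conventions for the root-of-unity exponents and the normalization of the inverse transform.
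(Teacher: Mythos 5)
Your proposal is correct and is essentially identical to the paper's proof: both identify the truncated generating function evaluated at the $L$-th roots of unity with the DFT of the filter coefficients and recover $\bm{\overline{K}}$ by a single inverse FFT in $O(L\log L)$ operations. Your added remark on stability via the unitarity of $\frac{1}{\sqrt{L}}\mathcal{F}_L$ is a welcome elaboration of the word ``stably,'' which the paper asserts without further justification.
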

\begin{proof}%
  For convenience define
  \begin{align*}
    \bm{\overline{K}} &= \mathcal{K}_L(\bm{\overline{A}}, \bm{\overline{B}}, \bm{\overline{C}}) \\
    \bm{\hat{K}} &=  \hat{\mathcal{K}}_L(\Omega; \bm{\overline{A}}, \bm{\overline{B}}, \bm{\overline{C}}) \\
    \bm{\hat{K}}(z) &=  \hat{\mathcal{K}}_L(z; \bm{\overline{A}}, \bm{\overline{B}}, \bm{\overline{C}})
    .
  \end{align*}
  Note that
  \begin{align*}
    \bm{\hat{K}}_j = \sum_{k=0}^{L-1} \bm{\overline{K}}_k \exp\left(-2\pi i \frac{jk}{L}\right).
  \end{align*}
  Note that this is exactly the same as the Discrete Fourier Transform (DFT):
  \begin{align*}
    \bm{\hat{K}} = \mathcal{F}_L \bm{K}.
  \end{align*}
  Therefore \( \bm{K} \) can be recovered from \( \bm{\hat{K}} \) with a single inverse DFT,
  which requires \( O(L \log L) \) operations with the Fast Fourier Transform (FFT) algorithm.
\end{proof}

\paragraph{Reduction 2: Woodbury Correction}

The primary motivation of \cref{def:generating-function} is that it turns \emph{powers} of \( \bm{\overline{A}} \) into a single \emph{inverse} of \( \bm{\overline{A}} \) (equation \eqref{eq:generating-function}).
While DPLR matrices cannot be powered efficiently due to the low-rank term, they can be inverted efficiently by the well-known Woodbury identity.

\begin{proposition}[Binomial Inverse Theorem or Woodbury matrix identity~\cite{woodbury1950,golub2013matrix}]
  \label{prop:woodbury}
  Over a commutative ring $\mathcal{R}$, let $\bm{A} \in \mathcal{R}^{N \times N}$ and $\bm{U},\bm{V} \in \mathcal{R}^{N \times p}$. Suppose $\bm{A}$ and $\bm{A}+\bm{U}\bm{V}^*$ are invertible. Then $\bm{I}_p + \bm{V}^*\bm{A}^{-1}\bm{U} \in \mathcal{R}^{p \times p}$ is invertible and
  \begin{align*}
    (\bm{A} + \bm{U}\bm{V}^*)^{-1} = \bm{A}^{-1} - \bm{A}^{-1}\bm{U}(\bm{I}_p + \bm{V}^*\bm{A}^{-1}\bm{U})^{-1}\bm{V}^*\bm{A}^{-1}
  \end{align*}
\end{proposition}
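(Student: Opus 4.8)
The plan is to prove the proposition in two stages: first show that $\bm{I}_p + \bm{V}^*\bm{A}^{-1}\bm{U}$ is invertible over $\mathcal{R}$, and then verify the stated formula for $(\bm{A}+\bm{U}\bm{V}^*)^{-1}$ by a direct algebraic computation. The invertibility is the part that needs care, because over a general commutative ring one cannot argue by ranks, dimensions, or a continuity/perturbation trick; everything must be purely algebraic.

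For the invertibility step I would introduce the block matrix $\bm{M} = \begin{bmatrix} \bm{A} & \bm{U} \\ -\bm{V}^* & \bm{I}_p \end{bmatrix} \in \mathcal{R}^{(N+p)\times(N+p)}$ and write it in two ways as a product of a block-unitriangular matrix, a block-diagonal matrix, and another block-unitriangular matrix. Block-eliminating against the $\bm{A}$ block (legitimate since $\bm{A}$ is invertible) produces the block-diagonal factor $\diag\!\left(\bm{A},\, \bm{I}_p + \bm{V}^*\bm{A}^{-1}\bm{U}\right)$, while block-eliminating against the $\bm{I}_p$ block produces the block-diagonal factor $\diag\!\left(\bm{A}+\bm{U}\bm{V}^*,\, \bm{I}_p\right)$. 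Unitriangular matrices are invertible over any commutative ring, so these two block-diagonal matrices are equivalent; since $\bm{A}+\bm{U}\bm{V}^*$ is invertible by hypothesis, the second is invertible, hence so is the first. Taking determinants (the determinant of a block-diagonal matrix is the product of the block determinants, and a product of ring elements is a unit iff each factor is a unit) then forces $\det(\bm{I}_p + \bm{V}^*\bm{A}^{-1}\bm{U})$ to be a unit, so that matrix is invertible over $\mathcal{R}$. This is just the Sylvester/Weinstein--Aronszajn determinant identity $\det(\bm{I}_N + \bm{A}^{-1}\bm{U}\bm{V}^*) = \det(\bm{I}_p + \bm{V}^*\bm{A}^{-1}\bm{U})$, packaged so as to make sense over $\mathcal{R}$.

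Given invertibility, set $\bm{W} = \bm{A}^{-1} - \bm{A}^{-1}\bm{U}(\bm{I}_p + \bm{V}^*\bm{A}^{-1}\bm{U})^{-1}\bm{V}^*\bm{A}^{-1}$. I would compute $(\bm{A}+\bm{U}\bm{V}^*)\bm{W}$, expand it into four terms, and collect the two terms that carry $(\bm{I}_p + \bm{V}^*\bm{A}^{-1}\bm{U})^{-1}$ on the right; their combined left coefficient is $-\bm{U}(\bm{I}_p + \bm{V}^*\bm{A}^{-1}\bm{U})$, so after multiplying by the inverse they telescope to $-\bm{U}\bm{V}^*\bm{A}^{-1}$, which cancels the leftover $\bm{U}\bm{V}^*\bm{A}^{-1}$ term and leaves $\bm{I}_N$. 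Because a one-sided inverse of a square matrix over a commutative ring is automatically two-sided (again by the unit-determinant argument), this single verification suffices to conclude $\bm{W} = (\bm{A}+\bm{U}\bm{V}^*)^{-1}$. Alternatively, one could skip the explicit multiplication entirely and simply read the identity off the two factorizations of $\bm{M}$ above by inverting each factorization and equating the $(1,1)$ blocks of $\bm{M}^{-1}$.

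The main obstacle, then, is purely the invertibility of $\bm{I}_p + \bm{V}^*\bm{A}^{-1}\bm{U}$ in the ring setting; once that is in hand the identity is a routine manipulation with no appeal to the base ring being a field. Since the application in this paper only uses the real or complex case (indeed $p=1$ for the HiPPO-LegS matrix, where $\bm{I}_p + \bm{V}^*\bm{A}^{-1}\bm{U}$ is a scalar), a reader could also be content with the field version where invertibility of $\bm{I}_p + \bm{V}^*\bm{A}^{-1}\bm{U}$ follows from $\bm{A}+\bm{U}\bm{V}^* = \bm{A}(\bm{I}_N + \bm{A}^{-1}\bm{U}\bm{V}^*)$ being invertible together with the classical Sylvester rank/determinant identity.
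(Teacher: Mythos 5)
The paper does not actually prove \cref{prop:woodbury}: it is stated as a classical result and attributed to the literature (Woodbury 1950; Golub--Van Loan), so there is no in-paper argument to compare yours against. That said, your proposal is a correct and complete proof, and it handles the one point that genuinely requires care in the stated generality --- invertibility of \( \bm{I}_p + \bm{V}^*\bm{A}^{-1}\bm{U} \) over a commutative ring, where rank and perturbation arguments are unavailable. The two block-LDU factorizations of \( \begin{bmatrix} \bm{A} & \bm{U} \\ -\bm{V}^* & \bm{I}_p \end{bmatrix} \) give \( \det(\bm{A})\det(\bm{I}_p + \bm{V}^*\bm{A}^{-1}\bm{U}) = \det(\bm{A}+\bm{U}\bm{V}^*) \) since the unitriangular factors have determinant \( 1 \), and combined with the facts that a square matrix over a commutative ring is invertible iff its determinant is a unit and that a factor of a unit is a unit, this yields the claimed invertibility; the direct expansion of \( (\bm{A}+\bm{U}\bm{V}^*)\bm{W} \) then telescopes to \( \bm{I}_N \) exactly as you describe, and the one-sided-implies-two-sided step is again justified by the unit-determinant criterion. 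Your closing remark is also apt: for the use made of this proposition in \cref{lmm:resolvent-woodbury} and \cref{sec:s4-recurrence-proof}, only the complex case with \( p \in \{1,2\} \) is needed, where \( \bm{I}_p + \bm{Q}^*\bm{R}(z)\bm{P} \) is a scalar or \( 2\times 2 \) matrix and the classical field-based argument suffices.
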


With this identity, we can convert the SSM generating function on a DPLR matrix \( \bm{A} \) into one on just its diagonal component.

\begin{lemma}%
  \label{lmm:resolvent-woodbury}
  Let \( \bm{A} = \bm{\Lambda} - \bm{P} \bm{Q}^* \) be a diagonal plus low-rank representation.
  Then for any root of unity \( z \in \Omega \), the truncated generating function satisfies
  \begin{align*}
    \bm{\hat{K}}(z) &=
    \frac{2}{1+z}\left[ \bm{\tilde{C}}^* \bm{R}(z) \bm{B} - \bm{\tilde{C}}^* \bm{R}(z) \bm{P} \left( 1 + \bm{Q}^* \bm{R}(z) \bm{P} \right)^{-1} \bm{Q}^* \bm{R}(z) \bm{B} \right]
    \\
    \bm{\tilde{C}} &= (\bm{I}-\bm{\overline{A}}^L)^* \bm{C}
    \\
    \bm{R}(z; \bm{\Lambda}) &= \left(\frac{2}{\dt} \frac{1-z}{1+z} - \bm{\Lambda}\right)^{-1}
    .
  \end{align*}
\end{lemma}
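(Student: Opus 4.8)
The plan is to peel back the definition of the truncated SSM generating function one layer at a time: first use the root-of-unity property to reduce $\bm{\hat{K}}(z)$ to a single resolvent of $\bm{\overline{A}}$, then rewrite that resolvent in terms of the continuous-time $\bm{A}$ via the bilinear formulas of \eqref{eq:2}, and finally apply the Woodbury identity (\cref{prop:woodbury}) to the diagonal-plus-low-rank structure $\bm{A} = \bm{\Lambda} - \bm{P}\bm{Q}^*$.

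\textbf{Step 1 (reduce to a resolvent).} From the closed form in \cref{def:generating-function}, $\bm{\hat{K}}(z) = \bm{\overline{C}}^*(\bm{I} - \bm{\overline{A}}^L z^L)(\bm{I} - z\bm{\overline{A}})^{-1}\bm{\overline{B}}$. Since $z$ is an $L$-th root of unity, $z^L = 1$, so the middle factor collapses to $\bm{I} - \bm{\overline{A}}^L$; using $\bm{\overline{C}} = \bm{C}$ we absorb it into $\bm{\overline{C}}^*(\bm{I} - \bm{\overline{A}}^L) = \bm{\tilde{C}}^*$ with $\bm{\tilde{C}} = (\bm{I} - \bm{\overline{A}}^L)^*\bm{C}$, exactly as in the statement. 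This leaves $\bm{\hat{K}}(z) = \bm{\tilde{C}}^*(\bm{I} - z\bm{\overline{A}})^{-1}\bm{\overline{B}}$, so everything reduces to evaluating $(\bm{I} - z\bm{\overline{A}})^{-1}\bm{\overline{B}}$.

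\textbf{Step 2 (pass to the continuous $\bm{A}$).} Substituting $\bm{\overline{A}} = (\bm{I} - \frac{\dt}{2}\bm{A})^{-1}(\bm{I} + \frac{\dt}{2}\bm{A})$ and factoring $(\bm{I} - \frac{\dt}{2}\bm{A})^{-1}$ out on the left gives
\[
  \bm{I} - z\bm{\overline{A}} = (\bm{I} - \tfrac{\dt}{2}\bm{A})^{-1}\left[(1-z)\bm{I} - \tfrac{\dt}{2}(1+z)\bm{A}\right].
\]
Inverting this and multiplying by $\bm{\overline{B}} = (\bm{I} - \frac{\dt}{2}\bm{A})^{-1}\dt\bm{B}$, the two copies of $(\bm{I} - \frac{\dt}{2}\bm{A})^{\pm 1}$ cancel; pulling the scalar $\frac{\dt}{2}(1+z)$ out of the remaining bracket then yields
\[
  (\bm{I} - z\bm{\overline{A}})^{-1}\bm{\overline{B}} = \frac{2}{1+z}\left(\frac{2}{\dt}\frac{1-z}{1+z}\bm{I} - \bm{A}\right)^{-1}\bm{B},
\]
and hence $\bm{\hat{K}}(z) = \frac{2}{1+z}\,\bm{\tilde{C}}^*\left(\frac{2}{\dt}\frac{1-z}{1+z}\bm{I} - \bm{A}\right)^{-1}\bm{B}$. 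This is the source of the $\frac{2}{1+z}$ prefactor and the node $\frac{2}{\dt}\frac{1-z}{1+z}$ appearing in $\bm{R}(z;\bm{\Lambda})$.

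\textbf{Step 3 (Woodbury).} Writing $\bm{A} = \bm{\Lambda} - \bm{P}\bm{Q}^*$, the inner matrix becomes $\left(\frac{2}{\dt}\frac{1-z}{1+z}\bm{I} - \bm{\Lambda}\right) + \bm{P}\bm{Q}^* = \bm{R}(z)^{-1} + \bm{P}\bm{Q}^*$. Applying \cref{prop:woodbury} with $\bm{U} = \bm{P}$, $\bm{V} = \bm{Q}$ gives $\bm{R}(z) - \bm{R}(z)\bm{P}(\bm{I} + \bm{Q}^*\bm{R}(z)\bm{P})^{-1}\bm{Q}^*\bm{R}(z)$; sandwiching between $\bm{\tilde{C}}^*$ and $\bm{B}$ and keeping $\frac{2}{1+z}$ out front reproduces the claimed identity verbatim (with the inner inverse a $1\times 1$ scalar when $r = 1$, as noted after \cref{prop:woodbury}).

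I do not expect a genuine obstacle here; the content is a bookkeeping chain of identities, and the only place to be careful is well-definedness of the three inverses. One needs $z \neq -1$, invertibility of $\bm{I} - z\bm{\overline{A}}$ (equivalently $z^{-1}$ not an eigenvalue of $\bm{\overline{A}}$), invertibility of $\frac{2}{\dt}\frac{1-z}{1+z}\bm{I} - \bm{\Lambda}$ so that $\bm{R}(z)$ exists, and invertibility of $\bm{I} + \bm{Q}^*\bm{R}(z)\bm{P}$ — the last of which is automatic from the Woodbury hypothesis once $\frac{2}{\dt}\frac{1-z}{1+z}\bm{I} - \bm{A}$ is invertible. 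For the nodes $z$ used (roots of unity) and the parameter regimes of interest these hold, and can be stated as mild genericity conditions on $\dt$ and the parameters; I would simply flag them rather than belabor them.
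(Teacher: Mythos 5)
Your proof is correct and takes essentially the same route as the paper's: reduce the truncated generating function to a single resolvent of \( \bm{\overline{A}} \) using \( z^L = 1 \), convert to the continuous-time resolvent via the bilinear discretization (the paper factors this step out as \cref{lmm:bilinear-resolvent}, whereas you derive it inline), and finish with the Woodbury identity applied to \( \bm{R}(z)^{-1} + \bm{P}\bm{Q}^* \). Your explicit attention to the invertibility conditions is a minor addition the paper leaves implicit.
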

\begin{proof}%
  Directly expanding \cref{def:generating-function} yields
  \begin{align*}
    \mathcal{K}_L(z; \bm{\overline{A}}, \bm{\overline{B}}, \bm{\overline{C}})
    &=
    \bm{\overline{C}}^* \bm{\overline{B}} + \bm{\overline{C}}^* \bm{\overline{A}} \bm{\overline{B}} z + \dots + \bm{\overline{C}}^* \bm{\overline{A}}^{L-1} \bm{\overline{B}} z^{L-1}
    \\&=
    \bm{\overline{C}}^* \left(\bm{I}-\bm{\overline{A}}^L\right) \left(\bm{I} - \bm{\overline{A}} z\right)^{-1} \bm{\overline{B}}
    \\&=
    \bm{\tilde{C}}^* \left(\bm{I} - \bm{\overline{A}} z\right)^{-1} \bm{\overline{B}}
  \end{align*}
  where \(  \bm{\tilde{C}}^* = \bm{C}^* \left(\bm{I}-\bm{\overline{A}}^L\right) \).

  We can now explicitly expand the discretized SSM matrices \( \bm{\overline{A}} \) and \( \bm{\overline{B}} \) back in terms of the original SSM parameters \( \bm{A} \) and \( \bm{B} \).
  \cref{lmm:bilinear-resolvent} provides an explicit formula, which allows further simplifying
  \begin{align*}
    \bm{\tilde{C}}^* \left(\bm{I} - \bm{\overline{A}} z\right)^{-1} \bm{\overline{B}}
    &= \frac{2}{1+z} \bm{\tilde{C}}^* \left(\frac{2}{\Delta} \frac{1-z}{1+z} - \bm{A}\right)^{-1}  \bm{B}
    \\&=
    \frac{2}{1+z} \bm{\tilde{C}}^* \left(\frac{2}{\Delta} \frac{1-z}{1+z} - \bm{\Lambda} + \bm{P}\bm{Q}^* \right)^{-1} \bm{B}
    \\&=
    \frac{2}{1+z}\left[ \bm{\tilde{C}}^* \bm{R}(z) \bm{B} - \bm{\tilde{C}}^* \bm{R}(z) \bm{P} \left( 1 + \bm{Q}^* \bm{R}(z) \bm{P} \right)^{-1} \bm{Q}^* \bm{R}(z) \bm{B} \right]
    .
  \end{align*}
  The last line applies the Woodbury Identity (\cref{prop:woodbury}) where \( \bm{R}(z) = \left(\frac{2}{\Delta} \frac{1-z}{1+z} - \bm{\Lambda}\right)^{-1} \).
\end{proof}

The previous proof used the following self-contained result to back out the original SSM matrices from the discretization.
\begin{lemma}%
  \label{lmm:bilinear-resolvent}
  Let \( \bm{\overline{A}}, \bm{\overline{B}} \) be the SSM matrices \( \bm{A}, \bm{B} \) discretized by the bilinear discretization with step size \( \dt \). Then
  \begin{align*}
    \bm{C}^*\left(\bm{I} - \bm{\overline{A}z} \right)^{-1} \bm{\overline{B}}
    =
    \frac{2\Delta}{1+z} \bm{C}^* \left[ {2 \frac{1-z}{1+z}} - \Delta \bm{A} \right]^{-1} \bm{B}
  \end{align*}
\end{lemma}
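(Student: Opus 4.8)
The plan is a direct computation from the bilinear discretization \eqref{eq:2}, with one small algebraic trick. First I would introduce shorthand for the two factors appearing in \eqref{eq:2}: write \( \bm{M}_- = \bm{I} - \frac{\dt}{2}\bm{A} \) and \( \bm{M}_+ = \bm{I} + \frac{\dt}{2}\bm{A} \), so that \( \bm{\overline{A}} = \bm{M}_-^{-1}\bm{M}_+ \) and \( \bm{\overline{B}} = \dt\,\bm{M}_-^{-1}\bm{B} \). The key observation is that
\[
  \bm{I} - \bm{\overline{A}}z = \bm{I} - z\,\bm{M}_-^{-1}\bm{M}_+ = \bm{M}_-^{-1}\left(\bm{M}_- - z\bm{M}_+\right),
\]
so that \( (\bm{I} - \bm{\overline{A}}z)^{-1} = (\bm{M}_- - z\bm{M}_+)^{-1}\bm{M}_- \), and the leading \( \bm{M}_-^{-1} \) in \( \bm{\overline{B}} \) cancels exactly against this trailing \( \bm{M}_- \), leaving
\[
  (\bm{I} - \bm{\overline{A}}z)^{-1}\bm{\overline{B}} = \dt\,(\bm{M}_- - z\bm{M}_+)^{-1}\bm{B}.
\]

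Next I would simplify the remaining factor. Since \( \bm{M}_- - z\bm{M}_+ = (1-z)\bm{I} - \frac{\dt}{2}(1+z)\bm{A} = \frac{1+z}{2}\bigl[\,2\frac{1-z}{1+z}\bm{I} - \dt\bm{A}\,\bigr] \), inverting and pulling out the scalar \( \frac{1+z}{2} \) gives \( (\bm{M}_- - z\bm{M}_+)^{-1} = \frac{2}{1+z}\bigl[\,2\frac{1-z}{1+z} - \dt\bm{A}\,\bigr]^{-1} \). Substituting back, multiplying on the left by \( \bm{C}^* \), and collecting the constant \( \dt \cdot \frac{2}{1+z} = \frac{2\dt}{1+z} \) yields
\[
  \bm{C}^*(\bm{I} - \bm{\overline{A}}z)^{-1}\bm{\overline{B}} = \frac{2\dt}{1+z}\,\bm{C}^*\left[\,2\frac{1-z}{1+z} - \dt\bm{A}\,\right]^{-1}\bm{B},
\]
which is the claimed identity. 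I would close by noting that pulling the \( \dt \) inside the bracket gives the equivalent form \( \frac{2}{1+z}\bm{C}^*\bigl(\frac{2}{\dt}\frac{1-z}{1+z} - \bm{A}\bigr)^{-1}\bm{B} \) used in \cref{lmm:resolvent-woodbury}.

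There is essentially no obstacle here beyond bookkeeping; the only points worth stating explicitly are (i) that \( \bm{M}_- \), \( \bm{M}_+ \) and their inverses are all rational functions of the single matrix \( \bm{A} \), hence commute, which is what licenses the factorization \( \bm{I} - z\bm{M}_-^{-1}\bm{M}_+ = \bm{M}_-^{-1}(\bm{M}_- - z\bm{M}_+) \) and the subsequent cancellation; and (ii) that invertibility of \( \bm{M}_- - z\bm{M}_+ \), equivalently of \( 2\frac{1-z}{1+z}\bm{I} - \dt\bm{A} \), is precisely the condition making both sides well-defined, so no extra hypotheses are needed (and for \( z \) a root of unity other than \( -1 \), as used here, \( \frac{1-z}{1+z} \) is purely imaginary while the relevant \( \bm{A} \) has eigenvalues off that line).
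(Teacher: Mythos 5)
Your proposal is correct and follows essentially the same route as the paper's proof: both factor \( \bm{I} - \bm{\overline{A}}z = (\bm{I}-\tfrac{\dt}{2}\bm{A})^{-1}\bigl[(\bm{I}-\tfrac{\dt}{2}\bm{A}) - z(\bm{I}+\tfrac{\dt}{2}\bm{A})\bigr] \), cancel the resulting factor against the one in \( \bm{\overline{B}} \), and then rescale the remaining resolvent (the paper pulls out \( 1-z \) first and then rescales, while you pull out \( \tfrac{1+z}{2} \) in one step — an immaterial difference). Your added remarks on commutativity and invertibility are sound and slightly more careful than the paper's own write-up.
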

\begin{proof}%
  Recall that the bilinear discretization that we use (equation \eqref{eq:2}) is
  \begin{align*}
    \bm{\overline{A}}
    &=
    \left(\bm{I} - \frac{\Delta}{2} \bm{A}\right)^{-1} \left(\bm{I} + \frac{\Delta}{2} \bm{A}\right)
    \\
    \bm{\overline{B}} &= \left(\bm{I} - \frac{\Delta}{2} \bm{A}\right)^{-1} \Delta \bm{B}
  \end{align*}
  The result is proved algebraic manipulations.
  \begin{align*}
    \bm{C}^*\left(\bm{I} - \bm{\overline{A}} z\right)^{-1} \bm{\overline{B}}
    &=
    \bm{C}^* \left[ \left(\bm{I} - \frac{\Delta}{2} \bm{A}\right)^{-1}\left(\bm{I} - \frac{\Delta}{2} \bm{A}\right)  - \left(\bm{I} - \frac{\Delta}{2} \bm{A}\right)^{-1} \left(\bm{I} + \frac{\Delta}{2} \bm{A}\right) z \right]^{-1} \bm{\overline{B}}
    \\&=
    \bm{C}^* \left[ \left(\bm{I} - \frac{\Delta}{2} \bm{A}\right) - \left(\bm{I} + \frac{\Delta}{2} \bm{A}\right) z \right]^{-1} \left(\bm{I} - \frac{\Delta}{2} \bm{A}\right) \bm{\overline{B}}
    \\&=
    \bm{C}^* \left[ \bm{I}(1 - z) - \frac{\Delta}{2} \bm{A} (1+z) \right]^{-1} \Delta\bm{B}
    \\&=
    \frac{\Delta}{1-z} \bm{C}^* \left[ \bm{I} - \frac{\Delta \bm{A}}{2 \frac{1-z}{1+z}} \right]^{-1} \bm{B}
    \\&=
    \frac{2\Delta}{1+z} \bm{C}^* \left[ {2 \frac{1-z}{1+z}} \bm{I} - \Delta \bm{A} \right]^{-1} \bm{B}
  \end{align*}
\end{proof}

Note that in the \methodabbrv{} parameterization, instead of constantly computing \( \bm{\tilde{C}} = \left(\bm{I} - \bm{\overline{A}}^L\right)^* \bm{C} \),
we can simply reparameterize our parameters to learn \( \bm{\tilde{C}} \) directly instead of \( \bm{C} \),
saving a minor computation cost and simplifying the algorithm.

\paragraph{Reduction 3: Cauchy Kernel}
We have reduced the original problem of computing \( \bm{\overline{K}} \) to the problem of computing the SSM generating function \( \hat{\mathcal{K}}_L(\Omega; \bm{\overline{A}}, \bm{\overline{B}}, \bm{\overline{C}}) \)
in the case that \( \bm{\overline{A}} \) is a diagonal matrix.
We show that this is exactly the same as a Cauchy kernel, which is a well-studied problem with fast and stable numerical algorithms.

\begin{definition}%
  \label{def:cauchy}
  A \textbf{Cauchy matrix} or kernel on nodes \( \Omega = (\omega_i) \in \mathbbm{C}^M \) and \( \Lambda = (\lambda_j) \in \mathbbm{C}^N \) is
  \begin{align*}
    \bm{M} \in \mathbbm{C}^{M \times N} &= \bm{M}(\Omega, \Lambda) = (\bm{M}_{ij})_{i \in [M], j \in [N]} \qquad
    \bm{M}_{ij} = \frac{1}{\omega_i - \lambda_j}
    .
  \end{align*}
  The computation time of a Cauchy matrix-vector product of size \( M \times N \) is denoted by \( \mathcal{C}(M, N) \).
\end{definition}

Computing with Cauchy matrices is an extremely well-studied problem in numerical analysis,
with both fast arithmetic algorithms and fast numerical algorithms based on the famous Fast Multipole Method (FMM)
\citep{pan2001structured,pan2015transformations,pan2017fast}.
\begin{proposition}[Cauchy]%
  \label{prop:cauchy}
  A Cauchy kernel requires \( O(M+N) \) space, and operation count
  \begin{align*}
    \mathcal{C}(M, N) =
    \begin{cases}%
      O\left( MN \right)  & \text{naively} \\
      O\left( (M+N) \log^2(M+N) \right) & \text{in exact arithmetic} \\
      O\left( (M+N) \log(M+N) \log \frac{1}{\varepsilon} \right) & \text{numerically to precision \( \varepsilon \)}
      .
    \end{cases}
  \end{align*}
\end{proposition}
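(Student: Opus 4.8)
The plan is to treat the three regimes in the case expression separately, since only the last one requires machinery beyond elementary bookkeeping. The space bound is immediate: a Cauchy matrix $\bm{M}(\Omega,\Lambda)$ is completely specified by the $M+N$ numbers $\omega_1,\dots,\omega_M$ and $\lambda_1,\dots,\lambda_N$, so it never needs to be materialized entrywise, and all of the algorithms below operate on this implicit representation using $O(M+N)$ auxiliary storage. The naive operation count is equally direct: for each output index $i$, the product $(\bm{M}\bm{v})_i = \sum_{j=1}^{N} \frac{v_j}{\omega_i - \lambda_j}$ is a sum of $N$ terms, so computing all $M$ outputs costs $O(MN)$ arithmetic operations with no precomputation. (In the S4 application of \cref{thm:s4-convolution}, the four quantities in step~\ref{step:cauchy} of \cref{alg:s4-convolution} are each weighted Cauchy matrix-vector products, obtained by first forming a Hadamard product of the relevant vectors, so this bound applies verbatim.)

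For the exact-arithmetic bound I would reduce a Cauchy matrix-vector product to fast univariate polynomial arithmetic. Form $p(x) = \prod_{j=1}^{N}(x-\lambda_j)$ with a subproduct tree and compute $p'(x)$; both cost $O(N\log^2 N)$. The partial-fraction identity gives $\sum_{j} \frac{v_j}{x-\lambda_j} = q(x)/p(x)$, where $q$ is the unique polynomial of degree $<N$ satisfying $q(\lambda_j) = v_j\,p'(\lambda_j)$: indeed $q(x) = \sum_j v_j \prod_{k\ne j}(x-\lambda_k)$ and $p'(\lambda_j) = \prod_{k\ne j}(\lambda_j-\lambda_k)$. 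So I would evaluate $p'$ at the nodes $\lambda_j$ by fast multipoint evaluation, multiply pointwise by $v_j$, interpolate to recover $q$ — each step $O(N\log^2 N)$ — and finally evaluate both $q$ and $p$ at the $M$ nodes $\omega_i$ in $O((M+N)\log^2(M+N))$, reading off $(\bm{M}\bm{v})_i = q(\omega_i)/p(\omega_i)$. Summing the stages yields the claimed $O((M+N)\log^2(M+N))$ bound; the fast multipoint evaluation and interpolation subroutines and their complexities are standard \citep{pan2001structured,pan2017fast}.

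For the numerical bound I would invoke the Fast Multipole Method for the Cauchy (Coulomb-type) kernel $\frac{1}{x-y}$. The idea is to partition $\Omega$ and $\Lambda$ into a hierarchy of boxes and, for each pair of well-separated boxes, replace the exact interaction by a truncated Laurent (multipole) expansion of order $O(\log\frac{1}{\varepsilon})$; composing these across the levels of the tree approximates $\bm{M}\bm{v}$ to relative error $\varepsilon$ in $O((M+N)\log(M+N)\log\frac{1}{\varepsilon})$ operations and $O(M+N)$ space \citep{pan2015transformations,pan2017fast}. I expect this last step to be the only delicate part of the argument: a fully rigorous treatment requires bounding the truncation error of the multipole expansions under the box-separation condition and tracking how these errors accumulate across the $O(\log(M+N))$ levels. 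Since these estimates are classical, my proof will state the reduction precisely and defer the quantitative error analysis to the cited FMM literature.
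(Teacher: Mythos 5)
Your proposal is correct. There is nothing in the paper to compare it against: \cref{prop:cauchy} is stated without proof and is simply a pointer to the structured-matrix literature \citep{pan2001structured,pan2015transformations,pan2017fast}. Your sketch supplies exactly the standard content of those citations --- the space and naive bounds are immediate; the exact-arithmetic bound is the classical reduction of Trummer's problem to fast polynomial multipoint evaluation and interpolation via the partial-fraction identity \( \sum_j v_j/(x-\lambda_j) = q(x)/p(x) \) with \( q(\lambda_j) = v_j\,p'(\lambda_j) \); and the finite-precision bound is the Fast Multipole Method, whose truncation-error analysis is reasonably deferred to the cited sources. One point worth making explicit if you flesh this out: the exact-arithmetic algorithm is not numerically stable, since the coefficients of \( p(x)=\prod_j(x-\lambda_j) \) can be exponentially large in \( N \) --- the same pathology the paper exhibits for the fast LSSL algorithm in \cref{sec:lssl-instability} --- which is precisely why the proposition keeps the exact-arithmetic and numerical regimes separate and why the S4 implementation uses only the naive or FMM variants in practice.
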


\begin{corollary}%
  Evaluating \( \bm{Q}^* \bm{R}(\Omega; \Lambda) \bm{P} \) (defined in \cref{lmm:resolvent-woodbury}) for any set of nodes \( \Omega \in \mathbbm{C}^L \), diagonal matrix \( \Lambda \), and vectors \( \bm{P}, \bm{Q} \) can be computed in \( \mathcal{C}(L,N) \) operations and \( O(L+N) \) space, where \( \mathcal{C}(L,N) = \tilde{O}(L+N) \) is the cost of a Cauchy matrix-vector multiplication.
\end{corollary}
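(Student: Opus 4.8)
The plan is to recognize that $\bm{Q}^* \bm{R}(\Omega; \bm{\Lambda}) \bm{P}$ — the length-$L$ vector whose $k$-th entry is $\bm{Q}^* \bm{R}(\omega_k; \bm{\Lambda}) \bm{P}$ — is, after a cheap pointwise change of variables on the nodes, literally a Cauchy matrix--vector product, so that \cref{prop:cauchy} applies as a black box.

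First I would unfold the definitions. Since $\bm{\Lambda} = \diag(\lambda_1, \dots, \lambda_N)$ is diagonal, so is the resolvent $\bm{R}(z; \bm{\Lambda}) = \left(\frac{2}{\dt}\frac{1-z}{1+z} - \bm{\Lambda}\right)^{-1}$, whose $j$-th diagonal entry is $\bigl(g(z) - \lambda_j\bigr)^{-1}$ where $g(z) := \frac{2}{\dt}\frac{1-z}{1+z}$. Hence, whenever these quantities are defined,
\[
  \bm{Q}^* \bm{R}(\omega_k; \bm{\Lambda}) \bm{P} = \sum_{j=1}^{N} \frac{\overline{Q_j}\, P_j}{g(\omega_k) - \lambda_j}.
\]
Writing $\tilde\omega_k := g(\omega_k)$, $\tilde\Omega := (\tilde\omega_k)_{k \in [L]}$, and $v := \overline{\bm{Q}} \odot \bm{P} \in \mathbbm{C}^N$ (the entrywise product), the right-hand side is exactly $\bigl(\bm{M}(\tilde\Omega, \bm{\Lambda})\, v\bigr)_k$ in the notation of \cref{def:cauchy}; note that $\bm{M}(\tilde\Omega, \bm{\Lambda})$ is well defined precisely when $\bm{R}(\omega_k; \bm{\Lambda})$ is, since its denominators are the same quantities $g(\omega_k) - \lambda_j$.

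It remains to account for the overhead of the reduction and then cite \cref{prop:cauchy}. The vector $v$ is assembled in $O(N)$ operations and $O(N)$ space, and the transformed nodes $\tilde\Omega$ are obtained from $\Omega$ by one M\"obius evaluation per node, i.e.\ $O(L)$ operations and $O(L)$ space. Applying \cref{prop:cauchy} with $M = L$, the product $\bm{M}(\tilde\Omega, \bm{\Lambda})\, v$ costs $\mathcal{C}(L, N)$ operations and $O(L+N)$ space; adding the $O(L+N)$ preprocessing leaves these bounds unchanged, and $\mathcal{C}(L, N) = \tilde{O}(L+N)$ is read off from the exact-arithmetic or numerical rows of \cref{prop:cauchy}. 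There is no substantive obstacle here: the entire content is the observation that the resolvent of a diagonal matrix, evaluated at a family of nodes, is a Cauchy kernel — everything else is composing an $O(L+N)$ preprocessing step with the black-box bound. The only point worth stating explicitly is the well-definedness of the substitution $\omega_k \mapsto g(\omega_k)$ and of the resulting Cauchy matrix, which I would dispatch with the remark above, since the relevant denominators already appear in the hypothesis that $\bm{R}$ is defined.
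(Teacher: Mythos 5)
Your proposal is correct and follows essentially the same route as the paper's own (very terse) proof: recognize that the diagonal resolvent evaluated at the nodes is exactly a Cauchy matrix--vector product $\sum_j \overline{Q_j}P_j/(g(\omega_k)-\lambda_j)$ and invoke \cref{prop:cauchy}. You merely make explicit the M\"obius change of nodes $\omega_k \mapsto g(\omega_k)$ and the $O(L+N)$ preprocessing, which the paper leaves implicit.
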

\begin{proof}%
  For any fixed \( \omega \in \Omega \), we want to compute \( \sum_{j} \frac{q_j^* p_j}{\omega - \lambda_j} \). Computing this over all \( \omega_i \) is therefore exactly a Cauchy matrix-vector multiplication.
\end{proof}

This completes the proof of \cref{thm:s4-convolution}.
In \cref{alg:s4-convolution},
note that the work is dominated by Step \ref{step:cauchy},
which has a constant number of calls to a black-box Cauchy kernel, with complexity given by \cref{prop:cauchy}.

\section{Experiment Details and Full Results}
\label{sec:experiment-details}

This section contains full experimental procedures and extended results and citations for our experimental evaluation in \cref{sec:experiments}.
\cref{sec:experiment-details-benchmarking} corresponds to benchmarking results in \cref{sec:experiments-benchmark},
\cref{sec:experiment-details-lrd} corresponds to LRD experiments (LRA and Speech Commands) in \cref{sec:experiments-lrd},
and \cref{sec:experiment-details-general} corresponds to the general sequence modeling experiments (generation, image classification, forecasting) in \cref{sec:experiments-general}.

\subsection{Benchmarking}
\label{sec:experiment-details-benchmarking}

Benchmarking results from \cref{tab:ssm-benchmark} and \cref{tab:lra-benchmark} were tested on a single A100 GPU.

\paragraph{Benchmarks against LSSL}

For a given dimension \( H \), a single LSSL or \methodabbrv{} layer was constructed with \( H \) hidden features.
For LSSL, the state size \( N \) was set to \( H \) as done in \citep{gu2021lssl}.
For \methodabbrv{}, the state size \( N \) was set to parameter-match the LSSL, which was a state size of \( \frac{N}{4} \) due to differences in the parameterization.
\cref{tab:ssm-benchmark} benchmarks a single forward+backward pass of a single layer.

\paragraph{Benchmarks against Efficient Transformers}
Following \citep{tay2021long}, the Transformer models had 4 layers, hidden dimension \( 256 \) with \( 4 \) heads, query/key/value projection dimension \( 128 \), and batch size \( 32 \), for a total of roughly \( 600k \) parameters.
The \methodabbrv{} model was parameter tied while keeping the depth and hidden dimension constant (leading to a state size of \( N = 256 \)).

We note that the relative orderings of these methods can vary depending on the exact hyperparameter settings.

\subsection{Long-Range Dependencies}
\label{sec:experiment-details-lrd}
This section includes information for reproducing our experiments on the Long-Range Arena and Speech Commands long-range dependency tasks.

\paragraph{Long Range Arena}

\cref{tab:lra-full} contains extended results table with all 11 methods considered in \citep{tay2021long}.

\begin{table}[t]
  \small
  \caption{Full results for the Long Range Arena (LRA) benchmark for long-range dependencies in sequence models. (Top): Original Transformer variants in LRA. (Bottom): Other models reported in the literature.}
    \centering
    \begin{tabular}{@{}llllllll@{}}
        \toprule
        Model                 & \textsc{ListOps}  & \textsc{Text}     & \textsc{Retrieval} & \textsc{Image}    & \textsc{Pathfinder} & \textsc{Path-X} & \textsc{Avg}      \\
        \midrule
        Random                & 10.00             & 50.00             & 50.00              & 10.00             & 50.00               & 50.00           & 36.67             \\
        \midrule
        Transformer           & 36.37             & 64.27             & 57.46              & 42.44             & 71.40               & \xmark          & 53.66             \\
        Local Attention       & 15.82             & 52.98             & 53.39              & 41.46             & 66.63               & \xmark          & 46.71             \\
        Sparse Trans.         & 17.07             & 63.58             & 59.59              & 44.24             & 71.71               & \xmark          & 51.03             \\
        Longformer            & 35.63             & 62.85             & 56.89              & 42.22             & 69.71               & \xmark          & 52.88             \\
        Linformer             & 35.70             & 53.94             & 52.27              & 38.56             & 76.34               & \xmark          & 51.14             \\
        Reformer              & \underline{37.27} & 56.10             & 53.40              & 38.07             & 68.50               & \xmark          & 50.56             \\
        Sinkhorn Trans.       & 33.67             & 61.20             & 53.83              & 41.23             & 67.45               & \xmark          & 51.23             \\
        Synthesizer           & 36.99             & 61.68             & 54.67              & 41.61             & 69.45               & \xmark          & 52.40             \\
        BigBird               & 36.05             & 64.02             & 59.29              & 40.83             & 74.87               & \xmark          & 54.17             \\
        Linear Trans.         & 16.13             & \underline{65.90} & 53.09              & 42.34             & 75.30               & \xmark          & 50.46             \\
        Performer             & 18.01             & 65.40             & 53.82              & 42.77             & 77.05               & \xmark          & 51.18             \\
        \midrule
        FNet                  & 35.33             & 65.11             & 59.61              & 38.67             & \underline{77.80}   & \xmark          & 54.42             \\
        Nystr{\"o}mformer     & 37.15             & 65.52             & \underline{79.56}  & 41.58             & 70.94               & \xmark          & 57.46             \\
        Luna-256              & 37.25             & 64.57             & 79.29              & \underline{47.38} & 77.72               & \xmark          & \underline{59.37} \\
        \textbf{\methodabbrv} (original) & 58.35   & 76.02 & 87.09     & 87.26 & 86.05      & 88.10  & 80.48 \\
        \textbf{\methodabbrv} (updated)  & \textbf{59.60}   & \textbf{86.82} & \textbf{90.90}     & \textbf{88.65} & \textbf{94.20}      & \textbf{96.35}  & \textbf{86.09} \\
        \bottomrule
    \end{tabular}
    \label{tab:lra-full}
\end{table}

For the \methodabbrv{} model, hyperparameters for all datasets are reported in \cref{tab::best-hyperparameters}.
For all datasets, we used the AdamW optimizer with a constant learning rate schedule with decay on validation plateau.
However, the learning rate on HiPPO parameters (in particular \( \bm{\Lambda}, \bm{P}, \bm{Q}, \bm{B}, \bm{C}, \dt \)) were reduced to a maximum starting LR of \( 0.001 \), which improves stability since the HiPPO equation is crucial to performance.

The \methodabbrv{} state size was always fixed to \( N=64 \).

As \methodabbrv{} is a sequence-to-sequence model with output shape (batch, length, dimension) and LRA tasks are classification,
mean pooling along the length dimension was applied after the last layer.

We note that most of these results were trained for far longer than what was necessary to achieve SotA results (e.g., the \texttt{Image} task reaches SotA in 1 epoch).
Results often keep improving with longer training times.

\textbf{Updated results.}
The above hyperparameters describe the results reported in the original paper, shown in \cref{tab:lra-full}, which have since been improved.
See \cref{sec:reproduction}.

\textbf{Hardware.}
All models were run on single GPU.
Some tasks used an A100 GPU (notably, the Path-X experiments), which has a larger max memory of 40Gb.
To reproduce these on smaller GPUs, the batch size can be reduced or gradients can be accumulated for two batches.

\begin{table*}[!t]
  \caption{
    The values of the best hyperparameters found for classification datasets; LRA (Top) and images/speech (Bottom).
    LR is learning rate and WD is weight decay. BN and LN refer to Batch Normalization and Layer Normalization.
  }
  \label{tab::best-hyperparameters}
  \centering
  \resizebox{\textwidth}{!}{%
    \begin{tabular}{@{}llllllllllll@{}}
      \toprule
                                      & \textbf{Depth} & \textbf{Features \( H \)} & \textbf{Norm} & \textbf{Pre-norm} & {\bf Dropout} & {\bf LR} & {\bf Batch Size} & {\bf Epochs} & \textbf{WD} & \textbf{Patience} \\
      \midrule
      \textbf{ListOps}                & 6              & 128                       & BN            & False             & 0             & 0.01     & 100              & 50           & 0.01        & 5                 \\
      \textbf{Text}                   & 4              & 64                        & BN            & True              & 0             & 0.001    & 50               & 20           & 0           & 5                 \\
      \textbf{Retrieval}              & 6              & 256                       & BN            & True              & 0             & 0.002    & 64               & 20           & 0           & 20                \\
      \textbf{Image}                  & 6              & 512                       & LN            & False             & 0.2           & 0.004    & 50               & 200          & 0.01        & 20                \\
      \textbf{Pathfinder}             & 6              & 256                       & BN            & True              & 0.1           & 0.004    & 100              & 200          & 0           & 10                \\
      \textbf{Path-X}                 & 6              & 256                       & BN            & True              & 0.0           & 0.0005   & 32               & 100          & 0           & 20                \\
      \midrule
      \textbf{CIFAR-10}               & 6              & 1024                      & LN            & False             & 0.25          & 0.01     & 50               & 200          & 0.01        & 20                \\
      \midrule
      \textbf{Speech Commands (MFCC)} & 4              & 256                       & LN            & False             & 0.2           & 0.01     & 100              & 50           & 0           & 5                 \\
      \textbf{Speech Commands (Raw)}  & 6              & 128                       & BN            & True              & 0.1           & 0.01     & 20               & 150          & 0           & 10                \\
      \bottomrule
    \end{tabular}%
  }
\end{table*}

\paragraph{Speech Commands}
We provide details of sweeps run for baseline methods run by us---numbers for all others method are taken from \citet{gu2021lssl}. The best hyperparameters used for \methodabbrv{} are included in Table~\ref{tab::best-hyperparameters}.

\textit{Transformer~\citep{vaswani2017attention}} For MFCC, we swept the number of model layers $\{2, 4\}$, dropout $\{0, 0.1\}$ and learning rates $\{0.001, 0.0005\}$. We used $8$ attention heads, model dimension $128$, prenorm, positional encodings, and trained for $150$ epochs with a batch size of $100$. For Raw, the Transformer model's memory usage made training impossible.

\textit{Performer~\citep{choromanski2020rethinking}} For MFCC, we swept the number of model layers $\{2, 4\}$, dropout $\{0, 0.1\}$ and learning rates $\{0.001, 0.0005\}$. We used $8$ attention heads, model dimension $128$, prenorm, positional encodings, and trained for $150$ epochs with a batch size of $100$. For Raw, we used a model dimension of $128$, $4$ attention heads, prenorm, and a batch size of $16$. We reduced the number of model layers to $4$, so the model would fit on the single GPU. We trained for $100$ epochs with a learning rate of $0.001$ and no dropout.

\textit{ExpRNN~\citep{lezcano2019cheap}} For MFCC, we swept hidden sizes $\{256, 512\}$ and learning rates $\{0.001, 0.002, 0.0005\}$. Training was run for $200$ epochs, with a single layer model using a batch size of $100$. For Raw, we swept hidden sizes $\{32, 64\}$ and learning rates $\{0.001, 0.0005\}$ (however, ExpRNN failed to learn).

\textit{LipschitzRNN~\citep{erichson2021lipschitz}} For MFCC, we swept hidden sizes $\{256, 512\}$ and learning rates $\{0.001, 0.002, 0.0005\}$. Training was run for $150$ epochs, with a single layer model using a batch size of $100$. For Raw, we found that LipschitzRNN was too slow to train on a single GPU (requiring a full day for $1$ epoch of training alone).

\textit{WaveGAN Discriminator~\citep{Donahue2019AdversarialAS}}
The WaveGAN-D in \cref{tab:sc} is actually our improved version of the discriminator network from the recent WaveGAN model for speech~\citep{Donahue2019AdversarialAS}.
This CNN actually did not work well out-of-the-box, and we added several features to help it perform better.
The final model is highly specialized compared to our model, and includes:
\begin{itemize}%
  \item Downsampling or pooling between layers, induced by strided convolutions, that decrease the sequence length between layers.
  \item A global fully-connected output layer; thus the model only works for one input sequence length and does not work on MFCC features or the frequency-shift setting in \cref{tab:sc}.
  \item Batch Normalization is essential, whereas \methodabbrv{} works equally well with either Batch Normalization or Layer Normalization.
  \item Almost \( 90\times \) as many parameters as the \methodabbrv{} model ($26.3$M vs. $0.3$M).
\end{itemize}

\subsection{General Sequence Modeling}
\label{sec:experiment-details-general}

This subsection corresponds to the experiments in \cref{sec:experiments-general}.
Because of the number of experiments in this section,
we use subsubsection dividers for different tasks to make it easier to follow:
CIFAR-10 density estimation (\cref{sec:experiment-details-general-cifargen}),
WikiText-103 language modeling (\cref{sec:experiment-details-general-wt103}),
autoregressive generation (\cref{sec:experiment-details-general-speed}),
sequential image classification (\cref{sec:experiment-details-general-image}),
and time-series forecasting (\cref{sec:experiment-details-general-informer}).

\subsubsection{CIFAR Density Estimation}
\label{sec:experiment-details-general-cifargen}

This task used a different backbone than the rest of our experiments.
We used blocks of alternating \methodabbrv{} layers and position-wise feed-forward layers (in the style of Transformer blocks).
Each feed-forward intermediate dimension was set to \( 2\times \) the hidden size of the incoming \methodabbrv{} layer.
Similar to \citet{salimans2017pixelcnn++}, we used a UNet-style backbone consisting of \( B \) identical blocks followed by a downsampling layer.
The downsampling rates were \( 3, 4, 4 \) (the 3 chosen because the sequence consists of RGB pixels).
The base model had \( B=8 \) with starting hidden dimension 128,
while the large model had \( B=16 \) with starting hidden dimension 192.

We experimented with both the mixture of logistics from \citep{salimans2017pixelcnn++} as well as a simpler 256-way categorical loss.
We found they were pretty close and ended up using the simpler softmax loss along with using input embeddings.

We used the LAMB optimizer with learning rate 0.005.
The base model had no dropout, while the large model had dropout 0.1 before the linear layers inside the \methodabbrv{} and FF blocks.

\subsubsection{WikiText-103 Language Modeling}
\label{sec:experiment-details-general-wt103}

The RNN baselines included in \cref{tab:wt103} are the
AWD-QRNN~\citep{merity2018scalable}, an efficient linear gated RNN,
and the LSTM + Cache + Hebbian + MbPA \citep{rae2018fast}, the best performing pure RNN in the literature.
The CNN baselines are
the CNN with GLU activations~\citep{dauphin2017language},
the TrellisNet~\citep{trellisnet},
Dynamic Convolutions~\citep{wu2019pay},
and TaLK Convolutions~\citep{lioutas2020time}.

The Transformer baseline is \citep{baevski2018adaptive},
which uses Adaptive Inputs with a tied Adaptive Softmax.
This model is a standard high-performing Transformer baseline on this benchmark,
used for example by \citet{lioutas2020time} and many more.

Our \methodabbrv{} model uses the same Transformer backbone as in \citep{baevski2018adaptive}.
The model consists of 16 blocks of \methodabbrv{} layers alternated with position-wise feedforward layers, with a feature dimension of 1024.
Because our \methodabbrv{} layer has around 1/4 the number of parameters as a self-attention layer with the same dimension, we made two modifications to match the parameter count better:
(i) we used a GLU activation after the \methodabbrv{} linear layer (\cref{sec:s4-architecture})
(ii) we used two \methodabbrv{} layers per block.
Blocks use Layer Normalization in the pre-norm position.
The embedding and softmax layers were the Adaptive Embedding from \citep{baevski2018adaptive} with standard cutoffs 20000, 40000, 200000.

Evaluation was performed similarly to the basic setting in \citep{baevski2018adaptive}, Table 5,
which uses sliding non-overlapping windows.
Other settings are reported in \citep{baevski2018adaptive} that include more context at training and evaluation time and improves the score.
Because such evaluation protocols are orthogonal to the basic model, we do not consider them and report the base score from \citep{baevski2018adaptive} Table 5.

Instead of SGD+Momentum with multiple cosine learning rate annealing cycles,
our \methodabbrv{} model was trained with the simpler AdamW optimizer with a single cosine learning rate cycle with a maximum of 800000 steps.
The initial learning rate was set to 0.0005.
We used 8 A100 GPUs with a batch size of 1 per gpu and context size 8192.
We used no gradient clipping and a weight decay of 0.1.
Unlike \citep{baevski2018adaptive} which specified different dropout rates for different parameters,
we used a constant dropout rate of 0.25 throughout the network, including before every linear layer and on the residual branches.

\subsubsection{Autoregressive Generation Speed}
\label{sec:experiment-details-general-speed}

\paragraph{Protocol.}
To account for different model sizes and memory requirements for each method,
we benchmark generation speed by throughput,
measured in images per second (\cref{tab:cifar-generation}) or tokens per second (\cref{tab:wt103}).
Each model generates images on a single \( A100 \) GPU,
maximizing batch size to fit in memory.
(For CIFAR-10 generation we limited memory to 16Gb, to be more comparable to the Transformer and Linear Transformer results reported from \citep{katharopoulos2020transformers}.)

\paragraph{Baselines.}
The Transformer and Linear Transformer baselines reported in \cref{tab:cifar-generation} are the results reported directly from \citet{katharopoulos2020transformers}.
Note that the Transformer number is the one in their Appendix, which implements the optimized cached implementation of self-attention.

For all other baseline models, we used open source implementations of the models to benchmark generation speed.
For the PixelCNN++, we used the fast cached version by \citet{ramachandran2017fast},
which sped up generation by orders of magnitude from the naive implementation.
This code was only available in TensorFlow, which may have slight differences compared to the rest of the baselines which were implemented in PyTorch.

We were unable to run the Sparse Transformer~\citep{child2019generating} model due to issues with their custom CUDA implementation of the sparse attention kernel, which we were unable to resolve.

The Transformer baseline from \cref{tab:wt103} was run using a modified GPT-2 backbone from the HuggingFace repository, configured to recreate the architecture reported in \citep{baevski2018adaptive}.
These numbers are actually slightly favorable to the baseline, as we did not include the timing of the embedding or softmax layers, whereas the number reported for \methodabbrv{} is the full model.

\subsubsection{Pixel-Level Sequential Image Classification}
\label{sec:experiment-details-general-image}

Our models were trained with the AdamW optimizer for up to 200 epochs.
Hyperparameters for the CIFAR-10 model is reported in \cref{tab::best-hyperparameters}.

For our comparisons against ResNet-18, the main differences between the base models are that \methodabbrv{} uses LayerNorm by default while ResNet uses BatchNorm.
The last ablation in \cref{sec:experiments-general} swaps the normalization type,
using BatchNorm for \methodabbrv{} and LayerNorm for ResNet,
to ablate this architectural difference.
The experiments with augmentation take the base model and train with mild data augmentation: horizontal flips and random crops (with symmetric padding).

\begin{table}[t]
  \small
  \centering
  \captionsetup{type=table}
  \caption{
    (\textbf{Pixel-level image classification.})
    Citations refer to the original model; additional citation indicates work from which this baseline is reported.
  }
  \begin{tabular}{@{}llll@{}}
    \toprule
    Model                                                      & \textsc{sMNIST}   & \textsc{pMNIST}   & \textsc{sCIFAR}   \\
    \midrule
    Transformer~\citep{vaswani2017attention,trinh2018learning} & 98.9              & 97.9              & 62.2              \\
    \midrule
    CKConv~\citep{romero2021ckconv}                            & 99.32             & \underline{98.54} & 63.74             \\
    TrellisNet~\citep{trellisnet}                              & 99.20             & 98.13             & 73.42             \\
    TCN~\citep{bai2018empirical}                               & 99.0              & 97.2              & -                 \\
    \midrule
    LSTM~\citep{lstm,gu2020improving}                          & 98.9              & 95.11             & 63.01             \\
    r-LSTM ~\citep{trinh2018learning}                          & 98.4              & 95.2              & 72.2              \\
    Dilated GRU~\citep{chang2017dilated}                       & 99.0              & 94.6              & -                 \\
    Dilated RNN~\citep{chang2017dilated}                       & 98.0              & 96.1              & -                 \\
    IndRNN~\citep{indrnn}                                      & 99.0              & 96.0              & -                 \\
    expRNN~\citep{lezcano2019cheap}                            & 98.7              & 96.6              & -                 \\
    UR-LSTM                                                    & 99.28             & 96.96             & 71.00             \\
    UR-GRU~\citep{gu2020improving}                             & 99.27             & 96.51             & \underline{74.4}  \\
    LMU~\citep{voelker2019legendre}                            & -                 & 97.15             & -                 \\
    HiPPO-RNN~\citep{gu2020hippo}                              & 98.9              & 98.3              & 61.1              \\
    UNIcoRNN~\citep{rusch2021unicornn}                         & -                 & 98.4              & -                 \\
    LMUFFT~\citep{chilkuri2021parallelizing}                   & -                 & 98.49             & -                 \\
    LipschitzRNN~\citep{erichson2021lipschitz}                 & \underline{99.4}  & 96.3              & 64.2              \\
    \midrule
    \textbf{\methodabbrv}                                                & \textbf{99.63}    & \textbf{98.70} & \textbf{91.13}    \\
    \bottomrule
  \end{tabular}
  \label{tab:image-full}
\end{table}

\subsubsection{Time Series Forecasting compared to Informer}
\label{sec:experiment-details-general-informer}

We include a simple figure (\cref{fig:s4-architecture}) contrasting the architecture of \methodabbrv{} against that of the Informer \citep{haoyietal-informer-2021}.

In \cref{fig:s4-architecture},
the goal is to forecast a contiguous range of future predictions (Green, length \( F \) )
given a range of past context (Blue, length \( C \) ).
We simply concatenate the entire context with a sequence of masks set to the length of the forecast window.
This input is a single sequence of length \( C+F \) that is run through the same simple deep \methodabbrv{} model used throughout this work,
which maps to an output of length \( C+F \) .
We then use just the last \( F \) outputs as the forecasted predictions.

\begin{figure}[t]
    \centering
    \begin{subfigure}{\linewidth}%
      \centering
      \includegraphics[width=\linewidth]{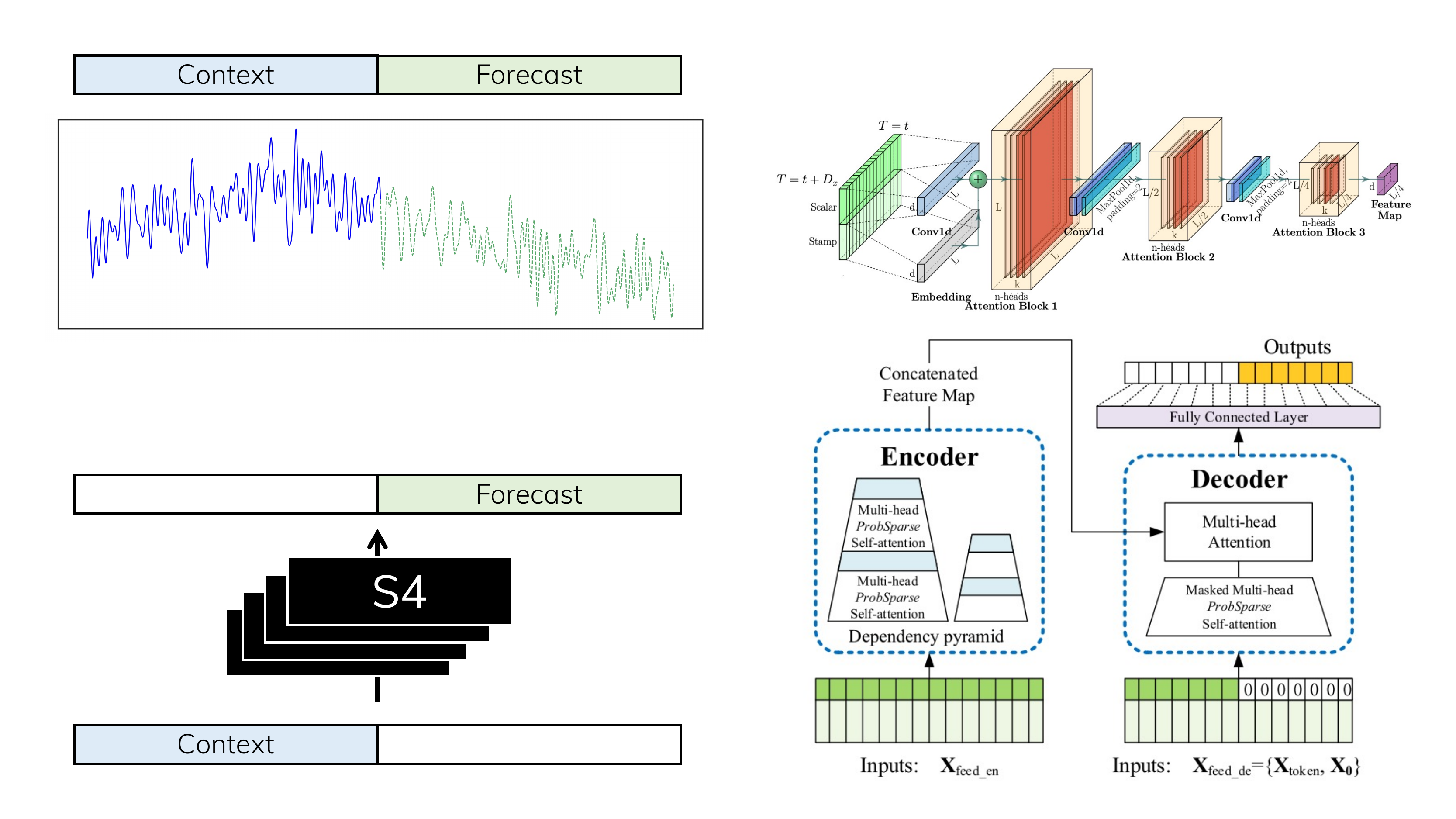}
    \end{subfigure}
    \caption{Comparison of \methodabbrv{} and specialized time-series models for forecasting tasks. (\textit{Top Left}) The forecasting task involves predicting future values of a time-series given past context. (\textit{Bottom Left}) We perform simple forecasting using a sequence model such as \methodabbrv{} as a black box. (\textit{Right}) Informer uses an encoder-decoder architecture designed specifically for forecasting problems involving a customized attention module (figure taken from~\citet{haoyietal-informer-2021}).}
    \label{fig:s4-architecture}
\end{figure}

\cref{tab:informer-s,tab:informer-m} contain full results on all 50 settings considered by \citet{haoyietal-informer-2021}.
\methodabbrv{} sets the best results on 40 out of 50 of these settings.

\begin{table*}[t]
\centering
\fontsize{9pt}{9pt}\selectfont
\centering
\resizebox{\linewidth}{!}{
\begin{tabular}{c|c|c|c|c|c|c|c|c|c|c|c}
\toprule[1.0pt]
\multicolumn{2}{c|}{Methods}              & \textbf{\methodabbrv} & {Informer}                     & {Informer$^{\dag}$}            & {LogTrans}       & {Reformer}   & {LSTMa}      & {DeepAR}     & {ARIMA}                 & {Prophet}    \\
\midrule[0.5pt]
\multicolumn{2}{c|}{Metric}               & MSE~~MAE              & MSE~~MAE                       & MSE~~MAE                       & MSE~~MAE         & MSE~~MAE     & MSE~~MAE     & MSE~~MAE     & MSE~~MAE                & MSE~~MAE     \\
\midrule[1.0pt]
\multirow{5}{*}{\rotatebox{90}{ETTh$_1$}} & 24                    & \textbf{0.061}~~\textbf{0.191} & 0.098~~0.247                   & {0.092}~~{0.246} & 0.103~~0.259 & 0.222~~0.389 & 0.114~~0.272 & 0.107~~0.280            & 0.108~~0.284  & 0.115~~0.275 \\
                                          & 48                    & \textbf{0.079}~~\textbf{0.220} & {0.158}~~{0.319}               & 0.161~~0.322     & 0.167~~0.328 & 0.284~~0.445 & 0.193~~0.358 & 0.162~~0.327            & 0.175~~0.424  & 0.168~~0.330 \\
                                          & 168                   & \textbf{0.104}~~\textbf{0.258} & {0.183}~~{0.346}               & 0.187~~0.355     & 0.207~~0.375 & 1.522~~1.191 & 0.236~~0.392 & 0.239~~0.422            & 0.396~~0.504  & 1.224~~0.763 \\
                                          & 336                   & \textbf{0.080}~~\textbf{0.229} & 0.222~~0.387                   & {0.215}~~{0.369} & 0.230~~0.398 & 1.860~~1.124 & 0.590~~0.698 & 0.445~~0.552            & 0.468~~0.593  & 1.549~~1.820 \\
                                          & 720                   & \textbf{0.116}~~\textbf{0.271} & 0.269~~0.435                   & {0.257}~~{0.421} & 0.273~~0.463 & 2.112~~1.436 & 0.683~~0.768 & 0.658~~0.707            & 0.659~~0.766  & 2.735~~3.253 \\
\midrule[0.5pt]
\multirow{5}{*}{\rotatebox{90}{ETTh$_2$}} & 24                    & 0.095~~0.234                   & \textbf{0.093}~~\textbf{0.240} & 0.099~~0.241     & 0.102~~0.255 & 0.263~~0.437 & 0.155~~0.307 & 0.098~~0.263            & 3.554~~0.445  & 0.199~~0.381 \\
                                          & 48                    & 0.191~~0.346                   & \textbf{0.155}~~\textbf{0.314} & 0.159~~0.317     & 0.169~~0.348 & 0.458~~0.545 & 0.190~~0.348 & 0.163~~0.341            & 3.190~~0.474  & 0.304~~0.462 \\
                                          & 168                   & \textbf{0.167}~~\textbf{0.333} & {0.232}~~{0.389}               & 0.235~~0.390     & 0.246~~0.422 & 1.029~~0.879 & 0.385~~0.514 & 0.255~~0.414            & 2.800~~0.595  & 2.145~~1.068 \\
                                          & 336                   & \textbf{0.189}~~\textbf{0.361} & 0.263~~{0.417}                 & {0.258}~~0.423   & 0.267~~0.437 & 1.668~~1.228 & 0.558~~0.606 & 0.604~~0.607            & 2.753~~0.738  & 2.096~~2.543 \\
                                          & 720                   & \textbf{0.187}~~\textbf{0.358} & {0.277}~~{ 0.431}              & 0.285~~0.442     & 0.303~~0.493 & 2.030~~1.721 & 0.640~~0.681 & 0.429~~0.580            & 2.878~~1.044  & 3.355~~4.664 \\
\midrule[0.5pt]
\multirow{5}{*}{\rotatebox{90}{ETTm$_1$}} & 24                    & \textbf{0.024}~~\textbf{0.117} & {0.030}~~{0.137}               & 0.034~~0.160     & 0.065~~0.202 & 0.095~~0.228 & 0.121~~0.233 & 0.091~~0.243            & 0.090~~0.206  & 0.120~~0.290 \\
                                          & 48                    & \textbf{0.051}~~\textbf{0.174} & 0.069~~0.203                   & {0.066}~~{0.194} & 0.078~~0.220 & 0.249~~0.390 & 0.305~~0.411 & 0.219~~0.362            & 0.179~~0.306  & 0.133~~0.305 \\
                                          & 96                    & \textbf{0.086}~~\textbf{0.229} & 0.194~~{0.372}                 & {0.187}~~0.384   & 0.199~~0.386 & 0.920~~0.767 & 0.287~~0.420 & 0.364~~0.496            & 0.272~~0.399  & 0.194~~0.396 \\
                                          & 288                   & \textbf{0.160}~~\textbf{0.327} & {0.401}~~0.554                 & 0.409~~{0.548}   & 0.411~~0.572 & 1.108~~1.245 & 0.524~~0.584 & 0.948~~0.795            & 0.462~~0.558  & 0.452~~0.574 \\
                                          & 672                   & \textbf{0.292}~~\textbf{0.466} & {0.512}~~{0.644}               & 0.519~~0.665     & 0.598~~0.702 & 1.793~~1.528 & 1.064~~0.873 & 2.437~~1.352            & 0.639~~0.697  & 2.747~~1.174 \\
\midrule[0.5pt]
\multirow{5}{*}{\rotatebox{90}{Weather}}  & 24                    & 0.125~~0.254                   & \textbf{0.117}~~\textbf{0.251} & 0.119~~0.256     & 0.136~~0.279 & 0.231~~0.401 & 0.131~~0.254 & 0.128~~0.274            & 0.219~~0.355  & 0.302~~0.433 \\
                                          & 48                    & 0.181~~\textbf{0.305}          & \textbf{0.178}~~0.318          & 0.185~~0.316     & 0.206~~0.356 & 0.328~~0.423 & 0.190~~0.334 & 0.203~~0.353            & 0.273~~0.409  & 0.445~~0.536 \\
                                          & 168                   & \textbf{0.198}~~\textbf{0.333} & {0.266}~~{0.398}               & 0.269~~0.404     & 0.309~~0.439 & 0.654~~0.634 & 0.341~~0.448 & 0.293~~0.451            & 0.503~~0.599  & 2.441~~1.142 \\
                                          & 336                   & 0.300~~0.417                   & \textbf{0.297}~~\textbf{0.416} & 0.310~~0.422     & 0.359~~0.484 & 1.792~~1.093 & 0.456~~0.554 & 0.585~~0.644            & 0.728~~0.730  & 1.987~~2.468 \\
                                          & 720                   & \textbf{0.245}~~\textbf{0.375} & {0.359}~~{0.466}               & 0.361~~0.471     & 0.388~~0.499 & 2.087~~1.534 & 0.866~~0.809 & 0.499~~0.596            & 1.062~~0.943  & 3.859~~1.144 \\
\midrule[0.5pt]
\multirow{5}{*}{\rotatebox{90}{ECL}}      & 48                    & 0.222~~\textbf{0.350}          & 0.239~~0.359                   & 0.238~~0.368     & 0.280~~0.429 & 0.971~~0.884 & 0.493~~0.539 & \textbf{0.204}~~{0.357} & 0.879~~0.764  & 0.524~~0.595 \\
                                          & 168                   & 0.331~~\textbf{0.421}          & 0.447~~0.503                   & 0.442~~0.514     & 0.454~~0.529 & 1.671~~1.587 & 0.723~~0.655 & \textbf{0.315}~~{0.436} & 1.032~~0.833  & 2.725~~1.273 \\
                                          & 336                   & \textbf{0.328}~~\textbf{0.422} & 0.489~~0.528                   & 0.501~~0.552     & 0.514~~0.563 & 3.528~~2.196 & 1.212~~0.898 & {0.414}~~{0.519}        & 1.136~~0.876  & 2.246~~3.077 \\
                                          & 720                   & \textbf{0.428}~~\textbf{0.494} & {0.540}~~{0.571}               & 0.543~~0.578     & 0.558~~0.609 & 4.891~~4.047 & 1.511~~0.966 & 0.563~~0.595            & 1.251~~0.933  & 4.243~~1.415 \\
                                          & 960                   & \textbf{0.432}~~\textbf{0.497} & {0.582}~~{0.608}               & 0.594~~0.638     & 0.624~~0.645 & 7.019~~5.105 & 1.545~~1.006 & 0.657~~0.683            & 1.370~~0.982  & 6.901~~4.264 \\

\midrule[1.0pt]
\multicolumn{2}{c|}{Count}                & {22}                  & {5}                            & {0}                            & {0}              & {0}          & {0}          & {2}          & {0}                     & {0}          \\
\bottomrule[1.0pt]

\end{tabular}%
}
\caption{Univariate long sequence time-series forecasting results on four datasets (five cases).}
\label{tab:informer-s}
\end{table*}

\begin{table*}[t]
\centering
\fontsize{9pt}{9pt}\selectfont
\resizebox{\linewidth}{!}{
\begin{tabular}{c|c|cc|cc|cc|cc|cc|cc|cc}
\toprule[1.0pt]
\multicolumn{2}{c}{Methods}               & \multicolumn{2}{|c}{\textbf{\methodabbrv}} & \multicolumn{2}{|c}{Informer} & \multicolumn{2}{|c}{Informer$^{\dag}$} & \multicolumn{2}{|c}{LogTrans} & \multicolumn{2}{|c}{Reformer} & \multicolumn{2}{|c}{LSTMa} & \multicolumn{2}{|c}{LSTnet} \\
\midrule[0.5pt]
\multicolumn{2}{c|}{Metric}               & MSE                                        & MAE                           & MSE                                    & MAE                           & MSE                           & MAE                        & MSE                          & MAE     & MSE   & MAE   & MSE   & MAE   & MSE   & MAE     \\
\midrule[1.0pt]
\multirow{5}{*}{\rotatebox{90}{ETTh$_1$}} & 24                                         & \textbf{0.525}                & \textbf{0.542}                         & {0.577}                       & {0.549}                       & 0.620                      & 0.577                        & 0.686   & 0.604 & 0.991 & 0.754 & 0.650 & 0.624 & 1.293    & 0.901 \\
                                          & 48                                         & \textbf{0.641}                & \textbf{0.615}                         & {0.685}                       & {0.625}                       & 0.692                      & 0.671                        & 0.766   & 0.757 & 1.313 & 0.906 & 0.702 & 0.675 & 1.456    & 0.960 \\
                                          & 168                                        & 0.980                         & 0.779                                  & \textbf{0.931}                & \textbf{0.752}                & 0.947                      & 0.797                        & 1.002   & 0.846 & 1.824 & 1.138 & 1.212 & 0.867 & 1.997    & 1.214 \\
                                          & 336                                        & 1.407                         & 0.910                                  & 1.128                         & 0.873                         & \textbf{1.094}             & \textbf{0.813}               & 1.362   & 0.952 & 2.117 & 1.280 & 1.424 & 0.994 & 2.655    & 1.369 \\
                                          & 720                                        & \textbf{1.162}                & \textbf{0.842}                         & {1.215}                       & {0.896}                       & 1.241                      & 0.917                        & 1.397   & 1.291 & 2.415 & 1.520 & 1.960 & 1.322 & 2.143    & 1.380 \\
\midrule[0.5pt]
\multirow{5}{*}{\rotatebox{90}{ETTh$_2$}} & 24                                         & 0.871                         & 0.736                                  & \textbf{0.720}                & \textbf{0.665}                & 0.753                      & 0.727                        & 0.828   & 0.750 & 1.531 & 1.613 & 1.143 & 0.813 & 2.742    & 1.457 \\
                                          & 48                                         & \textbf{1.240}                & \textbf{0.867}                         & {1.457}                       & {1.001}                       & 1.461                      & 1.077                        & 1.806   & 1.034 & 1.871 & 1.735 & 1.671 & 1.221 & 3.567    & 1.687 \\
                                          & 168                                        & \textbf{2.580}                & \textbf{1.255}                         & 3.489                         & {1.515}                       & 3.485                      & 1.612                        & 4.070   & 1.681 & 4.660 & 1.846 & 4.117 & 1.674 & {3.242}  & 2.513 \\
                                          & 336                                        & \textbf{1.980}                & \textbf{1.128}                         & 2.723                         & 1.340                         & 2.626                      & {1.285}                      & 3.875   & 1.763 & 4.028 & 1.688 & 3.434 & 1.549 & {2.544}  & 2.591 \\
                                          & 720                                        & \textbf{2.650}                & \textbf{1.340}                         & {3.467}                       & {1.473}                       & 3.548                      & 1.495                        & 3.913   & 1.552 & 5.381 & 2.015 & 3.963 & 1.788 & 4.625    & 3.709 \\
\midrule[0.5pt]
\multirow{5}{*}{\rotatebox{90}{ETTm$_1$}} & 24                                         & 0.426                         & 0.487                                  & 0.323                         & \textbf{0.369}                & \textbf{0.306}             & 0.371                        & 0.419   & 0.412 & 0.724 & 0.607 & 0.621 & 0.629 & 1.968    & 1.170 \\
                                          & 48                                         & 0.580                         & 0.565                                  & 0.494                         & 0.503                         & \textbf{0.465}             & \textbf{0.470}               & 0.507   & 0.583 & 1.098 & 0.777 & 1.392 & 0.939 & 1.999    & 1.215 \\
                                          & 96                                         & 0.699                         & 0.649                                  & \textbf{0.678}                & 0.614                         & 0.681                      & \textbf{0.612}               & 0.768   & 0.792 & 1.433 & 0.945 & 1.339 & 0.913 & 2.762    & 1.542 \\
                                          & 288                                        & \textbf{0.824}                & \textbf{0.674}                         & {1.056}                       & {0.786}                       & 1.162                      & 0.879                        & 1.462   & 1.320 & 1.820 & 1.094 & 1.740 & 1.124 & 1.257    & 2.076 \\
                                          & 672                                        & \textbf{0.846}                & \textbf{0.709}                         & {1.192}                       & {0.926}                       & 1.231                      & 1.103                        & 1.669   & 1.461 & 2.187 & 1.232 & 2.736 & 1.555 & 1.917    & 2.941 \\
\midrule[0.5pt]
\multirow{5}{*}{\rotatebox{90}{Weather}}  & 24                                         & \textbf{0.334}                & 0.385                                  & {0.335}                       & \textbf{0.381}                & 0.349                      & 0.397                        & 0.435   & 0.477 & 0.655 & 0.583 & 0.546 & 0.570 & 0.615    & 0.545 \\
                                          & 48                                         & 0.406                         & 0.444                                  & 0.395                         & 0.459                         & \textbf{0.386}             & \textbf{0.433}               & 0.426   & 0.495 & 0.729 & 0.666 & 0.829 & 0.677 & 0.660    & 0.589 \\
                                          & 168                                        & \textbf{0.525}                & \textbf{0.527}                         & {0.608}                       & {0.567}                       & 0.613                      & 0.582                        & 0.727   & 0.671 & 1.318 & 0.855 & 1.038 & 0.835 & 0.748    & 0.647 \\
                                          & 336                                        & \textbf{0.531}                & \textbf{0.539}                         & {0.702}                       & {0.620}                       & 0.707                      & 0.634                        & 0.754   & 0.670 & 1.930 & 1.167 & 1.657 & 1.059 & 0.782    & 0.683 \\
                                          & 720                                        & \textbf{0.578}                & \textbf{0.578}                         & {0.831}                       & {0.731}                       & 0.834                      & 0.741                        & 0.885   & 0.773 & 2.726 & 1.575 & 1.536 & 1.109 & 0.851    & 0.757 \\
\midrule[0.5pt]
\multirow{5}{*}{\rotatebox{90}{ECL}}      & 48                                         & \textbf{0.255}                & \textbf{0.352}                         & 0.344                         & {0.393}                       & 0.334                      & 0.399                        & 0.355   & 0.418 & 1.404 & 0.999 & 0.486 & 0.572 & 0.369    & 0.445 \\
                                          & 168                                        & \textbf{0.283}                & \textbf{0.373}                         & 0.368                         & 0.424                         & {0.353}                    & {0.420}                      & 0.368   & 0.432 & 1.515 & 1.069 & 0.574 & 0.602 & 0.394    & 0.476 \\
                                          & 336                                        & \textbf{0.292}                & \textbf{0.382}                         & 0.381                         & {0.431}                       & 0.381                      & 0.439                        & {0.373} & 0.439 & 1.601 & 1.104 & 0.886 & 0.795 & 0.419    & 0.477 \\
                                          & 720                                        & \textbf{0.289}                & \textbf{0.377}                         & 0.406                         & 0.443                         & {0.391}                    & {0.438}                      & 0.409   & 0.454 & 2.009 & 1.170 & 1.676 & 1.095 & 0.556    & 0.565 \\
                                          & 960                                        & \textbf{0.299}                & \textbf{0.387}                         & {0.460}                       & {0.548}                       & 0.492                      & 0.550                        & 0.477   & 0.589 & 2.141 & 1.387 & 1.591 & 1.128 & 0.605    & 0.599 \\
\midrule[1.0pt]
\multicolumn{2}{c}{Count}                 & \multicolumn{2}{|c}{18}                    & \multicolumn{2}{|c}{5}        & \multicolumn{2}{|c}{6}                 & \multicolumn{2}{|c}{0}        & \multicolumn{2}{|c}{0}        & \multicolumn{2}{|c}{0}     & \multicolumn{2}{|c}{0}      \\
\bottomrule[1.0pt]

\end{tabular}%
}
\caption{Multivariate long sequence time-series forecasting results on four datasets (five cases).}
\label{tab:informer-m}
\end{table*}

\subsection{Visualizations}
We visualize the convolutional filter $\bar{K}$ learned by \methodabbrv{} for the Pathfinder and CIFAR-10 tasks in \cref{fig:pathfinder-all-conv-filters}.

\begin{figure}
    \centering
    \begin{subfigure}{\linewidth}
        \includegraphics[width=\linewidth]{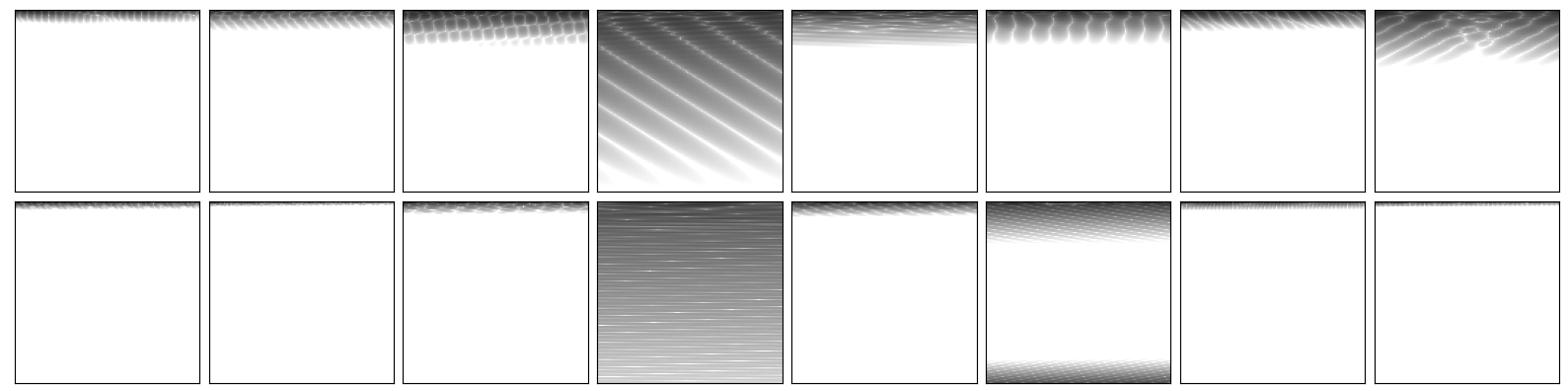}
    \end{subfigure}
    \begin{subfigure}{\linewidth}
        \includegraphics[width=\linewidth]{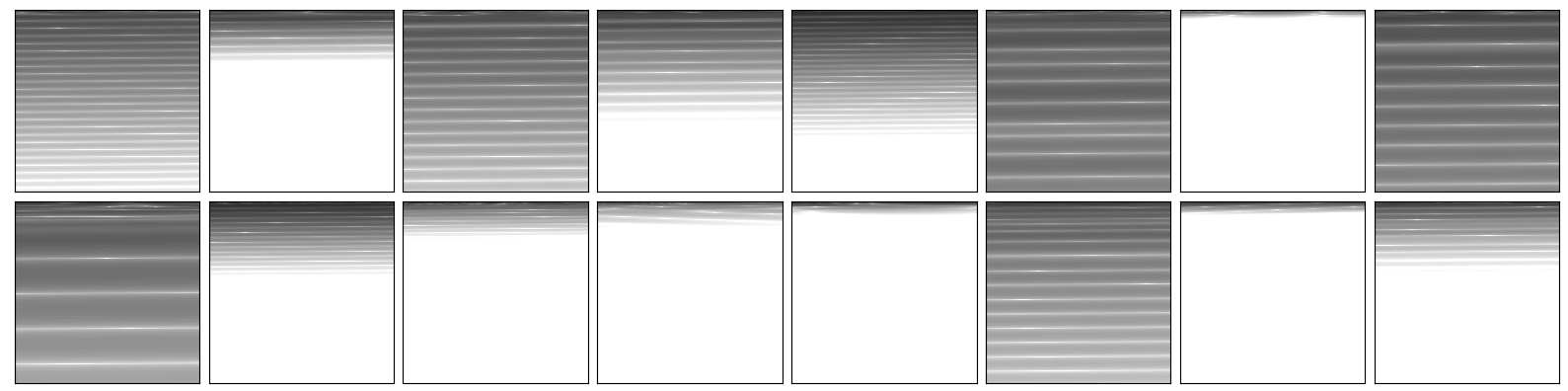}
    \end{subfigure}
    \label{fig:pathfinder-all-conv-filters}
    \caption{({\bf Convolutional filters on Pathfinder}) A random selection of filters learned by \methodabbrv{} in the first layer (top 2 rows) and last layer (bottom 2 rows) of the best model.}
\end{figure}

\subsection{Reproduction}
\label{sec:reproduction}

Since the first version of this paper, several experiments have been updated. Please read the corresponding paragraph below before citing LRA or SC results.

\paragraph{Long Range Arena}

Follow-ups to this paper expanded the theoretical understanding of S4 while improving some results.
The results reported in \cref{tab:lra} have been updated to results from the papers \citep{gu2022s4d,gu2022hippo}.
More specifically, the method S4-LegS in those works refers to the \emph{same model} presented in this paper, with the ``-LegS'' suffix referring to the initialization defined in equation \eqref{eq:hippo}. As such, results from the original \cref{tab:lra} have been directly updated.

The updated results have only minor hyperparameter changes compared to the original results. The original results and hyperparameters are shown in \cref{tab:lra-full} (\cref{sec:experiment-details-lrd}).
Appendix B of \citep{gu2022s4d} describes the changes in hyperparameters, which are also documented from the experiment configuration files in the publically available code at \url{https://github.com/HazyResearch/state-spaces}.

\paragraph{Speech Commands}

The Speech Commands (SC) dataset~\citep{Warden2018SpeechCA} is originally a 35-class dataset of spoken English words.
However, this paper was part of a line of work starting with \citet{kidger2020neural} that has used a smaller 10-class subset of SC \citep{kidger2020neural,romero2021ckconv,gu2021lssl,romero2022flexconv}.
\emph{In an effort to avoid dataset fragmentation in the literature, we have since moved to the original dataset.}
We are now calling this 10-class subset \textbf{SC10} to distinguish it from the full 35-class \textbf{SC} dataset.
To cite S4 as a baseline for Speech Commands, please use Table 11 from \citep{gu2022s4d} instead of \cref{tab:sc} from this paper.
In addition to using the full SC dataset, it also provides a number of much stronger baselines than the ones used in this work.

\paragraph{WikiText-103}

The original version of this paper used an S4 model with batch size \( 8 \), context size \( 1024 \) which achieved a validation perplexity of 20.88 and test perplexity of 21.28.
It was later retrained with a batch size of \( 1 \) and context size \( 8192 \) which achieved a validation perplexity of 19.69 and test perplexity of 20.95, and a model checkpoint is available in the public repository.
The rest of the model is essentially identical, so the results from the original table have been updated.

\end{document}